\pdfoutput=1
\documentclass{article}
\usepackage{arxiv}
\renewcommand{\shorttitle}{Kernel Autoresearch for Open-Ended Model Discovery}
\renewcommand{\headeright}{}
\renewcommand{\undertitle}{}
\usepackage[utf8]{inputenc}
\usepackage[T1]{fontenc}
\usepackage{amsmath,amssymb,amsthm}
\usepackage{booktabs}
\usepackage{array}
\usepackage{enumitem}
\usepackage{xspace}
\usepackage{algorithm}
\usepackage{algorithmic}
\makeatletter
\newcommand{\FUNCTION}[1]{\ALC@it\textbf{function} #1\begin{ALC@g}}
\newcommand{\ENDFUNCTION}{\end{ALC@g}\ALC@it\textbf{end function}}
\makeatother
\usepackage{graphicx}
\usepackage{fontawesome5}
\usepackage{pgfplots}
\pgfplotsset{compat=1.17}
\usetikzlibrary{positioning,arrows.meta}
\usepackage{microtype}
\usepackage{colortbl}
\usepackage{subcaption}
\usepackage{float}
\usepackage{fvextra}

\fvset{breaksymbolleft={},breaksymbolright={},breakindent=0pt,breakautoindent=false}
\usepackage[most]{tcolorbox}
\usepackage{listings}
\definecolor{oiVerm}{HTML}{D55E00}
\definecolor{oiOrange}{HTML}{E69F00}
\definecolor{oiSky}{HTML}{56B4E9}
\definecolor{oiGreen}{HTML}{009E73}
\definecolor{oiPurple}{HTML}{CC79A7}
\definecolor{oiBlue}{HTML}{0072B2}
\definecolor{promptbg}{gray}{0.97}
\definecolor{codebg}{gray}{0.97}
\definecolor{codegray}{gray}{0.45}
\lstdefinestyle{kcode}{
  language=Python, basicstyle=\fontencoding{T1}\fontfamily{lmtt}\selectfont\footnotesize, columns=fullflexible, keepspaces=true,
  keywordstyle=\bfseries, commentstyle=\color{codegray}\itshape,
  emph={float,abs,int,len,range,sum,min,max}, emphstyle=\itshape,
  morekeywords={as,with,lambda,yield}, identifierstyle=\relax,
  stringstyle=\relax, showstringspaces=false, breaklines=true,
  numbers=left, numberstyle=\tiny\color{codegray}, numbersep=7pt, xleftmargin=1.6em,
  upquote=true,
  literate={é}{{\'e}}1 {—}{{---}}1 {"}{{\char34}}1,
}
\lstdefinelanguage{kprompt}{
  morecomment=[l]{\#}, morestring=[b]`, moredelim=[s][\itshape]{<}{>},
}
\lstdefinestyle{kprompt}{
  language=kprompt, basicstyle=\ttfamily\fontsize{7.6}{9.4}\selectfont, columns=fullflexible, keepspaces=true,
  breaklines=true, breakatwhitespace=true, showstringspaces=false,
  commentstyle=\bfseries, stringstyle=\relax,
  literate={é}{{\'e}}1 {—}{{---}}1 {"}{{\char34}}1,
}
\lstdefinelanguage{json}{
  literate={é}{{\'e}}1 {"}{{\char34}}1,
}
\lstdefinestyle{kjson}{
  language=json, basicstyle=\ttfamily\footnotesize, columns=fullflexible, keepspaces=true,
  showstringspaces=false, breaklines=true,
}
\tcbset{
  aibox/.style={
    width=\textwidth,
    top=10pt, bottom=4pt,
    colback=promptbg, colframe=black!70, colbacktitle=black!85,
    coltitle=white, fonttitle=\bfseries\small,
    boxrule=0.7pt, arc=2.5mm,
    enhanced, breakable, center,
    attach boxed title to top left={yshift=-0.1in,xshift=0.15in},
    boxed title style={boxrule=0pt,colframe=white,arc=1.5mm},
  },
  codebox/.style={
    colback=codebg, colframe=black!55, colbacktitle=black!85, coltitle=white,
    fonttitle=\bfseries\footnotesize, boxrule=0.6pt, arc=1.5mm, enhanced, breakable,
    left=2pt, right=2pt, top=2pt, bottom=2pt,
  },
}
\newtcolorbox{AIbox}[2][]{aibox,title=#2,#1}
\newtcolorbox{CodeBox}[2][]{codebox,title=#2,#1}
\usepackage[normalem]{ulem}
\usepackage{adjustbox}
\usepackage[hyphens]{url}
\usepackage[round]{natbib}

\newtheorem{definition}{Definition}
\newtheorem{proposition}{Proposition}

\newcommand{\method}{\textsc{Kernaut}\xspace}
\newcommand{\dwf}{DWF\xspace}
\newcommand{\cX}{\mathcal{X}}

\newcommand{\paperlink}[3]{\href{#1}{\textcolor{black!75}{#2\hspace{0.3em}#3}}}
\newcolumntype{L}[1]{>{\raggedright\arraybackslash}p{#1}}

\usepackage[hidelinks]{hyperref}
\hypersetup{
  bookmarksdepth=3,
  pdftitle={Kernel Autoresearch for Open-Ended Model Discovery},
  pdfauthor={Richard Cornelius Suwandi, Feng Yin, Kevin Murphy}
}

\title{Kernel Autoresearch for Open-Ended Model Discovery}

\author{
  Richard Cornelius Suwandi \\
  CUHK-Shenzhen \\
  \And
  Feng Yin\\
  CUHK-Shenzhen \\
  \And
  Kevin Murphy \\
  University of British Columbia \\
}

\date{}

\begin{document}
\maketitle
{\let\thefootnote\relax\footnotetext{Preprint. Under review.}}
\vspace{-5.5pt}
{\centering\small
\paperlink{https://github.com/richardcsuwandi/kernaut}{\faGithub}{Code}%
\hspace{1.6em}%
\paperlink{https://richardcsuwandi.github.io/kernaut/}{\faGlobe}{Website}\par}
\vspace{1.15em}
\suppressfloats[t]

\begin{abstract}
Kernels encode the inductive bias of a wide range of machine learning models, yet automated kernel design faces a fundamental dilemma. A fixed grammar of base kernels and operators guarantees validity but limits the search to structures expressible by those building blocks. Conversely, unrestricted programs remove this limitation but no longer guarantee validity. In our stress tests, 22--58\% of LLM-generated kernels that pass numerical checks on random inputs fail when evaluated at different scales or dimensions. We propose Kernel Autoresearch (\method), which treats kernel design as open-ended model discovery. Coding agents write kernels as programs, while construction contracts ensure that every accepted kernel is valid. A quality-diversity archive retains high-performing kernels with distinct behaviors, and novelty screening steers agents toward functionally new candidates. Our experiments demonstrate that the discovered kernels encode reusable inductive biases that generalize to unseen tasks. On held-out black-box optimization families, a discovered kernel outperforms a meta-learned deep kernel trained on the same episodes. Furthermore, kernels discovered from ten enzyme-kinetic rate laws achieve lower error than tuned ARD and deep kernel baselines on five unseen mechanisms. The discovered kernels are also interpretable programs that human researchers can refine: a human-refined version of one further reduces the held-out predictive error by 5.7\% and optimization regret by 7.8\%.
\end{abstract}  

\keywords{kernel learning \and program synthesis \and coding agents \and autoresearch \and quality-diversity search}

\section{Introduction}
\label{sec:intro}

Kernel choice largely determines the inductive bias of a wide range of machine learning methods, yet most automated kernel design still searches over a narrow space. Most approaches start
from base kernels and combine them through a fixed set of operators, whether
via greedy selection \citep{duvenaud2013structure}, Bayesian optimization \citep[BO;][]{malkomes2016boms}, neural composition \citep{sun2018neuralkernelnetwork}, or search guided by large language models \citep[LLMs;][]{suwandi2025cake}. Despite their different search strategies, these approaches share the same tradeoff. Restricting search to a predefined grammar guarantees validity by construction, but limits search to structures expressible through the base kernels and operators. We instead formulate kernel design as open-ended program synthesis, allowing agents to propose kernels beyond this grammar. However, this broader search space introduces a verification challenge because an arbitrary program does not necessarily define a mathematically valid kernel.

An unrestricted program need not define a symmetric, positive semidefinite (PSD) kernel \citep{rice1953classes}, and numerical checks on sampled inputs cannot establish these properties over the entire input domain. We therefore propose Kernel Autoresearch (\method), in which coding agents write kernel components and a trusted interpreter constructs valid kernels. The interpreter uses PSD-preserving rules that we call construction contracts. The system evaluates the resulting kernels and stores high-performing programs with distinct kernel behavior in a quality-diversity archive. This archive guides later proposals and preserves diverse discoveries for evaluation.
Our contributions are as follows.
\begin{itemize}[leftmargin=*,itemsep=1pt,topsep=2pt]
\item \textbf{Open-ended kernel discovery with PSD-preserving contracts.}
Agents propose executable feature maps, spectral representations, input
transformations, and kernel compositions beyond a fixed grammar. A trusted
interpreter constructs each accepted Gram matrix through one of the PSD-preserving
construction contracts
(\S\ref{sec:formulation}--\ref{sec:tiers}).
\item \textbf{Verified execution with a quality-diversity archive.}
A deterministic backend executes, verifies, tunes, and scores every proposal.
The archive retains programs that combine strong predictive performance with
distinct kernel behavior. The system then uses accepted examples and
structured rejection reasons to guide subsequent proposals
(\S\ref{sec:loop}--\ref{sec:functional-novelty}).
\item \textbf{Reusable inductive biases across unseen tasks.}
We freeze each discovered program before evaluation and reuse it without
structural modification. We evaluate prediction on held-out black-box
optimization (BBO) functions, time series, enzyme mechanisms, and
glucose patients (\S\ref{sec:experiments}).
\item \textbf{Interpretable discoveries across domains.}
Because the kernel programs are short, we can analyze the structures they encode, such as warps and folds for BBO functions, trend and period banks for time series, rate-law shapes for enzymes, and meal and insulin responses for glucose (Appendix~\ref{app:kernel-descriptions}). As a case study, we analyze the dual warp--fold (\dwf{}) kernel and a human-refined variant with lower held-out predictive error (\S\ref{sec:dwf-analysis}).
\end{itemize}

\section{Related Work}
\label{sec:related}

\paragraph{Deep, spectral, and compositional kernel learning.}
Deep kernel learning jointly trains a Gaussian process (GP) with a learned feature map \citep{wilson2016deepkernel}. \citet{wistuba2021fsbo} meta-learn this deep-kernel model across related BO tasks. Spectral mixture methods learn a stationary spectral density \citep{wilson2013spectralmixture}. Other methods combine base kernels through PSD-preserving operators by using greedy search \citep{duvenaud2013structure,lloyd2014automatic}, BO \citep{malkomes2016boms}, 
differentiable composition \citep{sun2018neuralkernelnetwork}, or LLM-guided evolution \citep{suwandi2025cake}. These methods preserve kernel validity by restricting search to a chosen deep-kernel architecture, spectral family, or compositional grammar. 

\paragraph{LLM-generated kernels with empirical checks.}
The closest prior work, \citet{yun2026automated}, uses LLM-driven evolutionary search to generate kernels beyond standard compositions. Their method screens each candidate by attempting a Cholesky factorization on sampled inputs. Passing this check provides evidence only for the sampled Gram matrix, but it does not establish kernel validity over the entire input domain. In our numerical screening experiment, 22--58\% of unrestricted programs passed the initial checks but failed additional checks at different input scales and dimensions (Appendix~\ref{app:screening-stress}). In contrast, \method constructs each accepted kernel through a PSD-preserving contract. Kernel validity therefore follows from the construction under the contract's stated assumptions (Proposition~\ref{prop:validity}).

\paragraph{Coding agents and autoresearch.}
Coding-agent systems often follow a propose--evaluate--revise loop, as in FunSearch and AlphaEvolve for algorithms and mathematical constructions \citep{romeraparedes2024funsearch,novikov2025alphaevolve} and in autoresearch for training code \citep{karpathy2026autoresearch}. Model Discovery Agent \citep{murphy2026mdagent} and Agentic BO \citep{brunzema2026agenticbo} similarly separate agent proposals from numerical evaluation in scientific and optimization workflows. \method applies this approach to kernel discovery, and the backend verifies each program's construction before the archive accepts it.

\begin{figure*}[t]
\centering
\includegraphics[width=\textwidth]{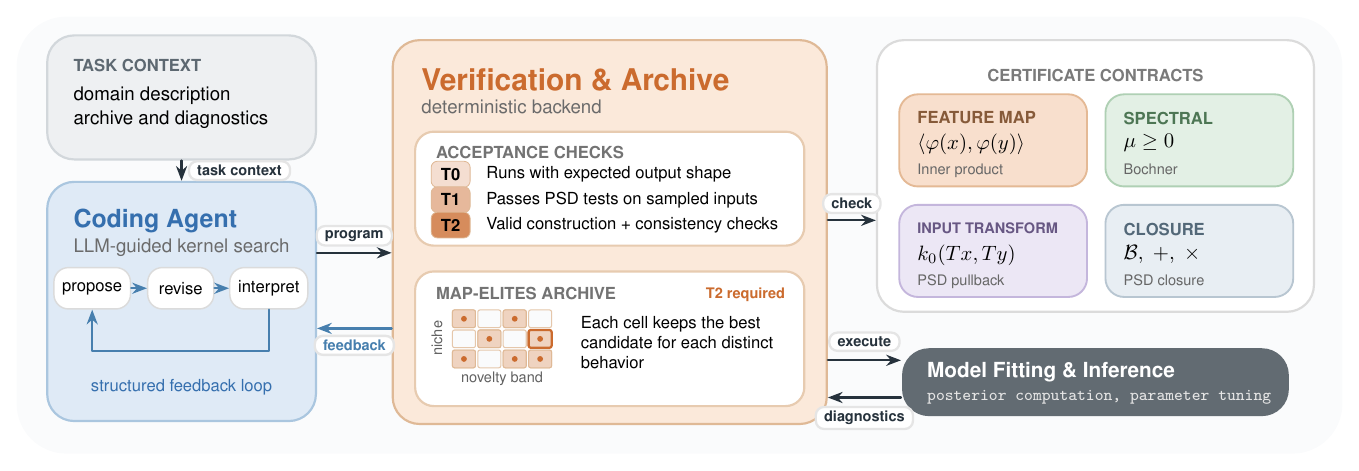}
\caption{\textbf{The \method framework.} The coding agent (blue) writes kernel components under one of four construction contracts. The deterministic backend (orange) applies three acceptance tiers, T0 to T2, and only T2 programs enter the quality-diversity archive, where each cell keeps the best candidate for one behavior. Archived kernels and structured feedback guide later proposals. The grey module computes the posterior and tunes parameters.}
\label{fig:pipeline}
\end{figure*}

\section{Kernel Autoresearch Framework}
\label{sec:framework}

\subsection{Kernels as Programs}
\label{sec:formulation}
\label{sec:contracts}

Let $\cX$ be the input space and $\Theta$ a parameter space. We represent a kernel by an executable program $\kappa:\cX\times\cX\times\Theta\to\mathbb{R}$ and write $\kappa_\theta(x,y)=\kappa(x,y,\theta)$ for fixed parameters. Agents write the components of these programs in Python, and the backend runs them in a sandbox with time and memory limits. The question is which programs define valid kernels.

\begin{definition}[Validity]
\label{def:valid}
A program $\kappa$ is valid if, for every $\theta\in\Theta$, $\kappa_\theta$ is symmetric and every Gram matrix $K$ with entries $K_{ij}=\kappa_\theta(x_i,x_j)$ on finitely many inputs $x_1,\dots,x_n\in\cX$ is positive semidefinite (PSD),
\begin{equation}
  \label{eq:psd}
  c^\top Kc=\sum_{i,j} c_i c_j\,\kappa_\theta(x_i,x_j)\ge 0
  \quad\text{for all } c\in\mathbb{R}^n.
\end{equation}
\end{definition}

For an arbitrary program, validity cannot be guaranteed in general
\citep{rice1953classes}, and numerical tests on sampled inputs cannot cover
the entire domain. We therefore ask the agent to only write a kernel
component. A trusted interpreter then assembles the kernel using a
PSD-preserving rule. We call such component interfaces, together with their assembly rules,
construction contracts.

\begin{definition}[Construction contracts]
\label{def:contract}
For fixed $\theta$, the agent writes one of the following four component types. Each proposed component can expose free parameters in $\theta$, but trusted backend code optimizes those parameters. The interpreter then assembles $\kappa_\theta$ from that component.
\begin{itemize}[leftmargin=*,itemsep=1pt,topsep=2pt,parsep=0pt,before={\normalfont}]
\item \textbf{Feature map}: a function $\varphi_\theta:\cX\to\mathbb{R}^m$, giving $\kappa_\theta(x,y)=\varphi_\theta(x)^\top\varphi_\theta(y)$.
\item \textbf{Spectral}: frequencies $\omega_1,\dots,\omega_M\in\mathbb{R}^d$ and raw weights $r_1,\dots,r_M\in\mathbb{R}$, determined only by the input dimension $d$ and kernel parameters $\theta$, giving $\kappa_\theta(x,y)=\sum_{j=1}^{M} w_j\cos\big(\omega_j^\top(x-y)\big)$ with $w_j=r_j^2/M\ge 0$.
\item \textbf{Pullback}: an input transform $T_\theta:\cX\to\mathbb{R}^m$, giving $\kappa_\theta(x,y)=k_0\big(T_\theta(x),T_\theta(y)\big)$. Here, $k_0$ is a \emph{library kernel}, which is a trusted implementation that the agent can reference but cannot change. Examples include radial basis function (RBF), Mat\'ern, and rational-quadratic kernels.
\item \textbf{Closure}: a tree whose leaves are library kernels or kernels already accepted under the first three contracts, combined by sums, products, and nonnegative scalar multiplication.
\end{itemize}
\end{definition}

The contracts above describe familiar ways to construct a covariance. Feature and spectral programs become explicit inner products, pullbacks apply a valid kernel after changing the input geometry, and closure programs use operations that preserve PSD. The trusted interpreter performs this assembly, rather than accepting a covariance matrix written directly by the agent. 
Appendix~\ref{app:proof} gives the algebra for every case.

\begin{proposition}[PSD preservation by construction]
\label{prop:validity}
For fixed $\theta$, every contract produces a symmetric PSD matrix if its component returns finite outputs with the dimensions its interface requires. If the component is also pure on a domain $\cX_0$, meaning that its output depends only on the input and parameters and not on other batch inputs, random draws, or previous calls, the assembled function is a valid kernel on $\cX_0$. If this holds for every $\theta\in\Theta$, the program is valid.
\end{proposition}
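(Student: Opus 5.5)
The proof has three layers, and I would treat them in order. First, for a fixed $\theta$ and a fixed finite batch $x_1,\dots,x_n$, each contract yields a symmetric PSD Gram matrix; this is purely algebraic and uses only finiteness and correct dimensions of the component outputs. Second, under purity on $\cX_0$, these matrices are the Gram matrices of a single function $\kappa_\theta$ on $\cX_0\times\cX_0$, so Definition~\ref{def:valid} applies. Third, quantifying over $\theta\in\Theta$ gives validity of the program, which is immediate once the first two layers hold for arbitrary $\theta$.

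For the matrix-level claim I would go contract by contract.

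\textbf{Feature map.} Stack the outputs as $\Phi\in\mathbb{R}^{n\times m}$ with rows $\varphi_\theta(x_i)^\top$. Then $K=\Phi\Phi^\top$, which is symmetric, and $c^\top Kc=\|\Phi^\top c\|^2\ge 0$. Finiteness is needed so that these entries are real numbers rather than NaN or $\infty$.

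\textbf{Spectral.} Use $\cos(a-b)=\cos a\cos b+\sin a\sin b$ to write
\begin{equation*}
\sum_{i,j}c_ic_j\cos\big(\omega_\ell^\top(x_i-x_j)\big)=\Big(\sum_i c_i\cos\omega_\ell^\top x_i\Big)^2+\Big(\sum_i c_i\sin\omega_\ell^\top x_i\Big)^2 .
\end{equation*}
Multiplying by $w_\ell=r_\ell^2/M\ge 0$ and summing over $\ell$ gives $c^\top Kc\ge 0$. Symmetry follows because cosine is even. Equivalently, this is a feature map into $\mathbb{R}^{2M}$ with features $\sqrt{w_\ell}\cos(\omega_\ell^\top x)$ and $\sqrt{w_\ell}\sin(\omega_\ell^\top x)$.

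\textbf{Pullback.} With $z_i=T_\theta(x_i)\in\mathbb{R}^m$, we have $K_{ij}=k_0(z_i,z_j)$. This is a Gram matrix of the library kernel $k_0$ on the finite set $\{z_i\}$. It is therefore PSD, taking as given that each trusted library kernel (RBF, Mat\'ern, rational quadratic) is a valid kernel on $\mathbb{R}^m$ for admissible hyperparameters.

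\textbf{Closure.} I would argue by structural induction on the tree. Leaves are PSD by the previous cases or by the library assumption. Sums of PSD matrices and nonnegative multiples of PSD matrices are PSD. Products are interpreted entrywise on a common batch, so they are PSD by the Schur product theorem. The only subtle point is that every subtree must be evaluated on the same batch.

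Passing to the function level is where I expect the main obstacle. The statement is about arbitrary finite subsets of $\cX_0$, but the interpreter only ever sees batches. Without purity, a component could return, for example, batch-normalized features. Each batch Gram matrix would then still be PSD, yet the entries would not come from one function $\kappa_\theta(x,y)$. The same pair could receive different values in different batches, and symmetry of a pointwise function could fail across calls.

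I would handle this by using purity to define $\varphi_\theta$, $T_\theta$ and the frequency list as genuine functions of $(x,\theta)$, or of $(d,\theta)$ for the spectral contract. Purity guarantees these are well defined independently of batch, randomness and call history. One then sets $\kappa_\theta$ by the assembly formula, which is now a bona fide function on $\cX_0\times\cX_0$. Any finite family $x_1,\dots,x_n\in\cX_0$ can be fed as one batch, and its Gram matrix coincides entrywise with $[\kappa_\theta(x_i,x_j)]$, so the first step applies.

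For closure, purity of the leaves propagates up the tree by induction. Repeated inputs and cross-covariances are harmless, since they are just entries of a larger Gram matrix on the union of points. Finally, assuming this for every $\theta$ yields validity in the sense of Definition~\ref{def:valid}.
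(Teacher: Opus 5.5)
Your proposal is correct and follows the paper's proof essentially step for step. It uses the same contract-by-contract matrix argument: the Gram factorization $K=\Phi\Phi^\top$, reduction of the spectral case to a $2M$-dimensional cosine--sine feature map, the pullback as a Gram matrix of $k_0$ on the transformed points, and structural induction with the Schur product theorem for closure. It then uses purity to make the assembled entries a single well-defined function on $\cX_0\times\cX_0$, so every finite Gram matrix is an interpreter batch matrix, and finally quantifies over $\theta\in\Theta$.
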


Purity is needed because a batch-dependent component can return PSD matrices on each call without defining one kernel function. The contracts remain expressive, since finite polynomial feature maps uniformly approximate any continuous PSD kernel on a compact set (Appendix~\ref{app:contract-approximation}).

\subsection{Acceptance Checks}
\label{sec:tiers}
The backend checks finiteness, the output dimensions required by each
interface, and purity, which trusted assembly cannot enforce.
Deterministic source inspection rejects constructs that can introduce
batch dependence, randomness, or persistent state. Behavioral tests
check determinism and consistency under batch permutation, subsetting,
extension by new points, and point duplication (Table~\ref{tab:contract-checks}).

Checks proceed in three increasingly expensive tiers
(Table~\ref{tab:tiers}). Tier~0 checks execution and output dimensions.
Tier~1 adds numerical PSD tests on sampled inputs. Tier~2 adds contract
assembly and consistency checks. Only Tier~2 programs can enter the
archive. Appendix~\ref{app:screening-stress} shows why sampled PSD tests
alone are insufficient.

\subsection{Discovery Campaign}
\label{sec:loop}

\paragraph{Campaigns, runs, and the archive.}
A \emph{campaign} is one complete search on one domain. Campaigns have separate archives and can run independently in parallel. Within each campaign, a fixed budget of $B$ \emph{runs} executes sequentially. A run is one fresh coding-agent session that aims to submit one program. Within a run, a \emph{round} is one agent turn of proposing, calling a tool, or revising after feedback. Runs share information only through a persistent \emph{archive}, the collection of every accepted program in the campaign. Each column in Figure~\ref{fig:lineage-trees}a is one campaign, and its markers show accepted programs from its runs. A run either writes a program from scratch, called a \emph{root}, or revises one archived program, called its \emph{parent}. A revision counts only if its source code differs from the parent's, so changing tuned constants alone does not create a descendant.

\paragraph{Niches, cells, and elites.}
We use a multi-dimensional archive of phenotypic elites grid \citep[MAP-Elites;][]{mouret2015mapelites}. A \emph{niche} is a broad label for what a program tries to express (e.g., input geometry, spectral structure, feature projection, additive interaction, local structure). Each accepted program occupies one \emph{cell}, set by its niche, its feature growth with input dimension, and its novelty band (\S\ref{sec:functional-novelty}). The best program in a cell is its \emph{elite}, and a newcomer replaces it only by entering the same cell with a higher score. An elite is a local representative of one behavior. It is not necessarily the best program overall or the one chosen for testing. Figure~\ref{fig:archive-map} shows how the archive fills across successive runs.

\paragraph{One discovery run.}
The scheduler assigns each run a \emph{target niche}, prioritizing
empty niches. With probability $p_{\mathrm{new}}$, the agent writes
a new root. Otherwise, it revises a parent elite. The prompt includes
up to $k_{\mathrm{insp}}$ archived elites as \emph{inspirations},
drawn from other niches when revising a parent. The backend tunes
the program's declared parameters on meta-training episodes using
$N_{\mathrm{tune}}$ Halton settings \citep{halton1960efficiency}
and scores the best setting. Appendix~\ref{app:impl} gives the values of $B$,
$p_{\mathrm{new}}$, $k_{\mathrm{insp}}$, and $N_{\mathrm{tune}}$.

We test an \emph{ensemble} proposer\footnote{The ensemble contains
GLM-5.2, DeepSeek-V4-Pro, and Qwen-3.8 Max.}, which selects one LLM
per run through a Thompson-sampling bandit
(Appendix~\ref{app:metabo}), and a \emph{frontier} proposer, which
uses GPT-6 Astra throughout. After all runs, validation selects one
frozen program from the archive. Algorithm~\ref{alg:loop} summarizes
the campaign. Algorithms~\ref{alg:archive-update}
and~\ref{alg:validation-selection} in Appendix~\ref{app:impl}
detail the agent session and final selection.

\begin{algorithm}[t]
\caption{Overview of \method campaign.}
\label{alg:loop}
\begin{algorithmic}[1]
\STATE \textbf{Input:} run budget $B$, new-root probability $p_{\mathrm{new}}$, number of inspirations $k_{\mathrm{insp}}$, meta-training tasks $D_{\mathrm{tr}}$, meta-validation tasks $D_{\mathrm{val}}$, sealed test tasks $D_{\mathrm{test}}$
\STATE Initialize an empty archive $\mathcal A\gets\emptyset$ of accepted programs
\FOR{$j=1,\ldots,B$}
  \STATE Choose a target niche $\mathrm{niche}^\star$, filling empty niches first
  \STATE With probability $p_{\mathrm{new}}$ set $\mathrm{parent}\gets\varnothing$ to write a new root. Otherwise set $\mathrm{parent}$ to an elite drawn from the archive to revise
  \STATE Sample up to $k_{\mathrm{insp}}$ elites as inspirations $\mathcal I$, from other niches when revising
  \STATE Run one agent session (Algorithm~\ref{alg:archive-update}) to get a program $\kappa$, its search score $s$, and its cell $c$
  \IF{$\kappa\neq\varnothing$}
    \STATE Add $\kappa$ to $\mathcal A$, and make it the elite of cell $c$ if $c$ is empty or $s$ beats the current elite
  \ENDIF
\ENDFOR
\STATE Select $\kappa^\star$ from all of $\mathcal A$ by the validation score on $D_{\mathrm{val}}$ (Algorithm~\ref{alg:validation-selection})
\STATE \textbf{return} $\kappa^\star$ and its score on $D_{\mathrm{test}}$
\end{algorithmic}
\end{algorithm}

\subsection{Novelty Screening}
\label{sec:functional-novelty}

An agent may simply rediscover a known kernel under a new name, notation, or
implementation. Such programs could occupy the archive without adding distinct
kernel behavior. We therefore compare both the proposed formulation and the
resulting behavior. Before submitting code, each run registers its mathematical
form, PSD argument, closest known kernel, and expected behavior
(Table~\ref{tab:formulation-fields}). To compare behavior, we define the
\textit{kernel fingerprint} $\bar K_{\kappa}(X_d)$ as the centered, unit-norm Gram matrix on
$n_{\mathrm{probe}}$ fixed points $X_d\subset[0,1]^d$ for each probe dimension
$d\in\mathcal{D}$ in Equation~\eqref{eq:fingerprint}. We measure novelty by the
smallest average fingerprint distance to a reference kernel:
\begin{equation}
 d_{\mathrm{nov}}(\kappa,\mathcal{R})=
 \min_{r\in\mathcal{R}}\frac{1}{|\mathcal{D}|}\sum_{d\in\mathcal{D}}
 \big\lVert \bar K_{\kappa}(X_d)-\bar K_r(X_d)\big\rVert_F,
 \label{eq:functional-novelty}
\end{equation}
where $\mathcal{R}$ contains standard, multiscale, and previously discovered
kernels. We define $\tau_{\mathrm{nov}}$ as the threshold for identifying
near-duplicate programs. The threshold is soft rather than a hard rejection, so a program with $d_{\mathrm{nov}}<\tau_{\mathrm{nov}}$ can still enter the archive if it passes the acceptance checks and later be reused as an inspiration. The penalty decreases from a maximum of $\lambda_{\mathrm{nov}}$ as prediction improves, as defined in
Equation~\eqref{eq:novelty-penalty}. Appendix~\ref{app:impl} reports the values of $\tau_{\mathrm{nov}}$ and $\lambda_{\mathrm{nov}}$ used in our experiments.
\section{Experiments}
\label{sec:experiments}
We evaluate discovered kernels as GP covariance functions and ask three questions: (1) Do the discovered kernels transfer to tasks the search never saw? (2) Does writing new base kernels matter, or would recombining known ones do as well? (3) Do the discovered kernels encode meaningful domain structures?

\subsection{Meta-Evaluation Framework}
\label{sec:metaeval}

We seek kernel programs that transfer to unseen tasks without
repeating discovery. Following \citet{goldie2026procedural}, we
search for programs and tune their parameters on meta-training
episodes. We select a frozen program on new meta-validation episodes
from the same families, then test transfer to families or domain
structures withheld from search. Search never uses validation
episodes, so choosing among the frozen programs is a finite model-selection
problem. Under the assumptions in
Appendix~\ref{app:selection-guarantee}, the selection bound depends
on the number of frozen candidates, not the size of the program space.
\subsection{Baselines and Metrics}
\label{sec:baselines} \label{sec:metrics}

We compare our method against the following baselines.
\begin{itemize}[leftmargin=*,itemsep=1pt,topsep=2pt]
\item \textbf{Fixed kernels:} linear, periodic, RBF, Mat\'ern, rational quadratic, and automatic relevance determination (ARD) Mat\'ern.
\item \textbf{Learned kernel families:} spectral mixtures \citep{wilson2013spectralmixture}, grid spectral mixtures \citep[GSM;][]{suwandi2025gsmp}, input warping \citep{snoek2014inputwarping}, and random Fourier features \citep[RFF;][]{rahimi2007random}.
\item \textbf{Deep and meta-learned kernels:} deep kernel learning \citep[DKL;][]{wilson2016deepkernel} on ChemBench, and Few-Shot Bayesian Optimization \citep[FSBO;][]{wistuba2021fsbo}, which meta-learns a deep kernel across BBO functions.
\item \textbf{Kernel structure search:} compositional kernel search \citep[CKS;][]{duvenaud2013structure,lloyd2014automatic}, CAKE \citep{suwandi2025cake}, and AutoGP \citep{saad2023sequential} on forecasting.
\item \textbf{Domain references:} a domain-feature library on ChemBench, and relative-time ARD Mat\'ern, a physiological library, and per-patient marginal-likelihood fits on GlucoseBench.
\end{itemize}
Each family keeps its best setting on meta-training episodes (Appendix~\ref{app:metabo}). Appendices~\ref{app:fsbo}, \ref{app:ts-extra}, \ref{app:chembench}, and~\ref{app:glucose} give the detailed protocols for the baselines.

\paragraph{Predictive metrics.}
Each episode is a small regression dataset. Every kernel runs inside the same GP with its discovered structure fixed. The backend fits only covariance amplitude and observation noise by training marginal likelihood on fixed grids in Equation~\eqref{eq:amp-noise}. We report the continuous ranked probability score \citep[CRPS;][]{gneiting2007strictly} in Equation~\eqref{eq:crps}. This score evaluates the predictive distribution against each held-out observation. Lower CRPS is better (Appendix~\ref{app:protocol}). Appendix~\ref{app:metabo} also evaluates the frozen kernels in GP-based Bayesian optimization on the synthetic functions of \S\ref{sec:results-bo}.

\subsection{Results}
\label{sec:results}

\subsubsection{Black-Box Optimization}
\label{sec:results-bo}

\paragraph{Benchmark.}
\label{sec:domains}
Tasks are standard test functions rescaled to $[0,1]^d$. Each episode fits on $n=8+2d$ randomly sampled observations and scores 32 distinct held-out query points. Search and validation draw on five function families (Branin, Ackley, Cosine, Hartmann-6, Levy), and testing uses six other families that the search never sees (Bukin, Drop-Wave, Griewank, H\"older Table, Rosenbrock, Rastrigin). Each episode randomly permutes, reflects, and warps the coordinates and may append irrelevant ones. 
Every campaign consists of twelve discovery runs on meta-training episodes (Appendix~\ref{app:metabo}), and Figure~\ref{fig:lineage-trees} shows how the accepted programs relate. Each program and baseline is then evaluated on 50 validation and 60 test tasks.

\paragraph{Results.}
\label{sec:mainresults}
We repeated discovery seven times with an ensemble of three LLM proposers. We froze each validation-selected kernel before testing. Most campaigns outperformed the best tuned baseline on validation (Figure~\ref{fig:lineage-trees}a). On held-out families, six of seven kernels achieved lower predictive CRPS than Mat\'ern-5/2, the strongest fixed baseline. The average CRPS improvement was 3\%, and each campaign selected a different program (Appendix~\ref{app:revision-saved}). We also ran seven matched campaigns with GPT-6 Astra as the only proposer. Their CRPS gains over Mat\'ern ranged from 5.8\% to 11.2\%, with a mean of 8.3\% and a standard deviation of 2.0\% ($p=0.003$ against the ensemble). Within the fixed twelve-run budget, Astra reached the best tuned baseline after one run, whereas the ensemble required about eight (Figure~\ref{fig:search-progress}). Across campaigns, Astra's average discovery matched the human-refined \dwf{} kernel without human editing (Appendix~\ref{app:proposer-scaling}).

\begin{figure*}[t]
\centering
\includegraphics[width=\textwidth]{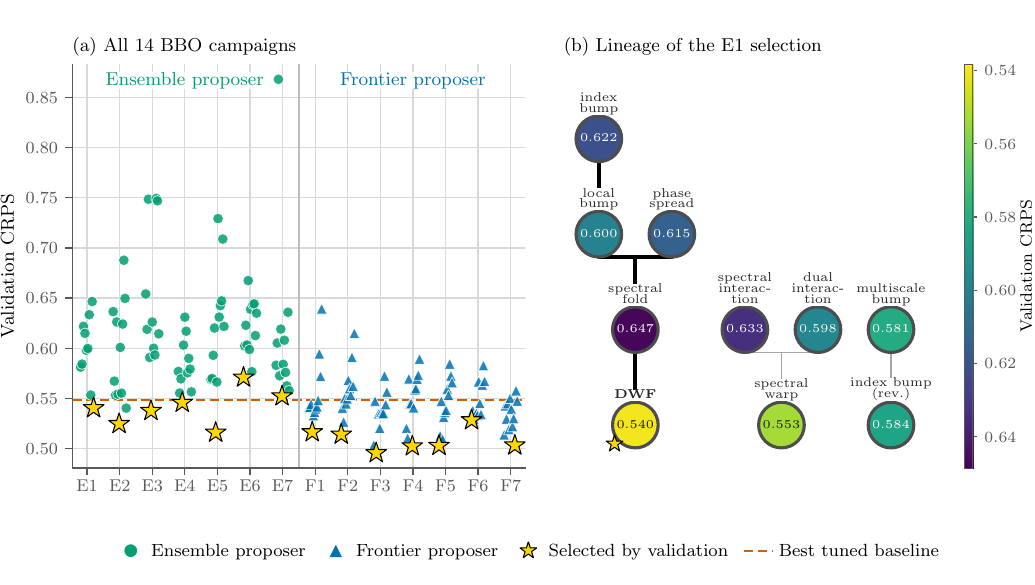}
\caption{\textbf{Synthetic regression: search archives.} (a) Validation continuous ranked probability score (CRPS, Equation~\eqref{eq:crps}, lower is better) of every accepted program in 14 campaigns. E1--E7 use the three-model ensemble, and F1--F7 use GPT-6 Astra as the sole proposer. Stars mark validation-selected programs, and the dashed line marks the best tuned baseline. (b) Lineage of the first ensemble campaign (E1). Edges join each parent to its revisions, and the starred node is the selected program.}
\label{fig:lineage-trees}
\end{figure*}

\begin{figure*}[t]
\centering
\includegraphics[width=\textwidth]{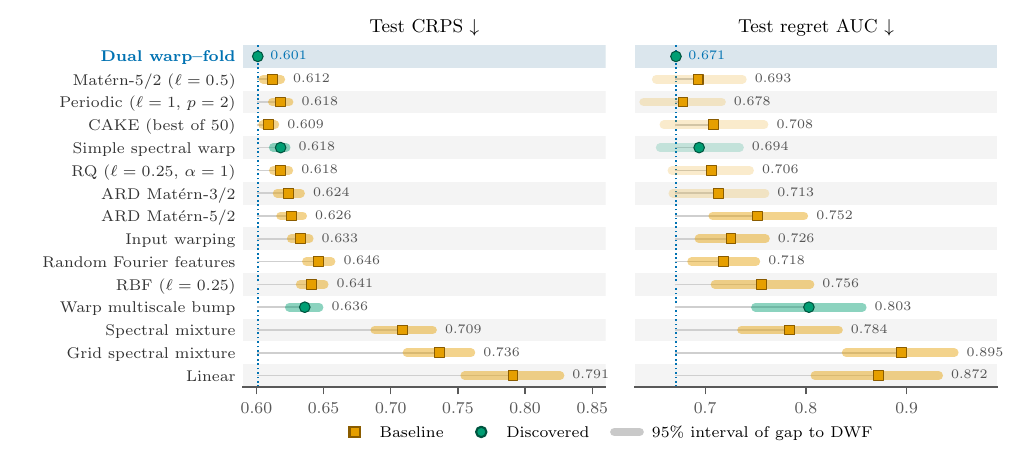}
\caption{Frozen BBO results on 60 held-out test tasks (lower is better). Left: CRPS. Right: regret AUC, the mean normalized simple regret over BO steps. Points are means. Bars are paired 95\% bootstrap intervals for the gap to \dwf{}, where each line ends. Rows are ordered by average rank across both metrics. Fixed kernels use the setting selected on training tasks. Grid spectral mixtures (GSM), input warping, and random Fourier features (RFF) average five fitted variants.}
\label{fig:bbo-results}
\end{figure*}

\paragraph{Open versus closed search.}
We tested whether the improvement came from the search procedure or from access to kernels outside the fixed grammar. The matched ablation used the same agent, budget, score, and search loop, but restricted proposals to compositions of library kernels. The full four-contract search won 11 of 12 paired campaigns and reduced test CRPS by about 8\% relative to this closure-only ablation (Table~\ref{tab:search-ablation}). CKS and CAKE performed similarly to closure-only search. Independent proposals also gave higher CRPS, but this arm changes feedback, cross-iteration novelty screening, and candidate retention together. The comparison therefore tests their combined contribution rather than archive sharing alone (Appendices~\ref{app:revision-saved} and~\ref{app:search-ablation}).

\paragraph{A representative discovery.}
\label{sec:dwf-analysis}
We analyze \emph{dual warp--fold} (\dwf{}), which validation selected in the first ensemble campaign (E1). \dwf{} is a nonstationary kernel that is not a sum or product of standard kernels. It applies a mild monotone \emph{warp} and a triangular \emph{fold} to each coordinate, so values at mirrored points are encouraged to agree (Figure~\ref{fig:dwf-gp-samples}, Proposition~\ref{prop:dwf-cusp}). \dwf{} outperformed every fixed baseline in predictive CRPS, and the paired intervals excluded zero. Keeping only the \emph{warp}, or only the \emph{fold}, did not reproduce that gain (Appendix~\ref{app:dwf-warping}). A human-refined variant then reduced held-out CRPS by a further 5.7\% (Appendix~\ref{app:dwf}). Given the same meta-training data, \dwf{} also reached a lower predictive CRPS than FSBO \citep{wistuba2021fsbo} (Appendix~\ref{app:fsbo}).

\subsubsection{Time-Series Forecasting}
\label{sec:results-ts}

\label{sec:ts-domain}
\paragraph{Benchmark.}
We test whether one frozen kernel transfers across time series without adapting its structure to each record. Search and validation use new windows from one monthly NOAA Global Monitoring Laboratory record\footnote{The records are taken from \url{https://gml.noaa.gov/}.} for each of CO$_2$, CH$_4$, and N$_2$O.
Testing uses SF$_6$, CFC-12, and CFC-11. Each episode samples a training-window length between 216 and $T_{\mathrm{rec}}-48$ months, then holds out the next 48 months. Here, $T_{\mathrm{rec}}$ is the record length. Appendix~\ref{app:ts} lists each record length and the window-sampling rule. Each record has its own univariate GP, and only the frozen kernel is shared. The CFC records rise and then decline, a shape that none of the three meta-training records has, so they test extrapolation. Some windows reverse time, so extrapolation runs in both directions (Appendices~\ref{app:ts} and~\ref{app:ts-extra}).

\paragraph{Results.}
Validation selected a novel residual period-bank kernel, which adds a constant feature and a bank of low-frequency sinusoidal features to a Mat\'ern kernel, as defined in Equation~\eqref{eq:period-bank-frozen}. It had the lowest mean validation CRPS across CO$_2$, CH$_4$, and N$_2$O, and it matched AutoGP \citep{saad2023sequential}, which infers a new structure for every window (Appendix~\ref{app:ts-extra}). Figure~\ref{fig:ts-forecasts-main} shows where the gain comes from. On CFC-12 and CFC-11, the trajectory turns within the held-out interval. RBF extrapolates the training trend with a narrow band, while the period-bank kernel flattens toward the plateau and its wider band covers the observations. It ranked first among 58 frozen candidates on both records (Appendix~\ref{app:ts-extra}). Its sinusoids act as slow trends and not as cycles (Appendix~\ref{app:kernel-descriptions}). Validation selection mattered because the discovery that scored best on SF$_6$ in hindsight lost every CFC window to the validation-selected kernel.

\begin{figure*}[t]
\centering
\includegraphics[width=\textwidth]{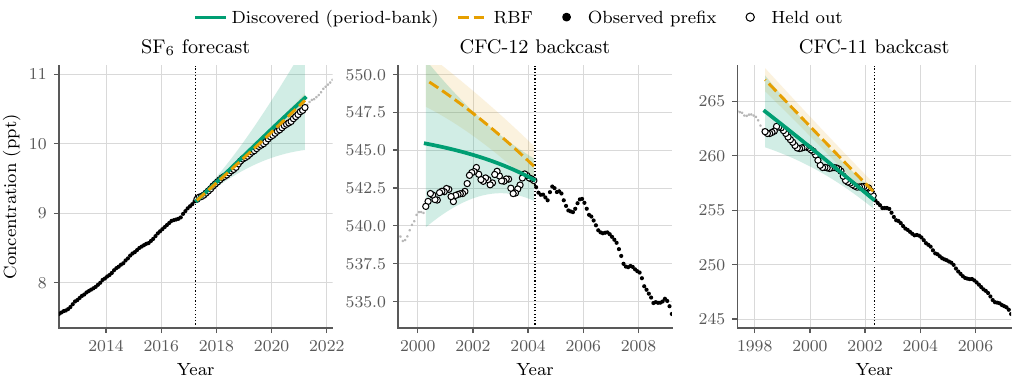}
\caption{\textbf{Forecasting: held-out greenhouse-gas records.} The discovered kernel is the ensemble-selected residual period-bank kernel. Each panel shows, for one record, the window with its median CRPS. Values are in parts per trillion (ppt). SF$_6$ is forecast forward. The chlorofluorocarbon records CFC-12 and CFC-11 are backcast, so their held-out stretch precedes the dotted line. Bands are 95\% intervals. RBF misses the turn in both CFC records. Its smooth posterior mean follows the broader trend of the training window and starts above the last observation.}
\label{fig:ts-forecasts-main}
\end{figure*}

\subsubsection{Enzyme Kinetics}
\label{sec:chembench}

\paragraph{Benchmark.}
ChemBench \citep{kabra2026llm} poses enzyme kinetics as regression from seven experimental variables to a reaction rate $r_0$. The variables are substrate concentration, inhibitor concentration, second-substrate concentration, product concentration, enzyme loading, temperature, and pH. Each episode contains one dataset from one mechanism, with 20 random experiments for fitting and 20 for scoring. Search uses data from ten canonical, single-mechanism rate-law families. Testing uses five structurally distinct mechanisms outside those families, with 15 episodes per mechanism. Appendix~\ref{app:chembench} names both sets and specifies the split.

\paragraph{Results.}
Validation error predicted which campaigns transferred to held-out mechanisms. Five of the nine ensemble campaigns had lower validation error than the other four. All five reduced error on every held-out mechanism (Figure~\ref{fig:chembench}a). The kernel selected by validation achieved about 25\% lower error than Mat\'ern. It performed similarly to the optimized ARD, deep-kernel, and domain-library references (Appendix~\ref{app:chembench}).

GPT-6 Astra produced larger gains. The selected kernel reduced error on every held-out mechanism. Relative to optimized ARD, it reduced error by almost half. It also outperformed all six learned references in paired tests (Table~\ref{tab:chem-paired}). The selected program constructs ten substrate-response shapes from the training rate laws. It also includes local sensitivities to substrate concentration and modulation by enzyme loading. The kernel predicts an unseen mechanism as a combination of these known shapes and local variations. A distance-based kernel cannot express this biochemical prior (Appendices~\ref{app:kernel-descriptions} and~\ref{app:chembench}).

\begin{figure*}[t]
\centering
\includegraphics[width=\textwidth]{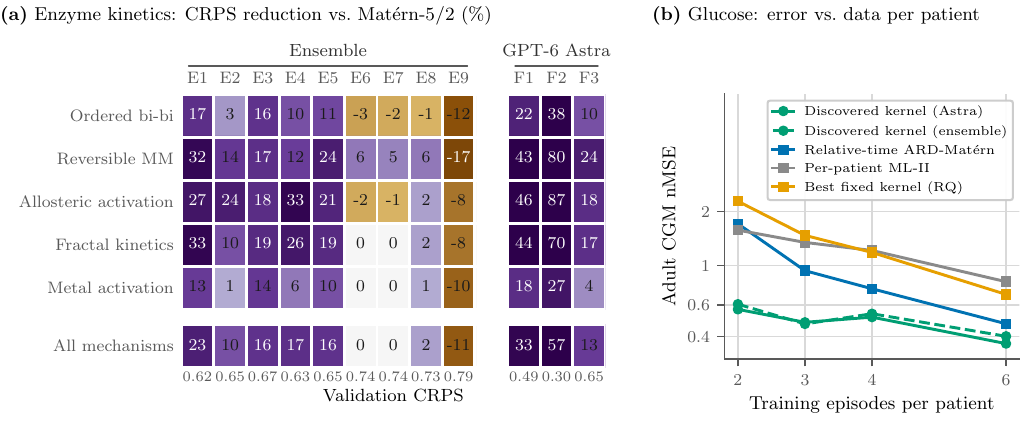}
\caption{\textbf{Enzyme kinetics and glucose dynamics: transfer to unseen mechanisms and patients.} (a) CRPS reduction (\%) relative to Mat\'ern-$5/2$ on held-out enzyme mechanisms, with each campaign's validation CRPS below. Purple means lower error and brown means higher error. Validation selected E1 and F2, shown in bold. (b) Normalized mean squared error (MSE, Equation~\eqref{eq:glucose-nmse} of Appendix~\ref{app:glucose}, log scale) for adult continuous glucose monitor (CGM) forecasts after a 45-minute prefix, against training episodes per patient. Validation selected the discovered kernels. References are tuned ARD-Mat\'ern, per-patient type-II maximum-likelihood (ML-II) fitting, and the best fixed kernel.}
\label{fig:science-transfer}
\label{fig:chembench}
\label{fig:glucose-budget}
\end{figure*}

\subsubsection{Glucose Dynamics}
\label{sec:glucose}

\paragraph{Benchmark.}
GlucoseBench \citep{murphy2026mdagent} builds on the UVA/Padova type-1 diabetes simulator \citep{xie2018simglucose,kovatchev2009silico}. In each episode, a virtual patient eats a meal and may receive an insulin bolus. We forecast continuous glucose monitor (CGM) readings after a 45-minute prefix. We use 10 child profiles for search, 10 adolescents for selection, and 10 adults for testing. Each forecast uses only episodes from its target patient. Thus, the experiment tests whether the kernel structure transfers across patients. 
We measure error using the normalized MSE of the CGM readings, aggregated by the GlucoseBench evaluator. Appendix~\ref{app:glucose} defines normalized MSE in Equation~\eqref{eq:glucose-nmse} and its aggregation, and also reports CRPS.

\paragraph{Results.}
Figure~\ref{fig:glucose-budget}b shows that error decreases as patient-specific data increases. The benefit of each discovered kernel is largest in the low-sample regime. The strongest tuned reference is a relative-time ARD Mat\'ern kernel. It applies one learned lengthscale to each of time, time since the meal, time since the bolus, meal size, and bolus size. Against this reference, the five discoveries reduce geometric-mean normalized MSE by 38--67\% with two training episodes. With six episodes and a 45-minute prefix, the reductions are 0--23\%, and frontier F1, frontier F2, and ensemble c2 reduce it by 15--23\%. The ensemble kernel selected by validation encodes a delayed meal response and an insulin effect that lasts about two hours. Removing either feature eliminates the gain, which links the improvement to these physiological response patterns (Figure~\ref{fig:glucose-learned}).

\subsubsection{Takeaways}
The results answered the three questions of \S\ref{sec:experiments}. Discovered kernels showed predictive transfer to unseen BBO function families, enzyme mechanisms, glucose patients, and CFC records. On BBO benchmarks, open search reduced test CRPS by about 8\% relative to closure-only search, so access to new base kernels mattered. These interpretable programs can also provide useful starting points for human refinement. For example, replacing \dwf{}'s joint construction with a sum of its warp and fold kernels further reduced held-out CRPS by 5.7\% (Appendix~\ref{app:dwf}).

\paragraph{Cost.}
A campaign used about 100 LLM calls and roughly 30 minutes of wall-clock time, and each verification took about 3 seconds (Appendix~\ref{app:ts}). This cost is paid once, because the frozen kernel is reused on every new task with no further LLM calls. 

\section{Conclusion}
\label{sec:conclusion}
We introduced \method, in which coding agents write kernel components and a trusted interpreter assembles each accepted kernel under a PSD-preserving construction contract. These contracts widen the search beyond compositions of a fixed kernel library, and every accepted program remains valid. Our experiments demonstrated that the discovered kernels encode inductive biases that transfer to unseen tasks without further LLM calls.
Because the discovered kernels are short programs, human researchers can also interpret and refine their structures, as shown in our \dwf{} case study (\S\ref{sec:dwf-analysis}). Future construction contracts could support state-space kernels from linear ODEs \citep{loper2021general}, multi-output kernels \citep{alvarez2012kernels}, kernels for structured inputs \citep{gartner2003survey}, and semantic kernels for LLM uncertainty \citep{nikitin2024kernel}. Appendix~\ref{app:extensions} describes several limitations and directions for future work.

\clearpage

\bibliographystyle{apalike}
\bibliography{references}

@misc{hamzi2026algorithmic,
  title = {Toward an Algorithmic Theory of Machine Learning: Unifying Machine Learning and Algorithmic Information Theory via Kernel Methods},
  author = {Hamzi, Boumediene and Hutter, Marcus},
  year = {2026},
  howpublished = {Preliminary research monograph},
  note = {Draft dated August 1, 2026. Preliminary lecture notes}
}

@article{aronszajn1950theory,
  title = {Theory of Reproducing Kernels},
  author = {Aronszajn, Nachman},
  journal = {Transactions of the American Mathematical Society},
  volume = {68},
  number = {3},
  pages = {337--404},
  year = {1950},
  doi = {10.1090/S0002-9947-1950-0051437-7}
}

@article{micchelli2006universal,
  title = {Universal Kernels},
  author = {Micchelli, Charles A. and Xu, Yuesheng and Zhang, Haizhang},
  journal = {Journal of Machine Learning Research},
  volume = {7},
  pages = {2651--2667}, 
  year = {2006},
  url = {https://www.jmlr.org/papers/v7/micchelli06a.html}
}

@article{bach2008consistency,
  title = {Consistency of the Group {L}asso and Multiple Kernel Learning},
  author = {Bach, Francis R.},
  journal = {Journal of Machine Learning Research},
  volume = {9},
  pages = {1179--1225},
  year = {2008},
  url = {https://www.jmlr.org/papers/v9/bach08b.html}
}

@article{cortes2012alignment,
  title = {Algorithms for Learning Kernels Based on Centered Alignment},
  author = {Cortes, Corinna and Mohri, Mehryar and Rostamizadeh, Afshin},
  journal = {Journal of Machine Learning Research},
  volume = {13},
  pages = {795--828},
  year = {2012},
  url = {https://www.jmlr.org/papers/v13/cortes12a.html}
}

@article{halton1960efficiency,
  title   = {On the efficiency of certain quasi-random sequences of points in evaluating multi-dimensional integrals},
  author  = {Halton, John H.},
  journal = {Numerische Mathematik},
  volume  = {2},
  number  = {1},
  pages   = {84--90},
  year    = {1960},
  publisher = {Springer}
}

@inproceedings{suwandi2025cake,
  title     = {Adaptive Kernel Design for {B}ayesian Optimization Is a Piece of {CAKE} with {LLM}s},
  author    = {Suwandi, Richard Cornelius and Yin, Feng and Wang, Juntao and Li, Renjie and Chang, Tsung-Hui and Theodoridis, Sergios},
  booktitle = {Advances in Neural Information Processing Systems},
  volume    = {38},
  year      = {2025}
}

@article{brunzema2026agenticbo,
  title   = {Agentic {B}ayesian Optimization through Surrogate-Augmented Autoresearch},
  author  = {Brunzema, Paul and Tiao, Louis and Le, Nhat and De Angeli, Kevin and Xuan, Yao and Gligorijevic, Djordje},
  journal = {arXiv preprint arXiv:2608.00316},
  year    = {2026}
}

@misc{karpathy2026autoresearch,
  title        = {Autoresearch: {AI} Agents Running Research on Single-{GPU} {Nanochat} Training Automatically},
  author       = {Karpathy, Andrej},
  year         = {2026},
  howpublished = {\url{https://github.com/karpathy/autoresearch}},
  note         = {GitHub repository}
}

@inproceedings{snoek2014inputwarping,
  title     = {Input Warping for {B}ayesian Optimization of Non-Stationary Functions},
  author    = {Snoek, Jasper and Swersky, Kevin and Zemel, Richard S. and Adams, Ryan P.},
  booktitle = {Proceedings of the International Conference on Machine Learning},
  pages     = {1674--1682},
  year      = {2014}
}

@inproceedings{duvenaud2013structure,
  title     = {Structure Discovery in Nonparametric Regression through Compositional Kernel Search},
  author    = {Duvenaud, David and Lloyd, James Robert and Grosse, Roger and Tenenbaum, Joshua B. and Ghahramani, Zoubin},
  booktitle = {Proceedings of the International Conference on Machine Learning},
  year      = {2013}
}

@inproceedings{malkomes2016boms,
  title     = {{B}ayesian Optimization for Automated Model Selection},
  author    = {Malkomes, Gustavo and Schaff, Charles and Garnett, Roman},
  booktitle = {Advances in Neural Information Processing Systems},
  volume    = {29},
  year      = {2016}
}

@inproceedings{sun2018neuralkernelnetwork,
  title     = {Differentiable Compositional Kernel Learning for {G}aussian Processes},
  author    = {Sun, Shengyang and Zhang, Guodong and Wang, Chaoqi and Zeng, Wenyuan and Li, Jiaman and Grosse, Roger},
  booktitle = {Proceedings of the International Conference on Machine Learning},
  year      = {2018}
}

@inproceedings{wilson2013spectralmixture,
  title     = {{G}aussian Process Kernels for Pattern Discovery and Extrapolation},
  author    = {Wilson, Andrew Gordon and Adams, Ryan Prescott},
  booktitle = {Proceedings of the International Conference on Machine Learning},
  year      = {2013}
}

@inproceedings{wilson2016deepkernel,
  title     = {Deep Kernel Learning},
  author    = {Wilson, Andrew Gordon and Hu, Zhiting and Salakhutdinov, Ruslan and Xing, Eric P.},
  booktitle = {Proceedings of the International Conference on Artificial Intelligence and Statistics},
  year      = {2016}
}

@article{suwandi2025gsmp,
  title   = {Sparsity-Aware Distributed Learning for {G}aussian Processes With Linear Multiple Kernel},
  author  = {Suwandi, Richard Cornelius and Lin, Zhidi and Yin, Feng and Wang, Zhiguo and Theodoridis, Sergios},
  journal = {IEEE Transactions on Neural Networks and Learning Systems},
  volume  = {36},
  number  = {8},
  pages   = {14869--14883},
  year    = {2025},
  doi     = {10.1109/TNNLS.2025.3531784}
}

@inproceedings{rahimi2007random,
  title     = {Random Features for Large-Scale Kernel Machines},
  author    = {Rahimi, Ali and Recht, Benjamin},
  booktitle = {Advances in Neural Information Processing Systems},
  volume    = {20},
  year      = {2007}
}

@inproceedings{williams2001nystrom,
  title     = {Using the {N}ystr\"{o}m Method to Speed Up Kernel Machines},
  author    = {Williams, Christopher K. I. and Seeger, Matthias},
  booktitle = {Advances in Neural Information Processing Systems},
  volume    = {13},
  year      = {2001}
}

@book{rasmussen2006gaussian,
  title     = {{G}aussian Processes for Machine Learning},
  author    = {Rasmussen, Carl Edward and Williams, Christopher K. I.},
  publisher = {MIT Press},
  year      = {2006}
}

@article{romeraparedes2024funsearch,
  title   = {Mathematical Discoveries from Program Search with Large Language Models},
  author  = {Romera-Paredes, Bernardino and Barekatain, Mohammadamin and Novikov, Alexander and Balog, Matej and Kumar, M. Pawan and Dupont, Emilien and Ruiz, Francisco J. R. and Ellenberg, Jordan S. and Wang, Pengming and Fawzi, Omar and Kohli, Pushmeet and Fawzi, Alhussein},
  journal = {Nature},
  volume  = {625},
  number  = {7995},
  pages   = {468--475},
  year    = {2024}
}

@article{novikov2025alphaevolve,
  title   = {{A}lpha{E}volve: A Coding Agent for Scientific and Algorithmic Discovery},
  author  = {Novikov, Alexander and V{\~u}, Ng{\^a}n and Eisenberger, Marvin and Dupont, Emilien and Huang, Po-Sen and Wagner, Adam Zsolt and Shirobokov, Sergey and Kozlovskii, Borislav and Ruiz, Francisco J. R. and Mehrabian, Abbas and Kumar, M. Pawan and See, Abigail and Chaudhuri, Swarat and Holland, George and Davies, Alex and Nowozin, Sebastian and Kohli, Pushmeet and Balog, Matej},
  journal = {arXiv preprint arXiv:2506.13131},
  year    = {2025},
  doi     = {10.48550/arXiv.2506.13131},
  url     = {https://arxiv.org/abs/2506.13131}
}

@article{rice1953classes,
  title   = {Classes of Recursively Enumerable Sets and Their Decision Problems},
  author  = {Rice, Henry Gordon},
  journal = {Transactions of the American Mathematical Society},
  volume  = {74},
  number  = {2},
  pages   = {358--366},
  year    = {1953},
  doi     = {10.2307/1990888}
}

@article{yun2026automated,
  title   = {Automated Kernel Discovery Towards Understanding High-dimensional {Bayesian} Optimization},
  author  = {Yun, Taeyoung and Shin, Woocheol and Song, Inhyuck and Lee, Jaewoo and Park, Jinkyoo},
  journal = {arXiv preprint arXiv:2605.20249},
  year    = {2026}
}

@article{goldie2026procedural,
  title   = {{DiscoGen}: Procedural Generation of Algorithm Discovery Tasks in Machine Learning},
  author  = {Goldie, Alexander D and Wang, Zilin and Hayler, Adrian and Nathani, Deepak and Toledo, Edan and Thampiratwong, Ken and Kalisz, Aleksandra and Beukman, Michael and Letcher, Alistair and Reddy, Shashank and others},
  journal = {arXiv preprint arXiv:2603.17863},
  year    = {2026}
}

@article{mouret2015mapelites,
  title   = {Illuminating Search Spaces by Mapping Elites},
  author  = {Mouret, Jean-Baptiste and Clune, Jeff},
  journal = {arXiv preprint arXiv:1504.04909},
  year    = {2015}
}

@article{gneiting2007strictly,
  title     = {Strictly Proper Scoring Rules, Prediction, and Estimation},
  author    = {Gneiting, Tilmann and Raftery, Adrian E},
  journal   = {Journal of the American Statistical Association},
  volume    = {102},
  number    = {477},
  pages     = {359--378},
  year      = {2007}
}

@article{murphy2026mdagent,
  title={Model discovery agent: {LLM}-assisted {Bayesian} experiment design for data-efficient discovery of mechanistic world models},
  author={Murphy, Kevin},
  journal = {arXiv preprint arXiv:2608.09696},
  year={2026}
}

@article{kabra2026llm,
  title={{LLM-AutoSciLab}: Closed-Loop Scientific Discovery via Active Experimentation with {LLMs}},
  author={Kabra, Sanchit and Abhyankar, Nikhil and Desai, Saaketh and Iyer, Prasad and Reddy, Chandan K},
  journal = {arXiv preprint arXiv:2605.24043},
  year={2026}
}

@inproceedings{nikitin2024kernel,
  title     = {Kernel Language Entropy: Fine-grained Uncertainty Quantification for {LLM}s from Semantic Similarities},
  author    = {Nikitin, Alexander and Kossen, Jannik and Gal, Yarin and Marttinen, Pekka},
  booktitle = {Advances in Neural Information Processing Systems},
  volume    = {37},
  year      = {2024}
}

@inproceedings{griffiths2023gauche,
  title     = {{GAUCHE}: A Library for {G}aussian Processes in Chemistry},
  author    = {Griffiths, Ryan-Rhys and Klarner, Leo and Moss, Henry B. and Ravuri, Aditya and Truong, Sang and Du, Yuanqi and Stanton, Samuel and Tom, Gary and Rankovi{\'c}, Bojana and Jamasb, Arian R. and others},
  booktitle = {Advances in Neural Information Processing Systems},
  volume    = {36},
  year      = {2023}
}

@article{chen2021solving,
  title   = {Solving and Learning Nonlinear {PDE}s with {G}aussian Processes},
  author  = {Chen, Yifan and Hosseini, Bamdad and Owhadi, Houman and Stuart, Andrew M.},
  journal = {Journal of Computational Physics},
  volume  = {447},
  pages   = {110668},
  year    = {2021}
}

@inproceedings{wistuba2021fsbo,
  title     = {Few-Shot {B}ayesian Optimization with Deep Kernel Surrogates},
  author    = {Wistuba, Martin and Grabocka, Josif},
  booktitle = {International Conference on Learning Representations},
  year      = {2021}
}

@inproceedings{pineda2021hpob,
  title     = {{HPO-B}: A Large-Scale Reproducible Benchmark for Black-Box {HPO} based on {OpenML}},
  author    = {Pineda-Arango, Sebastian and Jomaa, Hadi S. and Wistuba, Martin and Grabocka, Josif},
  booktitle = {Neural Information Processing Systems Track on Datasets and Benchmarks},
  year      = {2021}
}

@inproceedings{singhal2026llmpriors,
  title     = {{LLM} Priors for {ERM} over Programs},
  author    = {Singhal, Shivam and Mishra, Priyadarsi and Malach, Eran and Galanti, Tomer},
  booktitle = {Proceedings of the 43rd International Conference on Machine Learning},
  series    = {Proceedings of Machine Learning Research},
  volume    = {306},
  year      = {2026},
  note      = {arXiv:2510.14331}
}

@inproceedings{lotfi2023bayesian,
  title        = {{Bayesian} Model Selection, the Marginal Likelihood, and Generalization},
  author       = {Lotfi, Sanae and Izmailov, Pavel and Benton, Gregory and Goldblum, Micah and Wilson, Andrew Gordon},
  booktitle    = {Proceedings of the International Conference on Machine Learning},
  pages        = {14223--14247},
  year         = {2022},
  organization = {PMLR}
}

@inproceedings{saad2023sequential,
  title     = {Sequential {Monte Carlo} Learning for Time Series Structure Discovery},
  author    = {Saad, Feras A. and Patton, Brian J. and Hoffman, Matthew D. and Saurous, Rif A. and Mansinghka, Vikash K.},
  booktitle = {Proceedings of the 40th International Conference on Machine Learning},
  series    = {Proceedings of Machine Learning Research},
  volume    = {202},
  year      = {2023}
}

@article{saad2024scalable,
  title   = {Scalable Spatiotemporal Prediction with {Bayesian} Neural Fields},
  author  = {Saad, Feras and Burnim, Jacob and Carroll, Colin and Patton, Brian and K{\"o}ster, Urs and Saurous, Rif A. and Hoffman, Matthew},
  journal = {Nature Communications},
  volume  = {15},
  pages   = {7942},
  year    = {2024}
}

@inproceedings{lloyd2014automatic,
  title     = {Automatic Construction and Natural-Language Description of Nonparametric Regression Models},
  author    = {Lloyd, James Robert and Duvenaud, David and Grosse, Roger and Tenenbaum, Joshua B. and Ghahramani, Zoubin},
  booktitle = {Proceedings of the AAAI Conference on Artificial Intelligence},
  pages     = {1242--1250},
  year      = {2014}
}

@article{loper2021general,
  title   = {A General Linear-Time Inference Method for {Gaussian} Processes on One Dimension},
  author  = {Loper, Jackson and Blei, David and Cunningham, John P. and Paninski, Liam},
  journal = {Journal of Machine Learning Research},
  volume  = {22},
  pages   = {1--36},
  year    = {2021}
}

@article{sarkka2013spatiotemporal,
  title     = {Spatiotemporal Learning via Infinite-Dimensional {Bayesian} Filtering and Smoothing: A Look at {Gaussian} Process Regression Through {Kalman} Filtering},
  author    = {S{\"a}rkk{\"a}, Simo and Solin, Arno and Hartikainen, Jouni},
  journal   = {IEEE Signal Processing Magazine},
  volume    = {30},
  number    = {4},
  pages     = {51--61},
  year      = {2013},
  publisher = {IEEE},
  doi       = {10.1109/MSP.2013.2246292}
}

@inproceedings{alvarez2009latent,
  title     = {Latent Force Models},
  author    = {{\'A}lvarez, Mauricio and Luengo, David and Lawrence, Neil D.},
  booktitle = {Proceedings of the 12th International Conference on Artificial Intelligence and Statistics},
  series    = {Proceedings of Machine Learning Research},
  volume    = {5},
  pages     = {9--16},
  year      = {2009}
}

@article{kovatchev2009silico,
  title   = {In Silico Preclinical Trials: A Proof of Concept in Closed-Loop Control of Type 1 Diabetes},
  author  = {Kovatchev, Boris P. and Breton, Marc and Dalla Man, Chiara and Cobelli, Claudio},
  journal = {Journal of Diabetes Science and Technology},
  volume  = {3},
  number  = {1},
  pages   = {44--55},
  year    = {2009}
}

@misc{xie2018simglucose,
  title  = {{Simglucose} v0.2.1},
  author = {Xie, Jinyu},
  year   = {2018},
  note   = {\url{https://github.com/jxx123/simglucose}}
}

@book{shalevshwartz2014understanding,
  title     = {Understanding Machine Learning: From Theory to Algorithms},
  author    = {Shalev-Shwartz, Shai and Ben-David, Shai},
  publisher = {Cambridge University Press},
  year      = {2014}
}

@article{alvarez2012kernels,
  title   = {Kernels for Vector-Valued Functions: A Review},
  author  = {{\'A}lvarez, Mauricio A. and Rosasco, Lorenzo and Lawrence, Neil D.},
  journal = {Foundations and Trends in Machine Learning},
  volume  = {4},
  number  = {3},
  pages   = {195--266},
  year    = {2012},
  doi     = {10.1561/2200000036}
}

@article{gartner2003survey,
  title   = {A Survey of Kernels for Structured Data},
  author  = {G{\"a}rtner, Thomas},
  journal = {ACM SIGKDD Explorations Newsletter},
  volume  = {5},
  number  = {1},
  pages   = {49--58},
  year    = {2003},
  doi     = {10.1145/959242.959248}
}
\clearpage
\appendix
\section*{Appendix Guide}
The appendix is organized as follows.
\begin{itemize}[leftmargin=*,itemsep=2pt,topsep=3pt]
\item \textbf{Method (Appendix~\ref{app:math}).} The validity guarantee and its proof, implementation and evaluation details, the soundness of validation selection, and a stress test of numerical validity checks.
\item \textbf{Prediction and optimization on BBO benchmarks (Appendix~\ref{app:bbo}).} The function-family splits, predictive CRPS evaluation, separate GP-based BO protocol and regret results, baselines and controls, ablations, and robustness checks.
\item \textbf{The DWF kernel (Appendix~\ref{app:dwf}).} A detailed analysis of the discovered kernel, covering its geometry, theoretical properties, and controlled comparisons.
\item \textbf{Time-series forecasting (Appendix~\ref{app:forecasting}).} Experimental setup and full results.
\item \textbf{Scientific structure discovery (Appendix~\ref{app:extra-domains}).} Experimental setup and full results for enzyme kinetics and glucose dynamics.
\item \textbf{Transfer and interpretation (Appendix~\ref{app:cross-domain}).} Cross-domain reuse of discovered kernels and plain-language descriptions of what they encode.
\item \textbf{Scope and future extensions (Appendix~\ref{app:extensions}).} The scope of the current experiments and directions for extending the framework.
\item \textbf{LLM prompts (Appendix~\ref{app:prompts}).} The agent interface and the prompts given to the coding agents.
\end{itemize}

\section{Method: Further Details}
\label{app:math}
\label{app:theory-connections}

This appendix supports \S\ref{sec:framework}. We first prove the construction guarantee and the approximation power of the feature contract. We then specify the implementation defaults, the acceptance checks, and the soundness of validation selection, and we close with the verification experiments. Mathematical results are stated in exact arithmetic, and computational limits are specified separately.

\subsection{PSD Preservation Under Trusted Assembly}
\label{app:proof}

The argument has two parts. Part (a) shows that every matrix the interpreter assembles is symmetric and PSD whenever the components return finite outputs of the declared shape. Part (b) adds purity, which makes those matrices the Gram matrices of one kernel function on the whole domain.

\begin{proof}
\textbf{(a)} Fix finite-valued inputs $x_1,\dots,x_n$.

\emph{Feature map.} The interpreter evaluates $\varphi_\theta$ once per input and stacks the outputs as the rows of $\Xi\in\mathbb{R}^{n\times m}$, then sets $K=\Xi\Xi^\top$. Symmetry is immediate, and $c^\top Kc=\lVert\Xi^\top c\rVert_2^2\ge0$ for every $c\in\mathbb{R}^n$. The same row of $\Xi$ enters both the row and the column of $K$, so the identity holds whether or not $\varphi_\theta$ is pure.

\emph{Spectral.} The weights $w_j=r_j^2/M$ are nonnegative for any real $r_j$. Define $\psi(x)=\big(\sqrt{w_j}\cos\omega_j^\top x,\ \sqrt{w_j}\sin\omega_j^\top x\big)_{j=1}^M\in\mathbb{R}^{2M}$. The angle-difference identity $\cos a\cos b+\sin a\sin b=\cos(a-b)$ gives $\psi(x)^\top\psi(y)=\sum_j w_j\cos(\omega_j^\top(x-y))$. The interpreter forms $K=\Psi\Psi^\top$, so the feature-map case applies.

\emph{Pullback.} Let $z_i=T_\theta(x_i)\in\mathbb{R}^m$, where repeated values are allowed. Then $K_{ij}=k_0(z_i,z_j)$, which is the Gram matrix of $k_0$ on $z_1,\dots,z_n$. That matrix is symmetric and PSD because $k_0$ is PSD on $\mathbb{R}^m$.

\emph{Closure.} The argument is a structural induction on the tree. Leaves give symmetric PSD Gram matrices, either from the trusted library or from the three cases above. Suppose two subtrees give symmetric PSD matrices $K_1$ and $K_2$. Then $K_1+K_2$ and $cK_1$ for $c\ge0$ are symmetric PSD.
The entrywise product $K_1\circ K_2$ is symmetric PSD by the Schur product theorem.

\textbf{(b)} Under purity, each component defines a function on $\cX_0$. Thus, $\kappa_\theta(x,y)$ has one well-defined real value for each pair $(x,y)\in\cX_0^2$. This value does not depend on the batch that contains the pair. For any $n$ and any $x_1,\dots,x_n\in\cX_0$, the matrix $[\kappa_\theta(x_i,x_j)]_{ij}$ is exactly the matrix that the interpreter assembles on that batch. Part (a) shows that this matrix is symmetric and PSD. Therefore, $\kappa_\theta$ satisfies Definition~\ref{def:valid} for the fixed $\theta$. Applying the same argument to every $\theta\in\Theta$ proves that $\kappa$ is valid.
\end{proof}

\paragraph{Scope of the guarantee.}
The proposition certifies the interpreter's assembly when the components meet the stated assumptions. We restrict component behavior through the interface and static source checks, then probe consistency on sampled batches using the relations in Table~\ref{tab:contract-checks}. These checks assess conformance, while the kernel-function guarantee assumes purity over the full stated domain. Nonnegative residual and additive combinations follow from the closure case.

\subsection{Approximation Power of the Feature Contract}\label{app:contract-approximation}

The feature contract restricts how components are assembled, but it need not restrict approximation to a fixed kernel library. The following result makes this distinction precise as feature dimension and computational resources grow.

\begin{proposition}[Uniform approximation by certified features]
\label{prop:contract-density}
Let $\cX\subset\mathbb{R}^d$ be compact and let $k$ be a real continuous PSD kernel on $\cX$. For every $\varepsilon>0$, there are $m<\infty$ and a polynomial map $\varphi:\mathbb{R}^d\to\mathbb{R}^m$ with rational coefficients such that
\begin{equation}
 \sup_{x,y\in\cX}|k(x,y)-\varphi(x)^\top\varphi(y)|<\varepsilon.
 \label{eq:contract-density}
\end{equation}
The approximating kernel is admitted by the mathematical feature-map contract in exact arithmetic.
\end{proposition}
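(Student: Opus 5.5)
Our route is Mercer's theorem followed by Weierstrass approximation of finitely many eigenfunctions. Because $\cX$ is compact and $k$ is continuous and PSD, we will use Mercer's theorem with respect to a finite Borel measure $\mu$ on $\cX$ with full support. A convenient choice is a measure built from a countable dense subset with summable positive weights. Mercer then gives
\[
k(x,y)=\sum_{i\ge1}\lambda_i e_i(x)e_i(y),\qquad \lambda_i\ge0,
\]
with continuous eigenfunctions $e_i$ and uniform convergence on $\cX\times\cX$. The full-support condition is what makes the expansion hold pointwise everywhere on $\cX$ and not just $\mu$-a.e. We then truncate at $N$ so that $\sup_{x,y}|k(x,y)-\sum_{i\le N}\lambda_i e_i(x)e_i(y)|<\varepsilon/3$. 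This is already a feature map $\phi(x)=(\sqrt{\lambda_i}e_i(x))_{i\le N}$ with continuous entries.

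Next we make each coordinate polynomial. Let $B=\max_{i\le N}\sup_{\cX}|\sqrt{\lambda_i}e_i|+1$. By Stone--Weierstrass on the compact set $\cX\subset\mathbb{R}^d$, each $\sqrt{\lambda_i}e_i$ is uniformly within $\delta$ of a real polynomial $p_i$. Since rational-coefficient polynomials are dense in real polynomials in the sup norm on a bounded set, we may take $p_i\in\mathbb{Q}[x_1,\dots,x_d]$. Set $\varphi=(p_1,\dots,p_N)$, so $m=N$. Then
\[
|\phi(x)^\top\phi(y)-\varphi(x)^\top\varphi(y)|\le\sum_{i\le N}\bigl(|a_i(x)||a_i(y)-p_i(y)|+|p_i(y)||a_i(x)-p_i(x)|\bigr)\le N\delta(2B),
\]
where $a_i=\sqrt{\lambda_i}e_i$ and $\delta\le1$. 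Choosing $\delta<\varepsilon/(6NB)$ keeps this below $\varepsilon/3$, and the triangle inequality gives the bound in \eqref{eq:contract-density}. Finally, $\varphi$ is a finite-output function of $x$ and $\theta$ alone, hence pure, so Proposition~\ref{prop:validity} shows the approximant is admitted by the feature-map contract and is a valid kernel in exact arithmetic.

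The main obstacle is the first step: Mercer's theorem in the uniform form requires a measure whose support is all of $\cX$. An arbitrary compact $\cX$ need not carry a natural such measure, hence the countable-dense-subset construction. As a fallback that avoids Mercer entirely, we would use the RKHS route. Approximate $k$ uniformly by finite sums $\sum_{i,j}\alpha_{ij}k(z_i,x)k(z_j,y)$ built from an $\varepsilon$-net on $\cX$ together with a partition of unity. This expresses $k$ as a limit of the Gram structure $K_{Z}$ sandwiched by continuous features. The resulting finite-rank PSD kernel factors via the square root of a PSD matrix into a continuous feature map, after which the Weierstrass step proceeds unchanged.
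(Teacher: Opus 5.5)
Your proof is correct, but it takes a different route from the paper's. You use Mercer's theorem. You build a full-support finite measure $\mu=\sum_n 2^{-n}\delta_{q_n}$ from a countable dense subset, which is what makes the expansion hold everywhere and converge uniformly. You then truncate the eigen-expansion and apply Stone--Weierstrass to the finitely many continuous functions $\sqrt{\lambda_i}e_i$.

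The paper introduces no measure and no spectral theory; it works directly in the RKHS. The canonical feature map $x\mapsto k_x$ is continuous because $\|k_x-k_y\|^2=k(x,x)+k(y,y)-2k(x,y)$, so its image is compact. The paper takes a finite $\eta$-net of that image and projects onto its span $V$. Orthogonality then gives $|k(x,y)-\langle P_Vk_x,P_Vk_y\rangle|\le\eta^2$. The coordinates of $P_Vk_x$ in an orthonormal basis of $V$ are approximated by rational polynomials, exactly as in your second step.

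What each route buys:
\begin{itemize}
\item The paper's argument is more elementary: it needs only the RKHS construction and Hilbert-space projection.
\item It gives an explicit error bound tied to the net resolution, whereas Mercer supplies only qualitative (Dini-type) uniform convergence.
\item It avoids the full-support subtlety that you rightly flagged as your main obstacle.
\item Your route relies on a heavier theorem, but it yields an orthonormal, eigenvalue-ordered feature basis.
\end{itemize}

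Your fallback, in its first form $\sum_{i,j}\alpha_{ij}k(z_i,x)k(z_j,y)$ with $\alpha$ the pseudoinverse of $K_Z$, is precisely the paper's $\langle P_Vk_x,P_Vk_y\rangle$ for $V=\operatorname{span}\{k_{z_i}\}$ (the Nystr\"om identity). Its partition-of-unity form $\Psi(x)^\top K_Z\Psi(y)$ is a third valid route.

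One minor point: if $k=0$, your truncation can give $N=0$, and then the choice $\delta<\varepsilon/(6NB)$ is undefined. Pad with a zero coordinate, as the paper does with the zero feature.
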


\begin{proof}
Let $k_x=k(x,\cdot)$ be the canonical feature of $x$ in the reproducing kernel Hilbert space (RKHS). It is continuous because
$\|k_x-k_y\|^2=k(x,x)+k(y,y)-2k(x,y)$, so $\{k_x:x\in\cX\}$ is compact and bounded by some $M<\infty$. Choose a finite $\eta$-net in this image, let $V$ be its span, and let $P_V$ be the orthogonal projection. Then $\sup_x\|(I-P_V)k_x\|\le\eta$. Orthogonality gives
\[
 |k(x,y)-\langle P_Vk_x,P_Vk_y\rangle|\le\eta^2.
\]
In an orthonormal basis of $V$, the projected coordinates form a continuous map $z:\cX\to\mathbb{R}^m$. By Stone--Weierstrass, approximate its coordinates uniformly by polynomials. Rational approximation of their finitely many coefficients preserves uniform approximation on the compact domain. Choose the resulting $\varphi$ with $\sup_x\|z(x)-\varphi(x)\|\le\delta$. Since $\sup_x\|z(x)\|\le M$, the extra inner-product error is at most $2M\delta+\delta^2$. Choose $\eta,\delta$ so that $\eta^2+2M\delta+\delta^2<\varepsilon$. The case $k=0$ is represented exactly by the zero feature.
\end{proof}

This applies classical feature-space approximation arguments~\citep{aronszajn1950theory,micchelli2006universal} to the construction interface. Each approximant has a finite executable description, but its degree, dimension, and evaluation cost may grow with accuracy. The proposition supplies neither an efficient synthesis algorithm nor a uniform guarantee under the runtime cap, floating-point arithmetic, or an arbitrary parameter range. It does not assert a complete verifier for unrestricted programs.

Feature maps can also define nonstationary kernels. Unless $\varphi$ has a special form, such as random Fourier features, $\varphi(x)^\top\varphi(y)$ depends on $x$ and $y$ separately rather than only on $x-y$. For example, $\varphi(x)=x$ gives the linear kernel $xy$. Its diagonal is $x^2$, which is not constant. Therefore, no combination of stationary kernels can approximate it uniformly on $[0,1]$. A grammar that includes a linear base kernel already contains $xy$. The value of the feature-map contract is broader: agents can write arbitrary feature maps, including the saturation features discovered for enzyme kinetics (Appendix~\ref{app:chembench}).

\subsection{Implementation Details}
\label{app:impl}

This section specifies the implementation defaults in the order used by a discovery run. The scheduler chooses what to revise, the tuner searches parameters, the fitness and novelty rules decide what enters the archive, and the contract interfaces specify the permitted component computations.

\paragraph{Run scheduler.}
A run starts as an independent root with probability $p_{\mathrm{new}}$, and always while the archive is empty. Otherwise it selects a parent from the quality-diversity elites. The scheduler draws the parent uniformly from all elites with probability $p_{\mathrm{unif}}$ and from the top $k_{\mathrm{top}}$ elites by selection score otherwise. These draws balance exploration and exploitation. When the run revises a parent, it also samples up to $k_{\mathrm{insp}}$ inspirations from elites outside the scheduler-assigned target niche, and a root run samples its inspirations from all elites. In all experiments we use $p_{\mathrm{new}}=p_{\mathrm{unif}}=0.35$ and $k_{\mathrm{top}}=k_{\mathrm{insp}}=3$. The run budget is $B=12$ for BBO and forecasting and $B=8$ for chemistry. If every elite lies in the target niche, it samples from the other elites, so the prompt is not confined to the parent. 

The seven niches are input geometry, spectral and periodic structure, feature projection, additive interaction, multiresolution and local structure, residual composites, and an open-ended niche. The task prompt defines these labels, and the agent declares one label in its formulation (Appendix~\ref{app:agent-prompts}). The scheduler fills empty niches before it cycles through occupied ones. We store formulations, candidate drafts, and parameter trials separately, so a tuning trial never becomes an evolutionary descendant. Formally, each accepted candidate $\kappa$ occupies a cell
\begin{equation}
  c(\kappa) = \big(\mathrm{niche}(\kappa),\, \mathrm{growth}(\kappa),\, \mathrm{band}(\kappa)\big),
  \label{eq:cell}
\end{equation}
where $\mathrm{niche}(\kappa)$ is the declared construction niche, $\mathrm{growth}(\kappa)$ the feature-growth descriptor, and $\mathrm{band}(\kappa)$ the novelty band, obtained by bucketing the functional-novelty distance $d_{\mathrm{nov}}$ of Equation~\eqref{eq:functional-novelty} into four bands: duplicate ($d_{\mathrm{nov}}<0.03$), near\_reference ($<0.08$), distinct ($<0.15$), and strongly\_distinct ($\ge0.15$). Each cell keeps only its best-scoring candidate as the elite. Figure~\ref{fig:archive-map} shows how the grid fills over a campaign.

\begin{figure}[t]
\centering
\includegraphics[width=\textwidth]{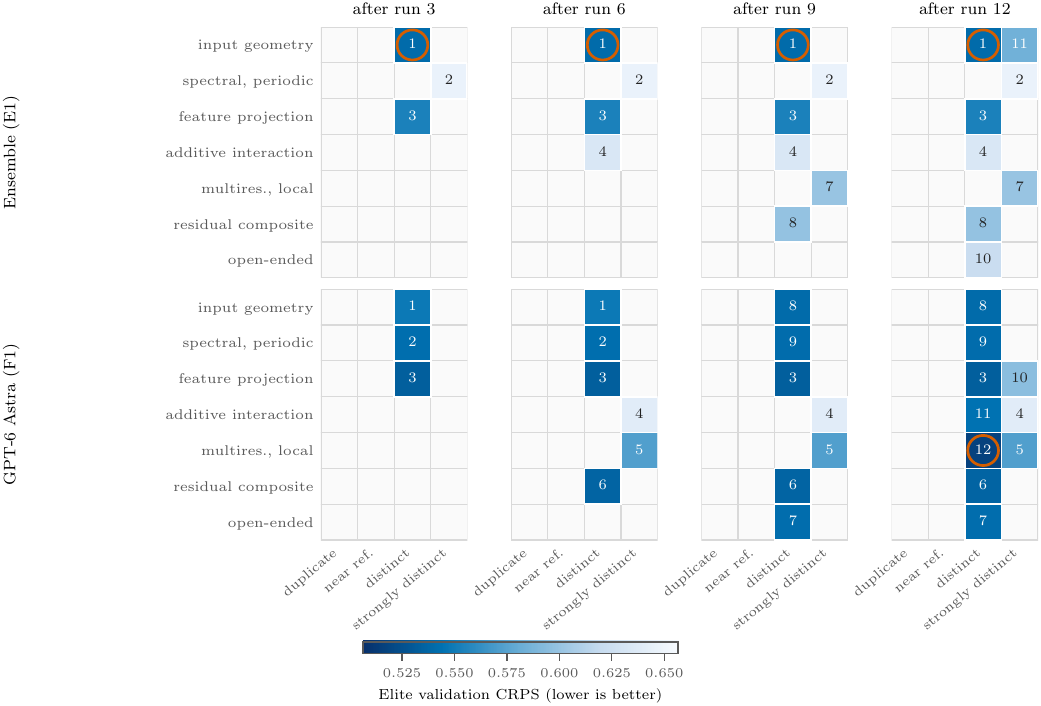}
\caption{\textbf{The archive fills niche by niche over a campaign.} MAP-Elites archive cells after runs 3, 6, 9, and 12 for the first ensemble campaign (E1, top) and the first GPT-6 Astra campaign (F1, bottom). Rows are niches and columns are novelty bands. Each filled cell shows its current elite (best search score), colored by validation CRPS (darker is better) and labeled with the run that produced it. A ring marks the validation-selected program, which is \dwf{} in the first run of E1. Cells are indexed by niche and novelty band only, so the feature-growth descriptor of Equation~\eqref{eq:cell} is omitted.}
\label{fig:archive-map}
\end{figure}

Algorithm~\ref{alg:archive-update} shows the session inside one run. In each round the model chooses a tool, and the backend's result stays in the session so the next round can revise. A failed check therefore costs a round but does not start a new session, and a tuning trial is never an archive member. A program enters the archive only when the session returns a search score, and it replaces an elite only within its own cell. Algorithm~\ref{alg:validation-selection} is a separate procedure that runs after the search ends. It ranks frozen programs by validation CRPS alone, so search scores and cell membership play no role. Test outcomes are opened once, after the validation choice is fixed.

\begin{algorithm}[t]
\caption{One agent session}
\label{alg:archive-update}
\begin{algorithmic}[1]
\STATE \textbf{Config:} round budget $N_{\mathrm{round}}$, tool-call budget $N_{\mathrm{call}}$, tuning budget $N_{\mathrm{tune}}$
\STATE \textbf{Agent actions} (structured tool calls that trusted backend code executes):
\STATE \textbf{propose\_formulation.} Store the fields in Table~\ref{tab:formulation-fields}, including a niche, and return a formulation ID. If a field is missing, return its name to the same session.
\STATE \textbf{stage\_candidate.} Send the Python source directly in the tool call with its formulation ID, initial setting, and parameter ranges. Store the draft. Reject a cited parent whose parsed source is unchanged.
\STATE \textbf{verify\_staged\_candidate.} Tier~0 checks execution and shape, Tier~1 checks sampled positive semidefiniteness, and Tier~2 checks contract assembly and purity. On failure, return the diagnostic to the same session so the agent can revise the draft in a later round. A failed draft does not enter the archive. Novelty affects the score after verification, but it does not cause a verification failure.
\STATE \textbf{optimize\_parameters.} Score the initial setting and up to $N_{\mathrm{tune}}-1$ Halton settings on $D_{\mathrm{tr}}$ only.
\STATE \textbf{submit\_staged\_candidate.} Repeat the Tier-2 checks at the best setting, and freeze the program if they pass.
\STATE \textbf{evaluate\_candidate.} Score that frozen program by Equation~\eqref{eq:selection-score} and attach the cell of Equation~\eqref{eq:cell}.
\STATE \textbf{query\_archive} or \textbf{inspect\_candidate.} Return stored program metadata and source to the session. The agent does not receive file-system paths.
\STATE
\FUNCTION{$\textsc{AgentSession}(\mathrm{niche}^\star,\mathrm{parent},\mathcal I,D_{\mathrm{tr}})$}
  \STATE open a fresh session showing the task, niche $\mathrm{niche}^\star$, the parent, and inspirations $\mathcal I$
  \FOR{$r=1,\ldots,N_{\mathrm{round}}$}
    \STATE query the model and run its tool calls (at most $N_{\mathrm{call}}$ in total) \hfill $\triangleright$ results return to the same session
    \IF{the reply has no tool call}
      \STATE \textbf{return} $(\varnothing,\cdot,\cdot)$ \hfill $\triangleright$ the agent gave up
    \ENDIF
    \IF{a call scored a Tier-2 program $\kappa$}
      \STATE \textbf{return} $(\kappa,\ \text{search score},\ c(\kappa))$ \hfill $\triangleright$ back to Alg.~\ref{alg:loop}
    \ENDIF
  \ENDFOR
  \STATE \textbf{return} $(\varnothing,\cdot,\cdot)$ \hfill $\triangleright$ out of rounds
\ENDFUNCTION
\end{algorithmic}
\end{algorithm}

\paragraph{Lineage classification.}
A candidate with no declared parent is an independent root. Any other candidate counts as a genuine revision only when its parsed source differs from its parent's, so a pure hyperparameter change is classified as superficial.

\paragraph{Parameter tuning.}
The \texttt{optimize\_parameters} tool uses a gradient-free search over the parameters that a program declares. For each parameter, the agent specifies a linear or logarithmic range or an ordered list of categorical values. For a Halton coordinate $u\in[0,1)$ and $J$ choices, the tuner selects choice $\min(\lfloor Ju\rfloor,J-1)$. The backend evaluates the agent's initial setting and up to $N_{\mathrm{tune}}-1$ additional settings from a deterministic Halton sequence \citep{halton1960efficiency}. It returns the setting with the best meta-training score. This procedure supports discrete parameters and agent-written programs that are not differentiable. It also gives every candidate a fixed tuning budget. The default is 16 trials, with a maximum of 64. We use 64 trials for BBO and forecasting and 32 for chemistry.

A Halton sequence distributes the settings across the declared ranges. Each parameter uses a different prime base from 2 through 37, so the tuner supports at most 12 parameters. Tuning uses only meta-training episodes. Its predictive protocol is cheaper than final evaluation because it uses fewer observations, fewer held-out points, and one episode per family. A BBO tuning trial uses $n=4+2d$ training points and 16 held-out points. A forecasting tuning trial uses a 24-month horizon. The backend repeats the Tier~2 checks on the best setting before freezing the program.

The tuner changes only parameters that the agent exposes. For example, every accepted forecasting program computes its raw spectral weights from exposed parameters. Tuning therefore determines these weights rather than retaining the LLM's initial values. Trial fitness also subtracts the heuristic complexity penalty $10^{-3}\log(1+p)$ for $p$ exposed parameters. We fixed this coefficient before the campaigns. Since $p\le12$, the penalty is at most $0.0026$ on the relative-improvement score scale. It is constant across one program's trials, so it cannot change which setting the tuner selects. It does not enter the archive selection score of Equation~\eqref{eq:selection-score}. It only shifts the tuning scores that the agent sees for programs with different parameter counts.

\paragraph{Selection score.}
For BBO, a program's score on meta-training episodes combines four relative improvements over the tuned baselines. For every lower-is-better metric $q$, the relative improvement is $(q_{\mathrm{base}}-q_{\mathrm{cand}})/q_{\mathrm{base}}$: $\Delta_{\mathrm{C}}$, the mean relative CRPS improvement over the best baseline on each episode, $\Delta_{\mathrm{AUC}}$ and $\Delta_{\mathrm{F}}$, the improvements in regret AUC and final regret over the best baseline, and $\Delta_{\mathrm{W}}$, the CRPS improvement on the worst family. The BBO score is
\begin{equation}
  s_{\mathrm{BBO}}=w_{\mathrm{C}}\Delta_{\mathrm{C}}+w_{\mathrm{AUC}}\Delta_{\mathrm{AUC}}
    +w_{\mathrm{F}}\Delta_{\mathrm{F}}+w_{\mathrm{W}}\Delta_{\mathrm{W}}
    -\Omega_{\mathrm{stab}}-\Omega_{\mathrm{nov}},
  \label{eq:selection-score}
\end{equation}
For domains without BBO diagnostics, the score is
\begin{equation}
  s_{\mathrm{pred}}=\Delta_{\mathrm{C}}-\Omega_{\mathrm{stab}}-\Omega_{\mathrm{nov}}.
  \label{eq:prediction-selection-score}
\end{equation}
The Thompson-sampling bandit of Appendix~\ref{app:metabo} uses the applicable score as its reward. The BBO score has nonnegative weights $w_{\mathrm{C}},w_{\mathrm{AUC}},w_{\mathrm{F}},w_{\mathrm{W}}$, where $\Omega_{\mathrm{stab}}=0.01\max(0,\log_{10}\mathrm{cond}(K)-4)$ penalizes ill-conditioned Gram matrices and $\Omega_{\mathrm{nov}}$ is the novelty penalty below. We use $(w_{\mathrm{C}},w_{\mathrm{AUC}},w_{\mathrm{F}},w_{\mathrm{W}})=(0.55,0.30,0.10,0.05)$, which makes predictive quality the main objective, gives regret AUC a substantial secondary role, and uses smaller terms for final regret and worst-family robustness. These values were fixed before the campaigns and were not fitted to validation or test results. The weights guide search only: they decide which programs become elites, and the bandit uses the applicable score as its reward. The frozen program that we report is chosen by validation CRPS alone (\S\ref{sec:metaeval}), so the weights cannot change it directly. The sensitivity analysis below checks how much they matter.

\paragraph{Novelty penalty.}
With the functional novelty distance $d_{\mathrm{nov}}$ in Equation~\eqref{eq:functional-novelty} and a threshold $\tau_{\mathrm{nov}}$ fixed before evaluation, the penalty is
\begin{equation}
  \Omega_{\mathrm{nov}}
  = \lambda_{\mathrm{nov}}\,\max\!\Big(0,\frac{\tau_{\mathrm{nov}}-d_{\mathrm{nov}}}{\tau_{\mathrm{nov}}}\Big)\,
    \Big(1-\min\big(\gamma\max(\Delta_{\mathrm{C}},0),\,1\big)\Big).
  \label{eq:novelty-penalty}
\end{equation}
The first factor grows linearly as the candidate moves closer to a reference kernel, up to $\lambda_{\mathrm{nov}}$. The second factor removes the penalty for a candidate that is clearly better, shrinking linearly to zero at a CRPS improvement of $1/\gamma$. A weak near-duplicate pays the full penalty, while a strong one is kept. We use $\tau_{\mathrm{nov}}=0.08$, $\lambda_{\mathrm{nov}}=0.1$, and $\gamma=5$, so the penalty vanishes at a $20\%$ improvement. The penalty has no units, because $d_{\mathrm{nov}}$ is compared with a fixed threshold and the second factor uses a relative CRPS improvement. It therefore does not depend on the raw scale of CRPS. In practice it rarely applied. Across the 14 BBO campaigns, 4 of the 160 scored discoveries paid a nonzero penalty, and the largest was $0.073$.

\paragraph{Weight sensitivity.}\label{app:weight-sensitivity}
We check how much the weights $(w_{\mathrm{C}},w_{\mathrm{AUC}},w_{\mathrm{F}},w_{\mathrm{W}})$ of the selection score $s$ in Equation~\eqref{eq:selection-score} matter by re-scoring the discovered candidates of the first BBO archive under other weightings. Nine of the ten have all four components $\Delta_{\mathrm{C}},\Delta_{\mathrm{AUC}},\Delta_{\mathrm{F}},\Delta_{\mathrm{W}}$ stored, so we re-score these nine. Besides the original weights, we try an AUC-heavy setting that swaps the original CRPS and AUC weights and keeps the final-regret and worst-family weights, an equal-weight setting, and an extreme CRPS-only setting that sets all other weights to zero. For each setting we record where \dwf{} ranks among the nine and how many of the nine would still be archive elites (Table~\ref{tab:weight-sensitivity}).

\begin{algorithm}[t]
\caption{Validation selection and sealed test evaluation}
\label{alg:validation-selection}
\begin{algorithmic}[1]
\FUNCTION{$\textsc{ValidationSelect}(\mathcal C,D_{\mathrm{val}},D_{\mathrm{test}})$}
  \FOR{each frozen program $\kappa\in\mathcal C$}
    \STATE $\widehat R_{\mathrm{val}}(\kappa)\gets$ mean CRPS on $D_{\mathrm{val}}$ \hfill $\triangleright$ structure fixed, only amplitude and noise refit
  \ENDFOR
  \STATE $\kappa^\star\gets\arg\min_{\kappa\in\mathcal C}\widehat R_{\mathrm{val}}(\kappa)$ \hfill $\triangleright$ search scores and cells play no role
  \STATE \textbf{return} $\kappa^\star$ and its score on $D_{\mathrm{test}}$ \hfill $\triangleright$ test is opened once, after the choice
\ENDFUNCTION
\end{algorithmic}
\end{algorithm}

\begin{table}[t]
\caption{Sensitivity of the first BBO archive to the selection weights. Each row re-scores the nine programs under one weighting of $(\Delta_{\mathrm{C}},\Delta_{\mathrm{AUC}},\Delta_{\mathrm{F}},\Delta_{\mathrm{W}})$ in Equation~\eqref{eq:selection-score}, using the four stored components. ``\dwf{} rank'' is the position of \dwf{} among the nine by search score, where 1 is best. ``Elites kept'' counts the programs that remain elites of their cells.}
\label{tab:weight-sensitivity}
\centering
\small
\begin{tabular}{@{}llcc@{}}
\toprule
Weighting & Weights & \dwf{} rank & Elites kept \\
\midrule
Original & $(.55,.30,.10,.05)$ & 1 & 9/9 \\
AUC-heavy & $(.30,.55,.10,.05)$ & 1 & 9/9 \\
Equal & $(.25,.25,.25,.25)$ & 1 & 9/9 \\
CRPS-only & $(1,0,0,0)$ & 1 & 9/9 \\
\bottomrule
\end{tabular}
\end{table}

\dwf{} ranks first under every weighting, and the top-three order is unchanged: \dwf{}, simple spectral warp, then reversed index-bump dual-frequency warp. All nine programs also stay elites. The CRPS-only setting keeps this order and has Kendall's $\tau=0.89$ against the original ranking. Under it, the top-three pool after three intermediate runs differs, but the final pool does not. The ordering of discovered programs therefore does not hinge on the weights we chose. This replay is not a counterfactual rerun, because different rewards could have changed parent selection and the later proposals. The reported kernel is anyway chosen by validation CRPS alone. Among all ten frozen discoveries, including the one without stored search components, \dwf{} has the lowest validation CRPS at $0.540$, followed by simple spectral warp at $0.553$.

\paragraph{Contract interfaces.}
A feature-map or pullback candidate implements a point function that the interpreter calls once per input and checks for finite outputs of one fixed dimension. A spectral candidate implements \path{spectral_components(input_dimension, parameters)} and returns frequencies together with raw weights that are computed from tuned parameters, and the interpreter squares the raw weights so any real value gives a nonnegative spectral measure.\footnote{Our task prompts describe frequencies in cycles per unit, while the interpreter uses angular frequency, $\cos(\omega^\top(x-y))$ without a factor $2\pi$. Validity is unaffected, and every reported score uses the interpreter's convention, but a registered description can misstate the period of a spectral program by this factor.} Certified source cannot use module-level state, randomness, reflection, decorators, or mutable defaults.

\subsection{Evaluation and Registration Details}
\label{app:protocol}

This section specifies how a candidate is checked, how a formulation is registered, and how the reported scores are computed. The ladder says what each tier establishes. The fingerprint and the CRPS formula then make the novelty score and the predictive metric fully explicit.

\paragraph{Evidence ladder.}

\begin{table}[t]
\caption{Implemented evidence ladder.}
\label{tab:tiers}
\centering
\small
\begin{tabular}{@{}p{0.07\linewidth}p{0.20\linewidth}p{0.60\linewidth}@{}}
\toprule
T & Label & What is established \\
\midrule
0 & Executable & Sandboxed execution succeeds and the output matches the interface \\
1 & Empirical & Randomized Gram-matrix tests pass on sampled input sets \\
2 & Contract certified & Checked interface conformance and trusted PSD-preserving assembly under the stated component assumptions \\
\bottomrule
\end{tabular}
\end{table}

\paragraph{Tier 0 and Tier 1.}
Tier 0 runs each candidate in isolated subprocess trials, with 8 trials of 12 points drawn from a seeded $\mathcal{N}(0,I)$ for each declared input dimension. Tier 1 adds an empirical PSD test on each trial's Gram matrix $K$,
\begin{equation}
  \lambda_{\min}\!\left(\tfrac{1}{2}(K+K^\top)\right) \ge -\eta, \qquad
  \max_{i,j}\lvert K_{ij}-K_{ji}\rvert \le \eta
  \label{eq:tier1}
\end{equation}
with tolerance $\eta=10^{-8}$, a numerical check of the PSD condition in Equation~\eqref{eq:psd}. A failed execution stops the candidate at Tier 0, and a failure of Equation~\eqref{eq:tier1} stops it at Tier 1.

\paragraph{Contract-conformance checks.}
Tier 2 also tests the five metamorphic relations in Table~\ref{tab:contract-checks}, with 2 trials of 12 points for each declared dimension. For example, if $X'$ appends new points to $X$, the upper-left block of $K(X',X')$ must equal $K(X,X)$. For each relation, we compare the interpreter's Gram matrix $K$ on the transformed batch with the matrix $\hat K$ implied by its Gram matrix on the original batch (for example a permuted or truncated copy), and we require
\begin{equation}
  \mathrm{err}(K,\hat K) = \frac{\max_{i,j}\lvert K_{ij}-\hat K_{ij}\rvert}
  {\max\big(\max_{i,j}\lvert K_{ij}\rvert,\,1\big)}
  \le \varepsilon,
  \label{eq:contract-error}
\end{equation}
with $\varepsilon=1.1\times10^{-7}$.

\begin{table}[t]
\caption{Tier 2 contract-conformance relations. See Equation~\eqref{eq:contract-error}.}
\label{tab:contract-checks}
\centering
\small
\begin{tabular}{@{}L{0.24\linewidth}L{0.66\linewidth}@{}}
\toprule
Relation & What must match \\
\midrule
Determinism & The same input, replayed twice \\
Permutation & The Gram of a permuted batch equals the permuted Gram \\
Subset & The Gram of a random subset equals the corresponding submatrix \\
Extension & Appending then truncating points reproduces the original block \\
Duplicate & A duplicated point reproduces its own row \\
\bottomrule
\end{tabular}
\end{table}

\paragraph{Sandbox implementation.}
Each Gram evaluation runs in its own subprocess, under a 20\,s wall-clock limit, 1024\,MB of memory, 15\,s of central processing unit (CPU) time, 64 open files, and 1\,MB of captured output (truncated). The environment is restricted to single-threaded Basic Linear Algebra Subprograms (BLAS), and the hash seed is fixed so a replay follows the same code paths. The sandbox keeps a failing program from taking down the rest of the run. Code from an untrusted third party would need a container backend for stronger isolation.

\paragraph{Formulation fields.}
Before any code is submitted (\S\ref{sec:functional-novelty}), the agent registers the 14 fields in Table~\ref{tab:formulation-fields}. The backend stores these strings and returns a formulation identifier. The later staging call sends that identifier with the Python source, so it does not repeat the strings or pass a file path. The fields state the agent's intended construction and predicted behavior. They are not verification results. The backend uses only the declared niche and feature growth to index the archive. Researchers use the other fields to inspect novelty claims and compare intended behavior with the frozen program. Trusted execution determines acceptance and numerical scores. Appendix~\ref{app:agent-prompts} shows the exact tool schema and examples. Discovery mode excludes a trivial rediscovery of a closure baseline. Closure may still combine a genuinely new certified primitive with a safe baseline.

\begin{table}[t]
\caption{Formulation fields registered before code submission (\S\ref{sec:functional-novelty}).}
\label{tab:formulation-fields}
\centering
\small
\begin{tabular}{@{}L{0.42\linewidth}L{0.52\linewidth}@{}}
\toprule
Field & Purpose \\
\midrule
\path{mathematical_form} & Kernel expression and construction \\
\path{psd_argument} & Proof sketch for positive semidefiniteness \\
\path{novelty_claim} & What is new relative to known kernels \\
\path{closest_known_kernel} & Nearest baseline and how this differs \\
\path{expected_bo_behavior} & Predicted BBO performance characteristics \\
\path{construction_niche} & Declared construction category \\
\path{equivalence_analysis} & Why this is not equivalent to a known kernel \\
\path{optimized_parameters} & Suggested initial parameter values \\
\path{domain_scale_analysis} & Input-scale and periodicity considerations \\
\path{irrelevant_coordinate_mechanism} & How irrelevant inputs are handled \\
\path{cross_coordinate_mechanism} & How input coordinates interact \\
\path{falsification_test} & A test that would disconfirm the proposal \\
\path{feature_growth} & How feature dimension scales with input \\
\path{expected_conditioning} & Expected Gram matrix condition number \\
\bottomrule
\end{tabular}
\end{table}

\paragraph{Behavioral fingerprint.}
Equation~\eqref{eq:functional-novelty} compares kernels through the normalized centered Gram matrices $\bar K_{\kappa}(X_d)$. For each probe dimension $d\in\mathcal{D}=\{2,5,8\}$, the probe set $X_d$ is a fixed set of $n_{\mathrm{probe}}=16$ points drawn uniformly from $[0,1]^d$ with a fixed seed, so every kernel is compared on the same inputs. Let $K_{\kappa}(X_d)\in\mathbb{R}^{n\times n}$ be the Gram matrix of $\kappa$ on $X_d$, and let $H=I-\frac{1}{n}\mathbf{1}\mathbf{1}^{\top}$ be the centering matrix. Then
\begin{equation}
  \bar K_{\kappa}(X_d)=\frac{H K_{\kappa}(X_d) H}
  {\lVert H K_{\kappa}(X_d) H\rVert_F}.
  \label{eq:fingerprint}
\end{equation}
For centered Frobenius norm below $10^{-12}$, the implementation returns the zero matrix. Otherwise, squared fingerprint distance is two minus twice centered kernel alignment~\citep{cortes2012alignment}. Centering removes the constant direction and normalization removes positive amplitude. This is a covariance-shape diagnostic, not a guarantee of predictive equivalence.

\paragraph{Continuous ranked probability score.}
For a Gaussian predictive distribution $\mathcal{N}(\mu,\sigma^2)$ and an observed value $y$, write $z=(y-\mu)/\sigma$, let $\phi$ be the standard normal density, and let $\Phi$ be the standard normal cumulative distribution function (CDF). The closed form \citep{gneiting2007strictly} is
\begin{equation}
  \mathrm{CRPS}\big(\mathcal{N}(\mu,\sigma^2),y\big)
  = \sigma\Big[z\big(2\Phi(z)-1\big)+2\phi(z)-\tfrac{1}{\sqrt{\pi}}\Big].
  \label{eq:crps}
\end{equation}
We report the mean of Equation~\eqref{eq:crps} over held-out points, taking the GP posterior mean and variance at each point as $\mu$ and $\sigma^2$.

\subsection{Soundness of Validation Selection}
\label{app:selection-guarantee}

Search proposes and tunes programs using meta-training episodes only. Validation then chooses one of the resulting frozen programs (\S\ref{sec:metaeval}). Because the validation episodes play no role in search, choosing among the frozen programs is an ordinary model-selection problem over a finite set, and the standard argument applies.

\paragraph{Setup.} Let $\kappa_1,\dots,\kappa_N$ be the $N$ frozen programs that enter validation, each with its search-time parameters fixed. Let $\mathcal{E}$ be the distribution of meta-validation episodes, which first picks a family uniformly at random and then draws an episode from it. For an episode $e$, let $\mathcal{L}(\kappa,e)$ be the score of $\kappa$ after the backend refits amplitude and noise on the episode's training points, for example the mean held-out CRPS of Equation~\eqref{eq:crps}. Lower scores are better. Proposition~\ref{prop:selection} assumes that every score lies in $[0,\mathcal{L}_{\max}]$. The reported evaluator does not clip CRPS, so the proposition is a conditional model-selection result rather than a certificate for the realized finite sample. Validation uses $m$ independent episodes $e_1,\dots,e_m$, with the same number drawn from each family. Define
\begin{equation*}
  R(\kappa) = \mathbb{E}_{e\sim\mathcal{E}}\bigl[\mathcal{L}(\kappa,e)\bigr],
  \qquad
  \widehat{R}(\kappa) = \frac{1}{m}\sum_{j=1}^{m}\mathcal{L}(\kappa,e_j),
  \qquad
  \hat\kappa \in \operatorname*{arg\,min}_{i\le N}\widehat{R}(\kappa_i).
\end{equation*}
$R(\kappa)$ is the expected score on a fresh validation episode, $\widehat{R}(\kappa)$ is its empirical estimate, and $\hat\kappa$ is the program that validation selects.

\begin{proposition}[Validation selection]
\label{prop:selection}
Suppose the candidates $\kappa_1,\dots,\kappa_N$ are independent of the meta-validation episodes. Then for every $\delta\in(0,1)$, with probability at least $1-\delta$,
\begin{equation*}
  R(\hat\kappa) \;\le\; \min_{i\le N} R(\kappa_i)
  + \mathcal{L}_{\max}\sqrt{\frac{2\log(2N/\delta)}{m}}.
\end{equation*}
\end{proposition}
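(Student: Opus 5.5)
The plan is the standard finite-class argument: a uniform concentration bound over the $N$ frozen candidates, followed by the usual comparison of the selected program with the best one. We first condition on the candidates $\kappa_1,\dots,\kappa_N$. By the independence assumption, the validation episodes $e_1,\dots,e_m$ keep their sampling distribution under this conditioning. So for each fixed $i$, the losses $\mathcal{L}(\kappa_i,e_j)$, $j=1,\dots,m$, are independent and take values in $[0,\mathcal{L}_{\max}]$. All probabilities below are conditional, and the final statement follows by integrating over the candidates.

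Next we check that $\widehat{R}(\kappa_i)$ is unbiased for $R(\kappa_i)$, even though validation draws a stratified sample rather than $m$ i.i.d.\ draws from $\mathcal{E}$. Say there are $F$ families and each contributes $m/F$ episodes. Then $\mathbb{E}\,\widehat{R}(\kappa_i)$ is the uniform average of the per-family expected scores, which equals $R(\kappa_i)$ because $\mathcal{E}$ picks the family uniformly. Hoeffding's inequality only needs independent bounded summands, not identically distributed ones. It therefore gives
\begin{equation*}
  \Pr\bigl(|\widehat{R}(\kappa_i)-R(\kappa_i)|>t\bigr)\le 2\exp\!\bigl(-2mt^2/\mathcal{L}_{\max}^2\bigr).
\end{equation*}
A union bound over the $N$ candidates, with $t=\mathcal{L}_{\max}\sqrt{\log(2N/\delta)/(2m)}$, shows that with probability at least $1-\delta$ every candidate satisfies $|\widehat{R}(\kappa_i)-R(\kappa_i)|\le t$.

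On this event, let $i^\star$ minimize $R(\kappa_i)$. Then
\begin{equation*}
  R(\hat\kappa)\le\widehat{R}(\hat\kappa)+t\le\widehat{R}(\kappa_{i^\star})+t\le R(\kappa_{i^\star})+2t,
\end{equation*}
where the middle inequality holds because $\hat\kappa$ minimizes $\widehat{R}$. Finally,
\begin{equation*}
  2t=\mathcal{L}_{\max}\sqrt{2\log(2N/\delta)/m},
\end{equation*}
which is exactly the stated bound.

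The main obstacle is not the calculation but the precise justification of the first step. Independence of the candidates from the validation data is what allows the $\kappa_i$ to be treated as fixed, so that a union bound over $N$ objects suffices even though the program space is huge. The stratified design also has to be handled explicitly through the non-identical form of Hoeffding's inequality. I would state both points carefully and note that the boundedness assumption $\mathcal{L}\in[0,\mathcal{L}_{\max}]$ enters only through Hoeffding's inequality.
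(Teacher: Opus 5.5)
Your proposal is correct and follows the paper's proof exactly. Both condition on the frozen candidates and apply Hoeffding to each $\widehat{R}(\kappa_i)$, whose unbiasedness comes from the equal per-family allocation. Both then union-bound with $t=\mathcal{L}_{\max}\sqrt{\log(2N/\delta)/(2m)}$ and chain through the empirical minimizer to get $R(\hat\kappa)\le R(\kappa_{i^\star})+2t$. Your explicit appeal to the non-identically-distributed form of Hoeffding for the stratified sample simply makes precise what the paper states in one line.
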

\begin{proof}
Condition on the candidates. For each $i$, the scores $\mathcal{L}(\kappa_i,e_1),\dots,\mathcal{L}(\kappa_i,e_m)$ are independent and lie in $[0,\mathcal{L}_{\max}]$, and their mean has expectation $R(\kappa_i)$ because every family contributes equally. Hoeffding's inequality gives $\Pr\bigl(|\widehat{R}(\kappa_i)-R(\kappa_i)|\ge t\bigr)\le 2e^{-2mt^2/\mathcal{L}_{\max}^2}$. A union bound over the $N$ candidates with $t=\mathcal{L}_{\max}\sqrt{\log(2N/\delta)/(2m)}$ gives $|\widehat{R}(\kappa_i)-R(\kappa_i)|<t$ for all $i$ with probability at least $1-\delta$. On this event, for $i^\star\in\arg\min_i R(\kappa_i)$, $R(\hat\kappa) < \widehat{R}(\hat\kappa)+t \le \widehat{R}(\kappa_{i^\star})+t < R(\kappa_{i^\star})+2t$. The bound holds for every candidate set, and hence unconditionally.
\end{proof}

Proposition~\ref{prop:selection} applies the standard generalization bound for empirical risk minimization over a finite hypothesis class \citep[Chapter~4]{shalevshwartz2014understanding}. The bound depends on the number of programs that reach validation. It does not depend on program length, the size of the program space, or the number of search proposals. If selection reused the search episodes, the guarantee would need to cover every program that the proposer could generate. Such bounds grow with program length \citep{singhal2026llmpriors}. Separate search and validation splits therefore preserve the finite-selection guarantee despite the open-ended program space. The same condition explains why a kernel selected in hindsight on evaluation data can have an optimistic score and fail to transfer. The best discovery on SF$_6$ provides one example (\S\ref{sec:results-ts}).

\paragraph{The cost of validation.} The gap grows only as $\sqrt{\log N}$ in the number of candidates but shrinks as $1/\sqrt{m}$ in the number of validation episodes. Adding candidates is therefore cheap for the bound, while tightening it requires more validation episodes. Validation episodes are not free. In BO, scoring a program on an episode can require a full rollout of several acquisition steps on a fresh task, and in a real application every such episode costs function evaluations.

\paragraph{What the guarantee covers.}
\begin{itemize}
  \item \emph{Families.} The guarantee applies to the meta-validation families. Transfer to the withheld meta-test families is established empirically (\S\ref{sec:results}).
  \item \emph{Independence.} The assumption holds for BBO, whose episodes use disjoint seeds. It holds approximately for forecasting, whose windows partly overlap.
  \item \emph{Selection only.} The guarantee covers the choice among proposed programs. How often the proposer produces a strong candidate is measured empirically, since \method's proposals depend on the archive and are not independent draws.
  \item \emph{Constants.} The bound is loose at our sample sizes. For the BBO campaign, with $N=10$, $m=50$ and $\delta=0.05$, the gap term is about $0.49\mathcal{L}_{\max}$. The paired intervals of \S\ref{sec:results} are what show which kernel is better.
\end{itemize}

\paragraph{Why a held-out score rather than the marginal likelihood.} The log marginal likelihood measures how probable the training data are under the prior, but this training-data criterion can favor a prior that explains observed inputs without producing the best posterior predictions on new inputs \citep{lotfi2023bayesian}. \method therefore uses the marginal likelihood only to refit amplitude and noise within an episode, and selects programs by held-out CRPS on episodes that search never saw.

\subsection{Randomized Screening Stress Test}
\label{app:screening-stress}

This experiment asks whether a finite numerical screen can stand in for a validity proof. We fixed the candidate generator, the public screen, and a withheld suite before looking at any outcome. The main counts include all parseable programs produced by the model, without manual selection. Diagnostic canaries and certified controls then check that the protocol catches known blind spots and still accepts valid constructions.

\paragraph{Unrestricted candidate generation.}
We asked the ensemble proposer for 40 kernel programs, in eight independent calls of five candidates each. Each response supplied Python source implementing \texttt{kernel\_matrix(x, parameters)} with only NumPy or \texttt{math}, together with a parameter dictionary and a short rationale. The prompt asked for mathematically diverse kernels on inputs of unknown dimension, and it said that acceptance would use a finite randomized PSD screen. It did not reveal the withheld tests, and it did not require a feature-map, closure, pullback, or spectral construction. We kept all 40 parseable responses, with no manual selection. We ran this generation twice with the same prompt, and once with a formulation-first prompt that additionally required a complete proof sketch, before the code, that each program returns a symmetric PSD matrix for every input set, size, and dimension, with each entry depending only on its own pair of points. This mirrors the formulation step of the search (\S\ref{sec:functional-novelty}). Each execution ran in a sandbox with a 20-second wall-clock limit, 15 CPU seconds, 1~GB of memory, and a 1~MB output limit.

\paragraph{Frozen randomized screen.}
The Tier-1 screen evaluates each program on eight independent matrices $X\in\mathbb{R}^{12\times 2}$ drawn from $\mathcal{N}(0,I)$. For each resulting Gram matrix $K$, it requires
\begin{equation}
  \begin{aligned}
  \max_{i,j}|K_{ij}-K_{ji}| &\leq 10^{-8},\\
  \lambda_{\min}\!\left((K+K^\top)/2\right) &\geq -10^{-8}.
  \end{aligned}
  \label{eq:stress-screen}
\end{equation}
A program must also execute successfully and return the required matrix shape on all eight trials. We freeze the program bank after this screen, and only the programs that pass proceed to the unseen suite.

\paragraph{Unseen inputs and consistency checks.}
The withheld suite contained 84 Gram-matrix cases, spanning input dimensions $d\in\{1,2,3,5,8,16,32\}$ and matrix sizes from 5 to 64. It included standard-normal and uniform inputs, coordinate scales from $10^{-4}$ to $10^4$, large translations, and duplicated, nearly duplicated, collinear, and tightly clustered points. Forty of the 84 cases were randomized mixtures of dimension, matrix size, distribution, scale, and translation. We applied the symmetry and eigenvalue criteria of Equation~\eqref{eq:stress-screen} to every case.

We also checked determinism, permutation equivariance, and consistency under subsetting, extension, and duplication, in dimensions 2 and 8. These relations are necessary for a program to define a pairwise kernel. In particular, if $X'$ is formed by appending $Z$ to $X$, the upper-left block of $K(X',X')$ must equal $K(X,X)$, because the covariance assigned to a fixed pair cannot depend on unrelated points elsewhere in the batch. We marked relative discrepancies above $10^{-7}+10^{-8}$ as failures.

\paragraph{Controls and canaries.}
Four Tier 2 controls cover the four certified interfaces, and five diagnostic canaries were written to pass the public draws and then fail on an untested dimension, size, range, duplicated point, or batch-dependent normalization. The canaries are excluded from the headline counts.

\begin{figure*}[t]
\centering
\includegraphics[width=0.95\textwidth]{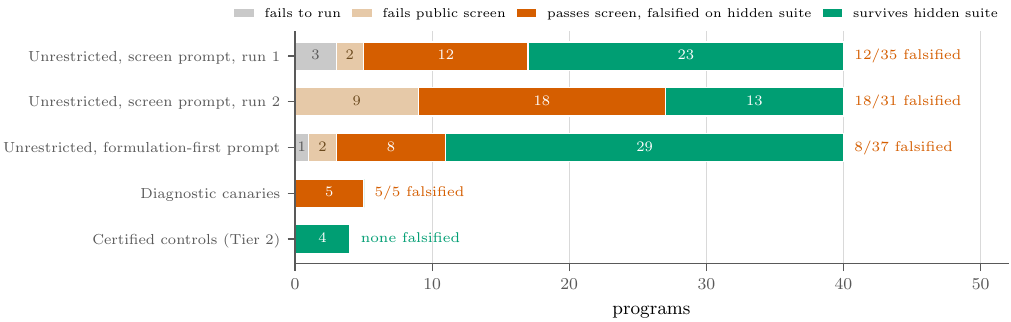}
\caption{\textbf{Finite numerical screening admits invalid programs.} Outcome of each cohort of kernel programs, split into programs that fail to run, fail the public Tier-1 screen, pass it but are falsified on the hidden suite, and survive the hidden suite. Labels give the number falsified among the programs that passed the public screen. Canaries were written to pass the public screen and fail the hidden suite. Certified controls pass Tier~2. Unrestricted programs that survive the hidden suite are empirically unfalsified, not certified.}
\label{fig:stress-funnel}
\end{figure*}

\paragraph{Results and failure modes.}
The hidden suite exposes failures that the public screen misses (Figure~\ref{fig:stress-funnel}). In the first run, of 40 unrestricted programs, three fail to execute and two produce an indefinite Gram matrix during the public screen, leaving 35 Tier-1 passes. The hidden suite falsifies 12 of these. Six produce a non-PSD matrix, six violate a kernel-consistency relation, and two fail on an unseen shape. Two programs fail in more than one category.

The diagnostic cohorts behaved as intended in every run. All five canaries passed Tier~1 and were then falsified, and all four certified controls survived all 84 input cases and all consistency checks. The experiment demonstrates the practical limitation of the finite screen. In the first run it admitted 12 programs whose failures showed up only on a broader, and still finite, evaluation. The unrestricted programs that survived remain empirically unfalsified, and they are not certified.

\paragraph{Repeated generation and a formulation-first prompt.}
The second run with the same prompt admits 18 invalid programs among 31 passes, so the failure rate varies across generations. Of these 18 programs, 16 violate a consistency relation, usually because they normalize by statistics from the whole batch. Requiring a PSD argument before code reduces the failure rate to 8 of 37. The reduction is significant against the second run alone (Fisher exact $p=0.003$) and against both screen runs pooled ($p=0.02$). However, the formulation-first prompt does not eliminate invalid programs. Most remaining failures are implementation errors beneath a correct argument, such as parameter vectors sized for two input dimensions. Thus, a correct PSD argument does not make the submitted code a valid kernel on the full domain. Trusted assembly closes this gap.

\paragraph{Live screening test.}
We also test whether relaxing acceptance changes the live search. We rerun the BBO ablation protocol with a Tier-1 acceptance gate, keeping the proposer, context, and construction contracts identical to the full-archive arm. Across five seeded campaigns and 49 accepted candidates, the relaxed gate admits no extra program. Every accepted candidate also passes Tier~2.

The program bank and live search therefore answer different questions. The bank shows that finite numerical screening can accept invalid programs, and that a PSD argument written before the code does not prevent this. The live search admits no such program because its task context offers only the contract interfaces. Every accepted program therefore supplies point functions or frequencies, the interpreter assembles the Gram matrix, and the relaxed gate has nothing extra to admit. Appendix~\ref{app:search-ablation} gives the search protocol.

\paragraph{Live search with unrestricted programs admitted.}
We then extend the task context so that the agent may also submit an unrestricted \texttt{kernel\_matrix} program, accepted once it passes the Tier-1 screen. This arm uses the ensemble proposer. Across three seeded campaigns, the agent produces 35 accepted programs, and every one uses a construction contract. The task context describes the contract interfaces in more detail than the unrestricted option, which may favor contracts. On the withheld suite, no accepted program yields an indefinite matrix or violates a consistency relation. Six raise an error on a 16-dimensional input, twice the benchmark's maximum input dimension, because they size parameter arrays for at most eight coordinates. The interpreter then rejects the call instead of returning a matrix. Under the contracts, failures are therefore loud and confined to inputs outside the benchmark's domain, whereas the unrestricted programs in the bank fail silently. The three validation-selected kernels reach test CRPS $0.581$, $0.604$, and $0.610$, each below the fixed Mat\'ern-5/2 reference ($0.612$).

\section{Prediction and Optimization on BBO Benchmarks}
\label{app:bbo}

This appendix supports \S\ref{sec:results-bo}. It gives the shared function-family benchmark and search configuration, the predictive CRPS results, the separate GP-based BO protocol and regret results with paired tests, and the controls against meta-learned deep kernels and CAKE. It also reports the search ablations, the frontier proposer, and the robustness checks on a broader benchmark.

\subsection{Protocol and Search Configuration}
\label{app:metabo}

This section gives the shared task split, the predictive CRPS protocol used in \S\ref{sec:results-bo}, and the separate GP-based BO protocol evaluated by regret. It also specifies the baseline grids and the ensemble used for these BBO benchmark experiments.

\paragraph{Task families and splits.}
Training and validation use Branin ($d=2$), Ackley ($d=2$), Cosine ($d=8$), Hartmann-6 ($d=6$), and Levy ($d=6$). Test-only families are Bukin ($d=2$),
Drop-Wave ($d=2$), Griewank ($d=5$), H\"older Table ($d=2$), Rosenbrock ($d=6$), and
Rastrigin ($d=6$).

Episodes apply optimum-preserving randomizations drawn from a seeded generator. The seed of episode $j$ of family $i$ is $o+1000i+j$, with offset $o=0$ for training, $10^4$ for validation, and $2\times10^4$ for test, so the three splits never share random draws.
The randomizations are coordinate permutations, reflections, per-coordinate powers in $[0.75,1.25]$, an
invertible triangular cross-coordinate coupling with strengths in $[-1.5,1.5]$, and
up to two appended irrelevant coordinates (observed dimension capped at eight).
No observation noise is added. Training and test targets are the clean objective values, standardized by the training sample. The GP fit then chooses a diagonal noise variance from $\{10^{-4},10^{-3},10^{-2},10^{-1}\}$ by training marginal likelihood in Equation~\eqref{eq:amp-noise}. That fitted variance is a model parameter. The benchmark targets themselves stay clean. Table~\ref{tab:splits} summarizes the split under the meta-evaluation framework of \S\ref{sec:metaeval}.

\begin{table}[t]
\caption{Task-family instantiation and split (\S\ref{sec:metaeval}, \S\ref{sec:domains}). Meta-training and meta-validation use the same five families with disjoint seed offsets ($0$ and $10^4$). Meta-test uses six families never seen during search, at offset $2\times10^4$. Each family contributes 10 seeded episodes per split, giving 50 meta-training, 50 meta-validation, and 60 meta-test tasks. Targets are clean function values.}
\label{tab:splits}
\centering
\small
\begin{tabular}{@{}lcc@{}}
\toprule
Family & $d$ & Split(s) \\
\midrule
Branin & 2 & train, val \\
Ackley & 2 & train, val \\
Cosine & 8 & train, val \\
Hartmann-6 & 6 & train, val \\
Levy & 6 & train, val \\
\midrule
Bukin & 2 & test \\
Drop-Wave & 2 & test \\
Griewank & 5 & test \\
H\"older Table & 2 & test \\
Rosenbrock & 6 & test \\
Rastrigin & 6 & test \\
\bottomrule
\end{tabular}
\end{table}

\paragraph{Predictive protocol.}
Each episode fits on $n=8+2d$ uniformly sampled observations and evaluates 32 held-out
test points, and targets are standardized by training statistics. Every kernel is used in
$y=f(x)+\varepsilon$, where $f\sim\mathcal{GP}(0,\sigma_f^2\kappa_\theta)$ and
$\varepsilon\sim\mathcal{N}(0,\sigma_n^2)$. Candidate-specific hyperparameters $\theta$ remain
frozen, while the outer amplitude and noise are fit by
\begin{equation}
  (\hat\sigma_f^2,\hat\sigma_n^2)=\operatorname*{arg\,max}_{\sigma_f^2\in \mathcal{G}_f,\ \sigma_n^2\in \mathcal{G}_n}\ \log p(\mathbf{y}\mid X,\sigma_f^2\kappa_\theta,\sigma_n^2),
  \label{eq:amp-noise}
\end{equation}
with $\mathcal{G}_f=\{0.1,0.3,1,3\}$ and $\mathcal{G}_n=\{10^{-4},10^{-3},10^{-2},10^{-1}\}$. The predictive evaluation metric is mean held-out CRPS in Equation~\eqref{eq:crps}. Final kernel selection minimizes this metric on meta-validation episodes. The combined search-time score is defined separately in Equation~\eqref{eq:selection-score}.

\paragraph{BBO protocol.}
This optimization evaluation is separate from the predictive protocol above. Each rollout starts from $4+2d$ uniformly sampled points and takes $T=8$ expected improvement (EI) steps. At each step, we draw 128 fresh candidates uniformly from $[0,1]^d$ and choose the one with the largest EI. The backend refits amplitude and noise by Equation~\eqref{eq:amp-noise}. Let $f^\star$ be the task maximum and $y_t^\star$ the best value observed after step $t$. We report
\begin{equation}
 r_t=\frac{\max(f^\star-y_t^\star,0)}{\max(f^\star-y_0^\star,10^{-8})},
 \qquad \mathrm{RegretAUC}=\frac1T\sum_{t=1}^T r_t,
 \label{eq:regret-auc}
\end{equation}
where lower is better. Search-time rollouts instead use three EI steps and 64 fresh candidates per step. Their regret AUC and final regret contribute to the combined meta-training score in Equation~\eqref{eq:selection-score}, alongside held-out predictive CRPS. The final program selection uses validation CRPS alone.

\paragraph{Baselines.}
Fixed-kernel grids cover lengthscales $\{0.1,0.25,0.5,1\}$ for RBF, Mat\'ern-3/2, and Mat\'ern-5/2, with periodic, rational-quadratic, ARD, linear, and spectral-mixture variants alongside. The winner of each family by meta-training fitness enters the frozen comparison (Table~\ref{tab:metaresults}). Input warping \citep{snoek2014inputwarping}, a grid spectral mixture \citep{suwandi2025gsmp}, and random Fourier features \citep{rahimi2007random} are fit by marginal likelihood and archived as Tier 2 constructions under the same acceptance checks.

\paragraph{LLM ensemble and bandit.}
The ensemble uses three LLM backbones, GLM-5.2, DeepSeek-V4-Pro, and Qwen-3.8 Max. Each backbone runs inside the same tool-calling agent harness (Appendix~\ref{app:prompt-interface}). The CAKE baseline of \S\ref{sec:baselines} uses Qwen-3.8 Max. The ensemble beats its Qwen member used alone (Qwen-only test CRPS $0.616$ against the ensemble mean $0.593$, Appendix~\ref{app:revision-saved}), while a single frontier model beats the ensemble (Appendix~\ref{app:proposer-scaling}). A Thompson-sampling bandit selects one model per discovery run, and it keeps that choice for every round inside the run. Let $s_t$ be a run's best selection score in Equation~\eqref{eq:selection-score}. The running min-max normalized reward is
\begin{equation}
  \tilde s_t = \frac{s_t - \min_{t' \le t} s_{t'}}{\max_{t' \le t} s_{t'} - \min_{t' \le t} s_{t'}} \in [0,1]
  \label{eq:bandit-reward}
\end{equation}
We set $\tilde s_t=\tfrac12$ when the denominator is zero. For arm $i$, let $W_i$ be the most recent 20 runs on that model. The Beta counts are
\begin{equation}
  \alpha_i = \alpha_0 + \sum_{t \in W_i} \tilde s_t, \qquad
  \beta_i = \beta_0 + \sum_{t \in W_i} (1 - \tilde s_t),
  \label{eq:bandit-posterior}
\end{equation}
with uniform priors $\alpha_0=\beta_0=1$. The bandit draws $\vartheta_i \sim \mathrm{Beta}(\alpha_i,\beta_i)$ for every arm and selects the arm with the largest draw for the next run. A round-robin warmup continues until every arm has completed at least 2 runs, and Thompson sampling starts after that.
\paragraph{Search configuration and budget.}
Proposals come from the three-model ensemble, selected by the bandit
above. Twelve discovery runs (each
capped at ten rounds and one hundred tool calls) produced ten Tier 2 verified pullback
programs, consuming 482 backend parameter-tuning trials of up to 64 per program, while
the other two runs terminated without a verified submission. Every accepted program
passed Tier 2 verification before submission, no evaluation failed on any task, and all
ten appear in the full comparison in Table~\ref{tab:fullresults}. A predictive
evaluation costs about one second per candidate, and a search-time rollout suite about
three seconds. Table~\ref{tab:agent-cost} in Appendix~\ref{app:ts} reports the
agent-loop wall-clock, tool-call, and token cost per discovery run.

\subsection{Full Frozen-Benchmark Results}
\label{app:fullresults}

Table~\ref{tab:fullresults} lists validation and test values alongside Figure~\ref{fig:bbo-results}.

Table~\ref{tab:fullresults} lists all evaluated candidates and learned-feature references, grouped by method family. Figure~\ref{fig:bbo-results} summarizes their test performance. Its bars show paired 95\% bootstrap intervals for differences from \dwf{}, rather than intervals for each method's absolute mean. Table~\ref{tab:wilcoxon} provides the paired tests.

The discovered candidates have test CRPS values from $0.601$ to $0.676$, so discovery does not guarantee that every accepted program transfers well. Validation selects one program before testing. In this campaign, it selects \dwf{}, which also has the lowest test CRPS among the discoveries.

\begin{table*}[t]
\caption{Frozen-benchmark candidates and learned-feature references, with validation
and test CRPS and regret AUC\@. Lower is better. Bold marks the best
value within each block.}
\label{tab:fullresults}
\label{tab:metaresults}
\centering
\small
\setlength{\tabcolsep}{6pt}
\begin{tabular}{@{}lcccc@{}}
\toprule
 & \multicolumn{2}{c}{CRPS $\downarrow$} & \multicolumn{2}{c}{AUC $\downarrow$} \\
\cmidrule(lr){2-3}\cmidrule(lr){4-5}
Candidate & val & test & val & test \\
\midrule
Linear & 0.783 & 0.791 & 0.913 & 0.872 \\
Periodic ($\ell{=}1$, $p{=}2$) & \textbf{0.549} & 0.618 & 0.816 & \textbf{0.678} \\
Rational quadratic (RQ; $\ell{=}0.25$, $\alpha{=}1$) & 0.559 & 0.618 & \textbf{0.785} & 0.706 \\
Mat\'ern-5/2 ($\ell{=}0.5$) & 0.551 & \textbf{0.612} & 0.797 & 0.693 \\
RBF ($\ell{=}0.25$) & 0.565 & 0.641 & 0.823 & 0.756 \\
ARD Mat\'ern-5/2 & 0.567 & 0.626 & 0.830 & 0.752 \\
ARD Mat\'ern-3/2 & 0.569 & 0.624 & 0.817 & 0.713 \\
\midrule
Spectral mixture & 0.629 & 0.709 & 0.869 & 0.784 \\
Grid spectral mixture (GSM) & 0.641--0.787 & 0.674--0.853 & 0.837--0.963 & 0.835--0.928 \\
Input warping & 0.566--0.585 & 0.616--0.645 & 0.796--0.847 & 0.695--0.759 \\
Random Fourier features  & 0.582--0.597 & 0.636--0.661 & 0.811--0.875 & 0.669--0.762 \\
CAKE (best of 50) & 0.554 & 0.609 & 0.803 & 0.708 \\
\midrule
Dual warp--fold (\dwf{}) & \textbf{0.540} & \textbf{0.601} & \textbf{0.789} & \textbf{0.671} \\
Simple spectral warp & 0.553 & 0.618 & 0.810 & 0.694 \\
Warp multiscale bump & 0.581 & 0.636 & 0.840 & 0.803 \\
Warp index-bump dual-freq (rev.) & 0.584 & 0.634 & 0.816 & 0.728 \\
Dual spectral warp interaction & 0.598 & 0.644 & 0.799 & 0.752 \\
Warp local-bump phase & 0.600 & 0.646 & 0.817 & 0.759 \\
Warp triangular phase-spread & 0.615 & 0.654 & 0.824 & 0.738 \\
Warp index-bump dual-freq & 0.622 & 0.653 & 0.831 & 0.778 \\
Warp spectral interaction & 0.633 & 0.662 & 0.815 & 0.817 \\
Warp spectral fold & 0.647 & 0.676 & 0.958 & 0.936 \\
\bottomrule
\end{tabular}
\end{table*}

\subsection{Meta-Learned Deep-Kernel Surrogates}
\label{app:fsbo}

Our BBO protocol meta-trains on some function families and tests on others, which is also the setting of meta-learned GP surrogates. We therefore compare against FSBO \citep{wistuba2021fsbo}, which meta-learns a deep kernel across related tasks and fine-tunes it on each new task. Where \method returns a short program, FSBO returns a neural feature map, so the comparison asks how much meta-training data a learned network needs to match a discovered kernel.

\paragraph{Protocol.}
We use the official FSBO implementation\footnote{\url{https://github.com/machinelearningnuremberg/FSBO}, the code released with \citet{pineda2021hpob}.} without modification. It feeds a four-layer, 32-unit ReLU network into a Mat\'ern-$5/2$ GP, meta-trains by marginal likelihood with Adam at learning rate $10^{-4}$ and batch size 64, and keeps the checkpoint with the best meta-validation loss. Three adaptations connect it to our benchmark. Inputs are zero-padded to eight coordinates, the benchmark maximum, because the network has a fixed input size. Meta-training tasks are meta-training episodes, meta-validation tasks are meta-validation episodes, and the six test families stay unseen. Finally, two data budgets bracket the comparison. The \emph{matched} budget uses exactly the ten meta-training episodes that search scored, 518 points in total. Because these tasks hold only 44 to 56 points each, the batch is capped at 22 so that validation support and query sets stay disjoint. The \emph{rich} budget gives FSBO its natural regime of 250 episodes with 256 points each, 64{,}000 points in total, at the official batch size. Its checkpoint-selection tasks draw 256 points from each meta-validation episode, as the official validation loop requires more points than the batch. We train five seeds per budget and select one by meta-validation CRPS before testing.

We evaluate each selected model in two modes. The \emph{frozen} mode fixes the learned feature map and lengthscale and refits only amplitude and noise on the benchmark grid, the protocol that every discovered kernel follows. The \emph{native} mode follows FSBO's own test procedure. It fine-tunes the network from the meta-trained checkpoint for up to 100 epochs on the observed points before every prediction and every BO step, and it chooses queries by EI over the same candidate pools. Episodes and pools reuse the benchmark's seeded generators.

\begin{figure}[t]
\centering
\includegraphics[width=\textwidth]{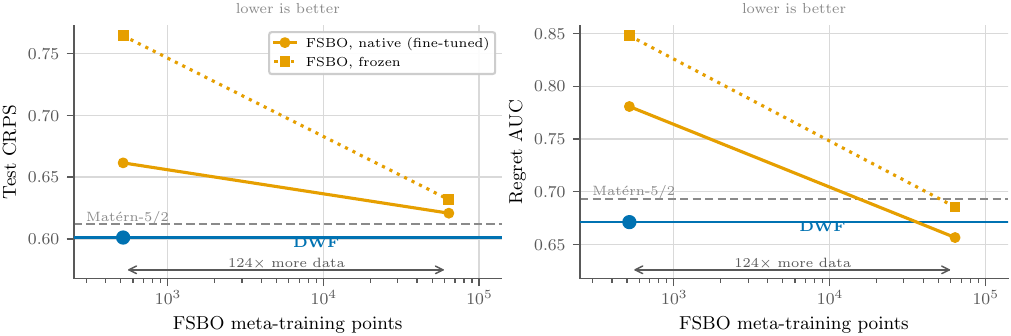}
\caption{\textbf{FSBO needs about two orders of magnitude more meta-training data to match \dwf{}.} Test CRPS (left) and regret AUC (right) on the 60 BBO test tasks, lower is better. FSBO is meta-trained on the 518 points that search scored (matched budget) or on 64{,}000 points (rich budget). Solid lines show native FSBO, which is fine-tuned on every episode. Dotted lines show frozen FSBO, which refits only amplitude and noise. \dwf{} (blue) is archived with 518 points, and the dashed line is fixed Mat\'ern-5/2. Lines join the two measured FSBO budgets and do not interpolate.}
\label{fig:fsbo}
\end{figure}

\paragraph{With the same data, \dwf{} is clearly stronger.}
Meta-trained on the episodes that search used, FSBO trails \dwf{} in both modes and on both metrics (Figure~\ref{fig:fsbo}, two-sided paired Wilcoxon $p\le0.005$ for all four contrasts). Frozen FSBO is worse than every fixed kernel except the linear one. Fine-tuning on each episode helps, but it does not close the gap (CRPS $0.6615$ against $0.6011$, regret AUC $0.7807$ against $0.6712$).

\paragraph{With 124 times more data, FSBO reaches parity.}
Under the rich budget, FSBO in native mode reaches test CRPS $0.6208$ and regret AUC $0.6566$. Neither differs significantly from \dwf{} ($p=0.53$ and $p=0.25$), and the frozen model is likewise not separated ($p=0.18$ and $p=0.93$). A meta-learned network thus needs about two orders of magnitude more meta-training data to match a single discovered program, and it does not surpass it. The frozen comparison also shows where the difference arises. Rich FSBO fits the training families about as well as \dwf{} (validation CRPS $0.538$ against $0.540$), but it transfers less well to the held-out families ($0.632$ against $0.601$). A run at batch size 22, which we used before adopting the official setting, gave weaker FSBO results (native CRPS $0.6439$, regret AUC $0.6742$), so the table reports the stronger configuration.

\subsection{Repeated Discovery and Search-Budget Controls}
\label{app:revision-saved}

We report seven ensemble campaigns, comprising the initial search and six additional searches, alongside a separate Qwen-only campaign. Figure~\ref{fig:campaign-dots} includes every completed ensemble campaign under this protocol. In each archive, the reported kernel minimizes validation CRPS among discovered candidates. The six additional campaigns freeze their selected candidate identifiers before test evaluation. Evaluation uses the same 50 validation and 60 test tasks, with eight BO steps per task. All selected kernels complete these evaluations without failures.
\begin{figure*}[t]
\centering
\includegraphics[width=0.95\textwidth]{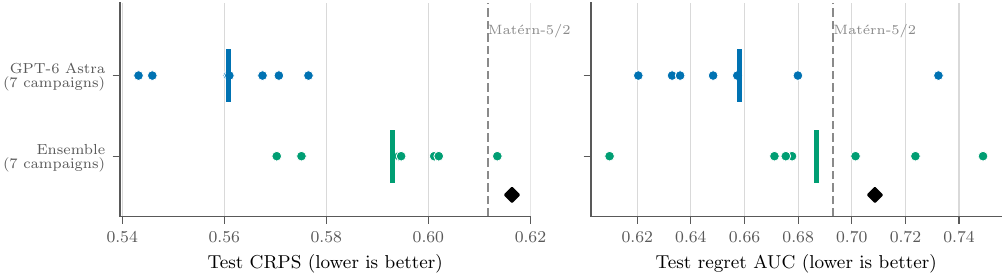}
\caption{\textbf{Seven ensemble and seven frontier campaigns on the 60 BBO test tasks.} The frontier campaigns use GPT-6 Astra. Each dot is one campaign's validation-selected kernel, bars mark the mean per proposer, and the dashed line is fixed Mat\'ern-5/2 ($\ell{=}0.5$, test CRPS $0.6117$, regret AUC $0.6931$). The black diamond is one Qwen-only campaign (test CRPS $0.616$, regret AUC $0.709$). Lower is better for both metrics. Campaigns that beat Mat\'ern: frontier 7 of 7 on CRPS and 6 of 7 on regret AUC, ensemble 6 of 7 on CRPS and 4 of 7 on regret AUC.}
\label{fig:campaign-dots}
\end{figure*}

Across the seven ensemble campaigns (Figure~\ref{fig:campaign-dots}), test CRPS is $0.5930\pm0.0153$ (mean $\pm$ sample standard deviation, SD), compared with $0.6117$ for the common fixed Mat\'ern-5/2 reference. Six of seven campaigns improve on this reference, with a mean reduction of $3.05\%$ and a range from $-0.30\%$ to $6.77\%$.

Regret AUC is $0.6869\pm0.0444$ across campaigns, compared with $0.6931$ for the fixed Mat\'ern-5/2 reference. Four of seven campaigns improve on it, and the second ensemble achieves the lowest AUC ($0.6097$). These outcomes show consistent predictive gains alongside campaign-dependent optimization performance. Dispersion is computed across seven complete campaigns evaluated on shared tasks. Search budgets are those recorded for each campaign. The Qwen-only campaign supplies a same-proposer comparison with the reported CAKE entry under its recorded training objective and aggregate search budget.

\paragraph{Search-budget-matched CAKE control.}
CAKE \citep{suwandi2025cake} uses LLM-guided crossover and mutation to search compositions of library kernels. It searches a kernel expression rather than an unrestricted component program. The reported campaign made 101 total LLM calls across its twelve runs. We gave CAKE the same total budget, one evolutionary search of 60 steps ($\approx$102 LLM calls under CAKE's own crossover-and-mutation-per-step schedule). Each candidate kernel expression is scored by its mean Bayesian information criterion (BIC) when fitted independently to each of the five training families, so the search must find one structure that generalizes across the training distribution, as \method's own fitness requires. Under this call-budget control, CAKE reaches $0.616$ test CRPS and $0.855$ test regret AUC. These values are worse than \dwf{} on both metrics, and the CRPS matches the same-proposer Qwen-only control's $0.616$. The call budgets are approximately matched, while CAKE retains its BIC objective and proposal procedure.

\subsection{Search-Ablation: Archive Sharing and Open-Ended Synthesis}
\label{app:search-ablation}

We compare the full method with independent proposals and a restricted construction grammar. All arms use the same proposer ensemble, per-campaign iteration budget, evidence ladder, and BBO evaluation protocol. The independent-proposal arm changes feedback, cross-iteration novelty screening, and candidate retention together, so it tests their combined contribution. The closure arm tests the effect of restricting the construction space while retaining archive search.

\begin{itemize}[leftmargin=*,itemsep=1pt,topsep=2pt]
\item \textbf{Full archive.} This arm retains the archive and construction interfaces used in the main method, but this ablation harness optimizes predictive CRPS alone. A shared MAP-Elites archive persists across iterations within a campaign, so each proposal round sees prior discoveries, and the novelty gate of \S\ref{sec:functional-novelty} screens against the growing archive.
\item \textbf{Independent proposals.} The four construction contracts and the evidence ladder stay the same, but each iteration proposes on its own, with no shared archive and no cross-iteration novelty gate. Only the final round's best candidate per campaign is kept.
\item \textbf{Closure grammar.} The shared-archive search loop stays the same, but the agent may use only the closure contract (\S\ref{sec:contracts}), which matches the compositional-grammar baselines of \S\ref{sec:related}.
\end{itemize}

Each arm runs 12 seeded campaigns under the same frozen evaluation protocol as \S\ref{sec:mainresults}, and we score the validation-selected candidate from each campaign once on the shared meta-test tasks.

\begin{table*}[t]
\caption{Search-ablation results on the BBO domain. The first three rows are 12 seeded campaigns per arm, reported as mean $\pm$ sample SD. CKS is one deterministic greedy search. Failed calls are model or tool calls that did not yield an accepted response, summed over each arm's 12 campaigns.}
\label{tab:search-ablation}
\centering
\begin{tabular*}{\textwidth}{@{\extracolsep{\fill}}lccc@{}}
\toprule
Search & Test CRPS & Mean AUC & Failed calls \\
\midrule
\textbf{Full archive} & $\mathbf{0.571\pm0.031}$ & 0.681 & 28 \\
Independent proposals & $0.617\pm0.037$ & 0.695 & 15 \\
Closure grammar & $0.618\pm0.021$ & 0.706 & 22 \\
CKS greedy grammar, depth $\leq3$ & $0.625$ & 0.684 & -- \\
\bottomrule
\end{tabular*}
\end{table*}

The full archive configuration improves test CRPS over both campaign ablations (Table~\ref{tab:search-ablation}). Against independent proposals, 11 of 12 campaigns favor the full method, with mean difference $\Delta=-0.046$. Against closure grammar, 11 of 12 again favor the full method, with $\Delta=-0.047$. Each exact sign-flip test gives $p=0.006$, or $p=0.013$ after Holm correction across the two contrasts. For the CKS check, we fit squared-exponential (SE), periodic, linear, and rational-quadratic leaves by marginal likelihood, greedily add, multiply, or replace one leaf up to depth three, and select the frozen structure by validation CRPS. The selected rational-quadratic kernel reaches $0.625$ test CRPS and $0.684$ regret AUC.

The full-archive mean of $0.571$ averages 12 validation-selected kernels. It differs from the original frozen \dwf{} result of $0.601$ in Table~\ref{tab:metaresults} and the seven-campaign mean of $0.593$ in \S\ref{sec:mainresults}. These summaries describe different sets of discovery campaigns. This harness also optimizes predictive CRPS alone, while the reported campaigns optimize the combined score of Equation~\eqref{eq:selection-score}, which also rewards regret AUC.

\subsection{A Frontier Proposer: GPT-6 Astra}
\label{app:proposer-scaling}

We test whether a stronger coding model yields stronger kernels. We replace the ensemble proposer with a single frontier model, GPT-6 Astra, and keep everything else fixed: the task context, the number of runs and their round and tool-call limits, the construction contracts, the acceptance checks, the tuning budget, validation selection, and the frozen evaluation. We ran seven BBO campaigns and report all of them. Each spends 100--112 LLM calls, against 101 for the first ensemble campaign. We also ran three campaigns each on ChemBench and forecasting, and two on GlucoseBench.

\paragraph{Prediction and optimization.}
Figure~\ref{fig:campaign-dots} shows lower test CRPS for all seven frontier campaigns than for fixed Mat\'ern. The campaign-level CRPS difference between the frontier and ensemble proposers is supported by an exact two-sided permutation test ($p=0.003$). Regret AUC improves less consistently, and its campaign-level difference is not statistically separated ($p=0.22$).

\paragraph{Progress across discovery runs.}
Figure~\ref{fig:search-progress} shows how the best validation CRPS falls as runs accumulate. The frontier proposer's archive is at the level of the best tuned baseline after its first run, while the ensemble needs about eight. The ensemble's mean flattens after that, and the frontier's keeps falling through the last run.

\begin{figure}[t]
\centering
\includegraphics[width=0.5\columnwidth]{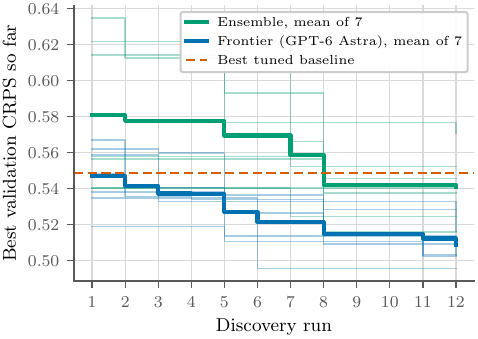}
\caption{\textbf{A stronger proposer starts ahead and keeps improving.} Best validation CRPS among the programs accepted after each discovery run (lower is better). Thin lines are the 14 individual campaigns. Thick lines are the means over the seven ensemble and seven frontier (GPT-6 Astra) campaigns. The dashed line is the best tuned baseline. A run that accepts no program leaves its curve unchanged.}
\label{fig:search-progress}
\end{figure}

The same validation-CRPS leader does not produce a matching regret curve (Figure~\ref{fig:search-progress-regret}). At each run we keep the program with the best validation CRPS so far, carry that choice forward when a run accepts nothing, and plot its validation regret AUC. The campaign is the unit. Mean regret falls for both proposers, while individual campaigns still move in both directions.

\begin{figure}[t]
\centering
\includegraphics[width=0.5\columnwidth]{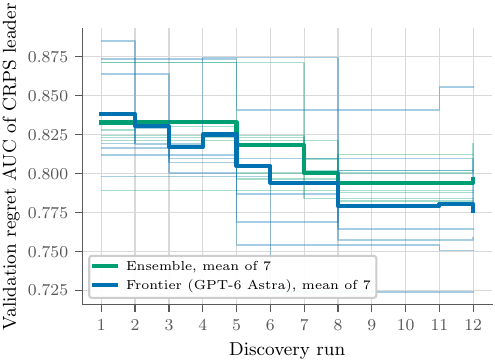}
\caption{\textbf{Validation regret of the validation-CRPS leader.} At each discovery run, the plotted value is the validation regret AUC of the program with the best validation CRPS among those accepted so far. Thin lines are the 14 campaigns. Thick lines are means over seven ensemble and seven frontier campaigns. The score is validation regret.}
\label{fig:search-progress-regret}
\end{figure}

\paragraph{Other domains.}
In ChemBench, validation selects frontier campaign F2, whose kernel reaches $0.312$ test CRPS, 44\% below the ensemble selection and significantly better than all six learned references (Appendix~\ref{app:chembench}). Two of the three frontier ChemBench campaigns submitted programs in only four of their eight runs. In the other runs, GPT-6 Astra proposed programs with more than twelve tunable parameters, which the tuner rejects (Appendix~\ref{app:impl}), and ran out of rounds. This rule applies to every proposer. In GlucoseBench, both frontier campaigns beat the strongest tuned reference by 22\%, against 15\% for the best ensemble campaign (Appendix~\ref{app:glucose}). Forecasting is the exception. The three frontier selections score 0.191--0.198 three-record CRPS, about the same as fixed RBF ($0.205$) and far from the ensemble's selection ($0.091$). Here validation barely separates kernels, since all three frontier selections lie within $0.080$--$0.086$ validation CRPS, close to RBF, so a better proposer has little signal to exploit.

\subsection{Broad Benchmark Outside the Search Families}
\label{app:dwf-broad}

We compare \dwf{} with Mat\'ern-5/2 on tasks that were not used in discovery, validation, or the original test. We assembled a suite of 44 analytic test functions in one to eight dimensions, using definitions from the SFU Virtual Library of Simulation Experiments.\footnote{\url{https://www.sfu.ca/~ssurjano/optimization.html}} The benchmark script provided with the submission code specifies each implementation, domain, and optimum. For each function we evaluate two domain regimes: the standard domain and an off-center subdomain that retains the global optimum away from its midpoint. Each of the 88 function--regime combinations has ten seeded episodes, with a new off-center subdomain drawn for each episode. Predictive episodes use $n=8+2d$ training points and 32 held-out points. BO episodes use $4+2d$ initial points and $T = 8$ EI steps with 128 fresh candidates per step, matching the main protocol. We compare the archived kernels with grid-fitted amplitude and noise, and a second pair that refits isotropic lengthscale, amplitude, and noise by marginal likelihood.

We also use eight real regression datasets with dimensions five to thirteen: diabetes and California housing from scikit-learn,\footnote{\url{https://scikit-learn.org/stable/datasets/}} and yacht, airfoil, concrete, energy, machine CPU, and Boston from OpenML.\footnote{\url{https://www.openml.org/}} The OpenML dataset IDs are 42370, 43919, 4353, 43918, 230, and 531, respectively. The energy target is Y1. The benchmark script removes rows with nonfinite values and constant input columns, then min-max scales inputs over each full dataset. Each dataset uses 20 random splits at $n=25$ and $n=100$ training points, with 100 held-out points. These comparisons test predictive CRPS only.

\begin{figure*}[t]
\centering
\includegraphics[width=0.98\textwidth]{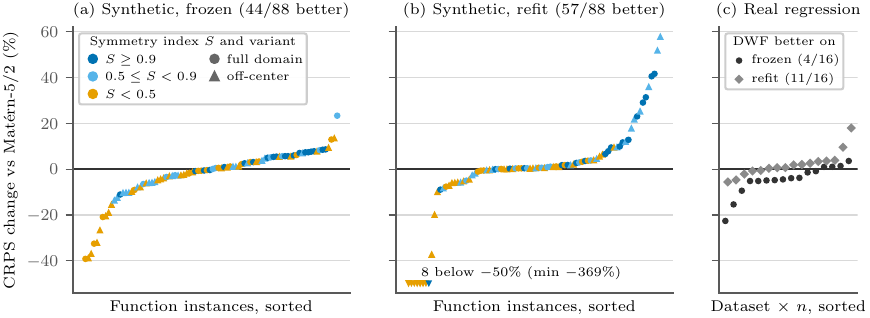}
\caption{\textbf{DWF outside the search families.} CRPS change relative to Mat\'ern-5/2, where positive values favor \dwf{}. Each synthetic point averages ten episodes for one function and domain variant. Color gives the empirical reflection-symmetry index $S$, defined in the paragraph below. Circles use the standard domain, and triangles use an off-center subdomain. The refit panel clips eight losses below $-50\%$, with a minimum of $-369\%$. Real-regression points average 20 splits for one dataset and training size.}
\label{fig:dwf-broad}
\end{figure*}

Figure~\ref{fig:dwf-broad} summarizes the comparison. For task values $f(u)$ on $u\in[0,1]^d$, we estimate $S=1-\frac{1}{d}\sum_{j=1}^d \mathbb{E}[(f(u)-f(r_j(u)))^2]/(2\operatorname{Var}(f(u)))$, where $r_j$ reflects coordinate $j$ about $1/2$. We use 4096 fixed-seed uniform points per task. Frozen \dwf{} wins 44 of 88 synthetic function instances, with median relative CRPS change $-0.2\%$ and paired Wilcoxon $p=0.77$. Its regret AUC is better on 52 instances, with a median gain of $2.7\%$ and $p=0.004$. Refitting raises the predictive win count to 57 and the median gain to $0.6\%$ ($p=0.025$), but eight large losses make the mean change $-11.9\%$. Refit regret AUC is not separated from Mat\'ern ($p=0.70$).

The effect follows the geometry diagnosed above. Among the 20 instances with empirical symmetry index $S\ge0.9$, frozen \dwf{} wins 16, with median CRPS gain $5.6\%$ ($p=0.0002$). Across all 88 instances, $S$ and the frozen CRPS gain have Spearman correlation $\rho=0.37$ ($p=0.0004$). On the 44 off-center variants, frozen \dwf{} wins only 15 and has median change $-3.3\%$ ($p=0.016$).

The real-data check is less favorable. With the archived geometry, \dwf{} wins 4 of 16 dataset-by-sample-size comparisons and has median change $-4.2\%$ ($p=0.011$). Per-dataset refitting raises this to 11 of 16 and a median gain of $1.3\%$, but the aggregate difference is not significant ($p=0.12$). The broad benchmark therefore supports \dwf{} as a readable prior for approximate central reflection.

\section{Analysis of the DWF Kernel}
\label{app:dwf}

This appendix collects the analysis of \dwf{}, the kernel discussed in \S\ref{sec:dwf-analysis}. We first give its discovery record, frozen geometry, and source code. We then state the regularity and soft symmetry it induces. The last part reports the matched controls and the sharpness and symmetry tests. Appendix~\ref{app:dwf-broad} then compares \dwf{} on tasks outside the search families, where it helped when the tasks had approximate reflection symmetry.

\subsection{Paired Predictive Comparisons}
\label{app:wilcoxon}
Table~\ref{tab:wilcoxon} reports paired CRPS differences on the same 50 validation and 60 test tasks. Negative differences favor \dwf{}. The Wilcoxon signed-rank test assumes symmetric differences for a location-shift interpretation. Holm correction over the five competitors within each split leaves every conclusion at the $0.05$ level unchanged. These task-level tests complement the paired intervals and familywise results rather than measuring variation across independent discovery campaigns.

\begin{table*}[t]
\caption{Paired Wilcoxon tests of \dwf{}'s per-task CRPS against each competitor ($n{=}50$ validation, $n{=}60$ test). ``Wins'' counts tasks where \dwf{} is lower. Med.\ $\Delta$CRPS is the median signed difference, and negative favors \dwf{}. The $p$-values are unadjusted. Holm correction over the five competitors within each split leaves every 0.05-level conclusion unchanged. $^*p<0.05$, $^{**}p<0.01$, $^{***}p<0.001$.}
\label{tab:wilcoxon}
\centering
\small
\begin{tabular}{@{}lcccccc@{}}
\toprule
 & \multicolumn{3}{c}{Validation ($n{=}50$)} & \multicolumn{3}{c}{Test ($n{=}60$)} \\
\cmidrule(lr){2-4}\cmidrule(lr){5-7}
Competitor & wins & med.\ $\Delta$CRPS & $p$ & wins & med.\ $\Delta$CRPS & $p$ \\
\midrule
\multicolumn{7}{l}{\textit{vs.\ fixed baselines}} \\
Periodic ($\ell{=}1$, $p{=}2$) & 34/50 & $-0.009$ & $0.013$* & 45/60 & $-0.018$ & $7.0\times 10^{-5}$*** \\
Mat\'ern-5/2 ($\ell{=}0.5$) & 31/50 & $-0.015$ & $0.069$ & 39/60 & $-0.013$ & $0.023$* \\
RQ ($\ell{=}0.25$, $\alpha{=}1$) & 39/50 & $-0.018$ & $3.5\times 10^{-5}$*** & 45/60 & $-0.019$ & $2.2\times 10^{-5}$*** \\
\midrule
\multicolumn{7}{l}{\textit{vs.\ discovered kernels}} \\
Simple spectral warp & 37/50 & $-0.011$ & $7.0\times 10^{-4}$*** & 46/60 & $-0.019$ & $3.6\times 10^{-6}$*** \\
Warp multiscale bump & 41/50 & $-0.037$ & $1.6\times 10^{-8}$*** & 43/60 & $-0.031$ & $9.3\times 10^{-5}$*** \\
\bottomrule
\end{tabular}
\end{table*}

\subsection{Discovery Record and Frozen Geometry}\label{app:dwf-record}

The \dwf{} source passed Tier 2 on the first verification, and submission re-checked the same source at the selected parameters before evaluation. Appendix~\ref{app:dwf-code} gives the submitted transform, and Appendix~\ref{app:revision-saved} records that replaying it reproduces the reported episode scores.

The kernel is the pullback $k_T(x,x')=k^{5/2}_{\ell}(T(x),T(x'))$, where $k^{5/2}_{\ell}$ is a Mat\'ern-5/2 kernel with lengthscale $\ell=0.5$. For each input coordinate, the transform concatenates a monotone warp and a triangular fold:
\begin{align}
w_{\lambda}(u) &= \frac{1-e^{-\lambda u}}{1-e^{-\lambda}},
\qquad
a_{\nu}(u) = \tfrac{1}{\pi}\arcsin\!\big(\sin(\pi\nu u)\big),
\label{eq:dwf}\\
T(x) &= \big(w_{\lambda}(x_j),\;\sin(\zeta)\,a_{\nu}(x_j)\big)_{j=1}^{d}.\nonumber
\end{align}
The warp remains close to the original coordinate. The fold maps approximate reflections to similar values, while the warp prevents them from becoming identical. Figure~\ref{fig:dwf-gp-samples} shows prior functions generated by this geometry.

\paragraph{Frozen geometry.}
Figure~\ref{fig:dwf-geometry} plots the frozen maps. On $[0,1]$ the warp differs from the identity by at most $0.027$. The fold spans less than one cycle, peaks at $u=0.513$, and identifies $u$ with $1/\nu-u$. That reflection is centered at $0.513$, near the middle of the unit interval. At the selected frequency, repeated feature values arise from this single reflection. The right panel shows what the joint covariance does with that fold. For the anchor $u=0.2$, whose fold-mirror is $u=0.825$, joint \dwf{} agrees with Mat\'ern-$5/2$ at the mirror, because the warp block still separates the two inputs. Along the rest of the slice the two curves differ by as much as $0.21$, so the fold does change local shape. The additive split, with equal branch weights, keeps covariance near $0.70$ at the same mirror pair, while the fold-only branch would assign covariance $1$ there. The additive form assigns a separate covariance component to the repeated fold coordinate, as quantified in Proposition~\ref{prop:soft-symmetry}.

\begin{figure*}[t]
\centering
\includegraphics[width=\textwidth]{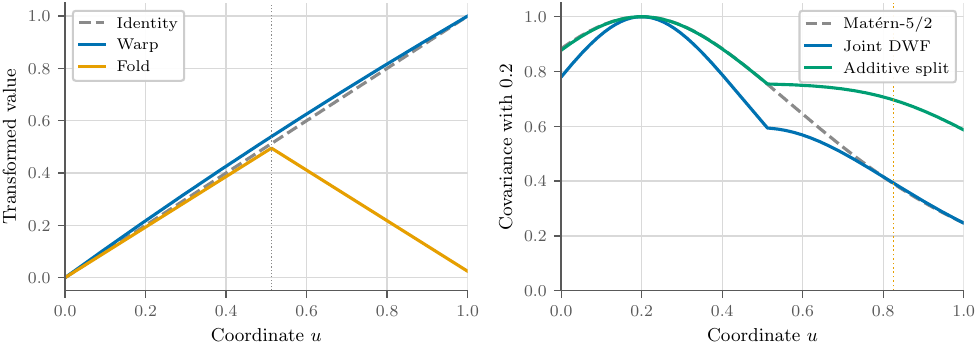}
\caption{Frozen \dwf{} geometry, where $u$ is a coordinate in $[0,1]$. Left: the warp stays near the identity, and the fold is one tent peaking at $u=0.513$ (dotted line). Right: covariance with a reference point at $u=0.2$ and the dotted line at its fold mirror $u=0.825$. Joint \dwf{} matches Mat\'ern-$5/2$ at the mirror, while the additive split stays near $0.70$.}
\label{fig:dwf-geometry}
\end{figure*}

\subsection{How \dwf{} Was Proposed}
\label{app:dwf-trace}

The agent registered its formulation in its first turn and staged the program with starting values $\lambda=2.0$, $\nu=1.5$, $\zeta=\pi/2$, and base lengthscale $0.5$. The backend scored these together with 63 Halton points, and the agent submitted the top trial ($\lambda=0.215$, $\nu=0.975$, $\zeta=1.433$). The base lengthscale remained fixed at $\ell=0.5$ throughout tuning because the declared tuning field did not control the base kernel. All reported \dwf{} evaluations therefore use this fixed lengthscale. The agent writes $\kappa$ for the fold frequency that the main text calls $\nu$.

The shared system prompt, domain task descriptions, and assignment are reproduced in Appendix~\ref{app:agent-prompts}.

\subsection{Reference Implementation}
\label{app:dwf-code}
The listing below is the Python code that the agent wrote for \dwf{} under the pullback contract (\S\ref{sec:contracts}).
\begin{CodeBox}{\texttt{transform\_point} (pullback contract)}
\begin{lstlisting}[style=kcode]
def transform_point(x, parameters):
    x = np.asarray(x, dtype=float).ravel()
    d = x.shape[0]
    warp_rate = float(parameters["warp_rate"])
    fold_frequency = float(
        parameters["fold_frequency"])
    fold_phase = float(parameters["fold_phase"])

    if abs(warp_rate) < 1e-10:
        warp = x.copy()
    else:
        warp = (1.0 - np.exp(-warp_rate * x)) / (
            1.0 - np.exp(-warp_rate))

    fold = (1.0 / np.pi) * np.arcsin(
        np.sin(np.pi * fold_frequency * x)
    ) * np.sin(fold_phase)

    result = np.concatenate([warp, fold])
    return result
\end{lstlisting}
\end{CodeBox}
Under the pullback contract the agent supplies only this pointwise transform, and it never builds a Gram matrix. The trusted interpreter evaluates \texttt{transform\_point} on each input, obtains the $2d$-dimensional feature vector, and applies the certified Mat\'ern-$5/2$ base kernel to pairs of those vectors to assemble $K$. Because the agent's code never sees pairs of points and never assembles the matrix, the program stays inside the contract of \S\ref{sec:contracts} and inherits the pullback validity guarantee.

\subsection{Prior Samples}
\label{app:dwf-samples}
Figure~\ref{fig:dwf-gp-samples} shows what the frozen kernel implies before any data are seen.

\begin{figure*}[t]
\centering
\includegraphics[width=\textwidth]{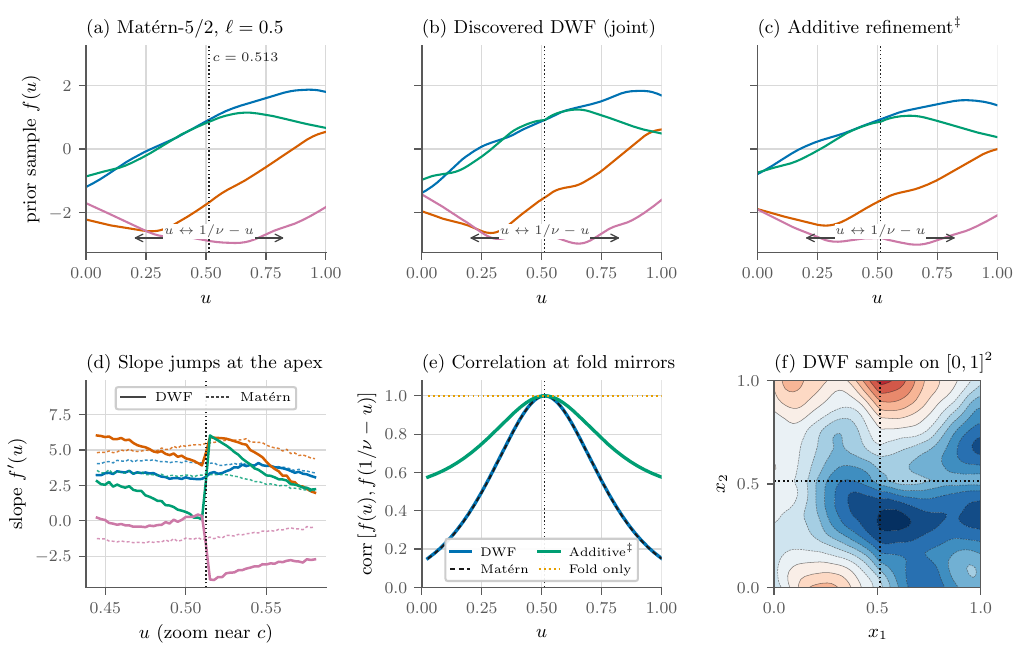}
\caption{\textbf{Prior samples under the discovered kernel.} Four draws from unit-variance GPs on $[0,1]$. Panels (a)--(c) share the same standard normals, so the draws differ only through the kernel: (a) Mat\'ern-$5/2$ with $\ell=0.5$, (b) joint \dwf{} at its frozen parameters, (c) the refinement $k_{\mathrm{add}}=\tfrac12 k_w+\tfrac12 k_a$. The dotted line marks the fold peak $c=1/(2\nu)=0.513$, and arrows join the mirror pair $u=0.2$ and $1/\nu-u=0.825$. (d) Finite-difference slopes near $c$. They jump for \dwf{} (solid) and stay continuous for Mat\'ern (dashed), apart from small ripples from the $10^{-8}$ jitter. (e) Prior correlation between $f(u)$ and $f(1/\nu-u)$. Joint \dwf{} is within $0.003$ of Mat\'ern, the additive kernel stays above $0.57$, and the fold branch alone fixes it at $1$. (f) One \dwf{} draw on $[0,1]^2$, with contours bending along $x_j=c$. $^\ddagger$Human researcher-defined refinement, not search output.}
\label{fig:dwf-gp-samples}
\end{figure*}

\subsection{Regularity of DWF}\label{app:dwf-regularity}

A smooth base kernel can induce a nonsmooth GP after a folded input transform. The result below characterizes that prior regularity. Its empirical relevance is tested separately by the smoothing ablation in Appendix~\ref{app:dwf-mechanism-controls}.

\begin{proposition}[Derivative jump at a fold]
\label{prop:dwf-cusp}
In one dimension, let $\nu>0$, $s=\sin\zeta$, and let $c=1/(2\nu)$ lie in the interior of the input interval. Let $G$ be a zero-mean, unit-variance, isotropic Mat\'ern-$5/2$ GP on $\mathbb R^2$ with lengthscale $\ell>0$, and set $F(u)=G(w_\lambda(u),s a_\nu(u))$. The one-sided mean-square derivatives exist and satisfy
\begin{align}
 [F']_c&:=F'_+(c)-F'_-(c)=-2s\nu\,\partial_2G(T(c)),\\
 \operatorname{Var}\big([F']_c\big)&=\frac{20s^2\nu^2}{3\ell^2}.
 \label{eq:dwf-jump}
\end{align}
If $s\ne0$, there is no two-sided mean-square derivative at $c$.
\end{proposition}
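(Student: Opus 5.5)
The plan is to reduce everything to the mean-square differentiability of the base field $G$ and the piecewise-linear structure of the fold near its peak.

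\textbf{Step 1: local form of the fold.} For $\pi\nu u\in(-\pi/2,\pi/2)$ we have $a_\nu(u)=\nu u$. For $\pi\nu u\in(\pi/2,3\pi/2)$ we have $\arcsin(\sin\theta)=\pi-\theta$, so $a_\nu(u)=1-\nu u$. Hence, in a neighbourhood of $c=1/(2\nu)$, the map $T(u)=(w_\lambda(u),s\,a_\nu(u))$ is $C^1$ on each closed side. Its one-sided derivatives are
\[
T'_\pm(c)=\big(w_\lambda'(c),\ \mp s\nu\big),
\]
and on each side
\[
T(c+h)-T(c)=T'_\pm(c)\,h+O(h^2).
\]
Note that $w_\lambda$ is smooth everywhere.

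\textbf{Step 2: mean-square chain rule.} The Mat\'ern-$5/2$ covariance $k(r)=(1+\sqrt5 r/\ell+5r^2/(3\ell^2))e^{-\sqrt5 r/\ell}$ has the expansion
\[
k(r)=1-\tfrac{5}{6\ell^2}r^2+O(r^3)\quad\text{near } 0,
\]
and as a function on $\mathbb R^2$ it is $C^2$ (indeed $C^4$ at the origin up to the $r^4\log$-free terms needed). It follows that $G$ is mean-square differentiable with gradient $\nabla G(z)$ and
\[
\mathbb E\big|G(z+\delta)-G(z)-\nabla G(z)^\top\delta\big|^2=o(|\delta|^2).
\]
The standard argument bounds this second moment by an expression in the second differences of $k$ and uses continuity of $\partial^2 k$ at $0$.

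Taking $\delta=T(c+h)-T(c)$ and combining with Step 1 gives, for $h\to0^\pm$,
\[
\frac{F(c+h)-F(c)}{h}\;\to\;\nabla G(T(c))^\top T'_\pm(c)\quad\text{in } L^2 .
\]
So the one-sided derivatives exist and equal $w_\lambda'(c)\,\partial_1G(T(c))\mp s\nu\,\partial_2G(T(c))$. Subtracting the two gives $[F']_c=-2s\nu\,\partial_2G(T(c))$.

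\textbf{Step 3: variance.} By stationarity,
\[
\operatorname{Var}\big(\partial_2 G(z)\big)=-\partial^2_{\delta_2}k(|\delta|)\big|_{\delta=0}=\frac{5}{3\ell^2},
\]
using the quadratic coefficient $-5/(6\ell^2)$ from the expansion in Step 2. Multiplying by $4s^2\nu^2$ yields $20s^2\nu^2/(3\ell^2)$.

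\textbf{Step 4: no two-sided derivative.} If a two-sided mean-square derivative existed at $c$, both one-sided limits would coincide with it in $L^2$. Then $[F']_c=0$ almost surely, which contradicts the positive variance whenever $s\ne0$.

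\textbf{Main obstacle.} The main obstacle is Step 2: justifying the mean-square chain rule with the uniform $o(|\delta|^2)$ remainder for a two-dimensional isotropic field. I would first try to verify it directly from the Taylor expansion of $k$ to second order, with a bounded third-order remainder in $r$. Since $k$ is twice continuously differentiable as a function on $\mathbb R^2$, that should suffice.
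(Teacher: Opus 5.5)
Your proposal is correct and follows the paper's argument. The tent form $a_\nu(c+h)=\tfrac12-\nu|h|$ gives the one-sided tangent vectors $(w_\lambda'(c),\mp s\nu)$, the mean-square chain rule gives the jump $-2s\nu\,\partial_2G(T(c))$, the quadratic coefficient $-5/(6\ell^2)$ of the Mat\'ern-$5/2$ expansion gives $\operatorname{Var}(\partial_2G)=5/(3\ell^2)$, and the positive jump variance rules out a two-sided derivative; your explicit $o(|\delta|^2)$ remainder bound from the $C^2$ covariance supplies the chain-rule justification that the paper only cites.
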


\begin{proof}
Locally $a_\nu(c+h)=1/2-\nu|h|$, so the two tangent vectors are $(w'_\lambda(c),s\nu)$ and $(w'_\lambda(c),-s\nu)$. The mean-square chain rule gives the first identity. The Mat\'ern-$5/2$ correlation has expansion $k(r)=1-5r^2/(6\ell^2)+o(r^2)$. Differentiating the covariance in its two arguments gives $\operatorname{Var}(\partial_2G)=5/(3\ell^2)$~\citep[Section 4.1.1]{rasmussen2006gaussian}. Multiplication by $(2s\nu)^2$ proves the variance formula. A positive variance of the difference prevents the two derivative limits from agreeing in mean square.
\end{proof}

The claim concerns the prior's one-sided mean-square derivatives. It does not require every posterior mean or covariance section to display a kink. For GP amplitude $\sigma_f^2$, multiply the variance by $\sigma_f^2$. In multiple dimensions, the same argument applies along a coordinate crossing its fold with the other coordinates fixed. Smoothness of the outer kernel alone therefore does not characterize the regularity induced by a nonsmooth transform.

\subsection{Additive Kernels and Soft Reflection Symmetry}\label{app:dwf-soft-symmetry}

The additive refinement allocates separate covariance components to the warp and fold. The following RKHS identities explain how it encourages agreement at fold mirrors without requiring identical predictions there.

\begin{proposition}[Additive norm and soft reflection symmetry]
\label{prop:soft-symmetry}
Let $k_w(x,y)=k_w^0(w(x),w(y))$ and $k_a(x,y)=k_a^0(a(x),a(y))$ be PSD kernels and $k_{\mathrm{add}}=\alpha k_w+(1-\alpha)k_a$ for $0<\alpha<1$. Then
\begin{equation}
 \|f\|_{\mathcal H_{k_{\mathrm{add}}}}^2
 =\inf_{f=f_w+f_a}\left\{
 \frac{\|f_w\|_{\mathcal H_{k_w}}^2}{\alpha}
 +\frac{\|f_a\|_{\mathcal H_{k_a}}^2}{1-\alpha}\right\}.
 \label{eq:additive-norm}
\end{equation}
Define $d_k(x,y)^2=k(x,x)+k(y,y)-2k(x,y)$. If $a(x)=a(y)$, then
\begin{align}
 d_{k_{\mathrm{add}}}(x,y)&=\sqrt\alpha\,d_{k_w}(x,y),\\
 |f(x)-f(y)|&\le\sqrt\alpha\,\|f\|_{\mathcal H_{k_{\mathrm{add}}}}d_{k_w}(x,y).
 \label{eq:soft-reflection}
\end{align}
On a compact domain, if $w$ is continuous and injective, $k_w^0$ is universal on its image, and $k_a$ is continuous, then $k_{\mathrm{add}}$ is universal.
\end{proposition}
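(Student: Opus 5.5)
The plan has three parts, taken in the order of the statement. First, the additive norm identity (\ref{eq:additive-norm}). I would invoke Aronszajn's sum theorem \citep{aronszajn1950theory}. If $k_1,k_2$ are PSD, then $\mathcal H_{k_1+k_2}=\{f_1+f_2\}$ with norm $\|f\|^2=\inf\{\|f_1\|_{k_1}^2+\|f_2\|_{k_2}^2\}$. Scaling also has to be handled: for $c>0$, $\mathcal H_{ck}=\mathcal H_k$ as sets with $\|f\|_{ck}^2=\|f\|_k^2/c$, which one checks from the reproducing property $\langle f,ck(\cdot,x)\rangle_{ck}=f(x)$. Applying this with $c=\alpha$ and $c=1-\alpha$ gives the identity. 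For a self-contained route, I would realize $\mathcal H_{k_{\mathrm{add}}}$ as the image of the map $(f_w,f_a)\mapsto f_w+f_a$ on the weighted product space, with the quotient norm by the kernel of that map. I would then verify the reproducing property on the orthogonal complement of that kernel.

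Second, the distance identity and the soft-reflection bound. If $a(x)=a(y)$, then $k_a(x,\cdot)=k_a(y,\cdot)$ as functions, since both equal $k_a^0(a(x),a(\cdot))$. Hence $k_a(x,x)+k_a(y,y)-2k_a(x,y)=0$. Because $d_k^2$ is linear in $k$, we get $d_{k_{\mathrm{add}}}^2=\alpha d_{k_w}^2+(1-\alpha)\cdot 0$, and taking square roots gives the first claim. For the inequality, I would use the reproducing property and Cauchy--Schwarz in $\mathcal H_{k_{\mathrm{add}}}$:
\[
|f(x)-f(y)|=|\langle f,k_{\mathrm{add}}(x,\cdot)-k_{\mathrm{add}}(y,\cdot)\rangle|\le\|f\|\,\|k_{\mathrm{add}}(x,\cdot)-k_{\mathrm{add}}(y,\cdot)\|,
\]
and the last norm equals $d_{k_{\mathrm{add}}}(x,y)$. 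Substituting the first claim gives (\ref{eq:soft-reflection}).

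Third, universality. Because $w$ is continuous and injective on a compact domain, it is a homeomorphism onto its compact image. Any continuous $g$ on the domain can therefore be written as $g=h\circ w$ with $h=g\circ w^{-1}$ continuous on the image. Universality of $k_w^0$ gives RKHS functions $h_n\to h$ uniformly, and $h_n\circ w\in\mathcal H_{k_w}$ because pulling back a kernel pulls back its RKHS. So $\mathcal H_{k_w}$ is dense in $C(\cX)$. By the sum theorem, $\mathcal H_{k_w}\subseteq\mathcal H_{k_{\mathrm{add}}}$ (take $f_a=0$), so $\mathcal H_{k_{\mathrm{add}}}$ is also dense. Continuity of $k_a$, together with continuity of $k_w$ as a composition of continuous maps, makes $k_{\mathrm{add}}$ continuous, which is required by the definition of universality in \citet{micchelli2006universal}.

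I expect the main obstacle to be the first step. The rest is routine, but the sum-of-RKHS norm formula with weights must be either cited precisely or proved carefully, in particular the quotient/orthogonal-complement argument that shows the infimum is attained and yields exactly the reproducing norm. I would cite Aronszajn's theorem and give only the scaling check explicitly.
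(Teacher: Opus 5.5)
Your proposal is correct and follows the paper's proof essentially step for step. The paper uses the same four steps: the weighted Aronszajn sum, i.e.\ the quotient of the direct sum of the scaled RKHSs, for the norm identity; additivity of the canonical squared distances, with the fold term vanishing on a common fiber; the reproducing property plus Cauchy--Schwarz for the bound; and universality from the homeomorphic warp pullback together with $\mathcal H_{k_w}\subseteq\mathcal H_{k_{\mathrm{add}}}$, where your choice $f_a=0$ gives the paper's stated $\|f\|_{\mathcal H_{k_{\mathrm{add}}}}\le\alpha^{-1/2}\|f\|_{\mathcal H_{k_w}}$.
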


\begin{proof}
The RKHS sum construction is the quotient of the direct sum of the two scaled RKHSs, giving the infimum norm~\citep{aronszajn1950theory,bach2008consistency}. Canonical squared distances add with the same weights. The fold distance is zero on a common fiber, proving the distance identity. The reproducing property and Cauchy--Schwarz give the function-difference bound. Finally, the warp pullback is universal because the continuous injection has a continuous inverse on its compact image. Its RKHS is included in the additive RKHS with norm increased by at most $\alpha^{-1/2}$, so density is preserved.
\end{proof}

The joint DWF kernel is also universal under the same compactness and base-universality assumptions because its transform is injective. Thus Proposition~\ref{prop:soft-symmetry} compares regularization geometry rather than claiming an unconditional approximation advantage for addition. Its mirror identity uses the same warp kernel on both sides. For the joint radial kernel at a fold mirror, the fold distance vanishes and the joint covariance equals that warp covariance when their lengthscales agree. The additive kernel instead allocates an explicit covariance component to the shared fold value.

\subsection{DWF versus Parametric Input Warping}
\label{app:dwf-warping}

The original power and exponential warps reach $0.566$--$0.585$ validation CRPS, compared with $0.540$ for \dwf{}. However, they use a negative log-likelihood (NLL) fitting objective. We therefore retune the constructions on a common predictive objective to compare their structure more directly. Table~\ref{tab:dwf-controls} reports the results.

\paragraph{Component ablation.}
Both branches contribute to prediction. The validation-selected joint kernel reaches $0.6011$ test CRPS, compared with $0.6223$ for the warp alone and $0.6147$ for the fold alone. Adding the two branch kernels improves CRPS further to $0.5668$. We evaluate this human researcher-defined refinement separately from the frozen search output, using the matched protocol below.

\paragraph{Matched tuning and frozen selection.}
We give each of seven arms 64 configurations for each of five scrambled-Halton seeds (101--105). All arms use the same ten training episodes, with two per family, the same GP amplitude/noise grids, and the same predictive CRPS objective. We check a 600-second limit between configurations, but every run completes all 64 trials before the cap. Each arm starts with an identity-oriented configuration and the archived DWF geometry.

Within each repeat, we retain the configuration with the lowest training CRPS. We then use 50 validation episodes to choose one repeat per arm and record the selected IDs before evaluating the common 60 test episodes and eight-step BO rollouts. This protocol tests refinements of the existing discovery.

\begin{table*}[t]
\centering
\caption{Matched predictive-objective controls (tuning protocol described above). Test CRPS mean $\pm$ SD is over five tuning repeats on the same 60 episodes, not over five discovery campaigns. Selected columns report the validation-chosen repeat. Archived DWF is a frozen anchor with a different historical search budget. $^\ddagger$Human researcher-defined refinement, not search output.}
\label{tab:dwf-controls}
\small
\begin{tabular}{lrrrr}
\toprule
Construction & Test CRPS (repeat mean $\pm$ SD) & Selected val. & Selected test & Selected AUC \\
\midrule
Identity Mat\'ern & $0.6198 \pm 0.0010$ & 0.5546 & 0.6189 & 0.6824 \\
Warp only & $0.6245 \pm 0.0028$ & 0.5590 & 0.6223 & 0.6896 \\
Fold only & $0.6339 \pm 0.0283$ & 0.5519 & 0.6147 & 0.6901 \\
Additive branches$^\ddagger$ & $0.5668 \pm 0.0000$ & 0.5020 & 0.5668 & 0.6190 \\
Joint branches & $0.6131 \pm 0.0071$ & 0.5404 & 0.6011 & 0.6712 \\
Rational quadratic & $0.6123 \pm 0.0049$ & 0.5489 & 0.6090 & 0.6896 \\
Optimized ARD & $0.6511 \pm 0.0156$ & 0.5730 & 0.6381 & 0.7612 \\
Archived DWF & $0.6011$ & 0.5404 & 0.6011 & 0.6712 \\
\bottomrule
\end{tabular}
\end{table*}

\paragraph{Result and interpretation.}
The selected additive kernel improves both reported means: test CRPS falls from $0.6011$ to $0.5668$ (5.7\%), and regret AUC from $0.6712$ to $0.6190$ (7.8\%). The regret reduction is descriptive because its multiplicity-adjusted test does not cross the 0.05 threshold, as reported below. The refinement improves CRPS on four of six family means, with losses on Bukin and Drop-Wave. Every arm completes all test episodes.

We quantify the paired CRPS gain by resampling episodes within each held-out family 5,000 times. The additive-minus-joint difference is $-0.0343$, with a 95\% bootstrap interval $[-0.0468,-0.0211]$. A two-sided paired Wilcoxon test gives Holm-adjusted $p=0.00028$ across the six non-anchor comparisons against joint DWF. These comparisons describe differences within the evaluated benchmark families.

All five additive repeats select the same shared initial geometry, with equal mixture weights and branch lengthscales $0.5$. This explains their zero SD across repeats. Joint DWF still improves on either separately retuned branch, but the additive construction performs best under this protocol.

\subsubsection{Sharpness Ablation}
\label{app:dwf-mechanism-controls}

\paragraph{Local smoothing on the BBO benchmark.}
We smooth the fold to test whether its sharp cusp is necessary for the additive gain. We compare sharp and smoothed versions of both joint and additive constructions, together with identity, warp-only, and fold-only controls. Each arm receives the same 64 configurations for each of five tuner seeds, including the archived geometry, with no runtime cutoff. We select one repeat per arm on the same 50 validation episodes, then evaluate it on 60 test episodes and eight-step BO rollouts. These ablations reuse the benchmark families and do not constitute new discovery campaigns.

For a raw fold value $a\in[-1/2,1/2]$, put $t=1/2-|a|$. Within $t<\epsilon$, replace $t$ by $\epsilon q(t/\epsilon)$, where $q(r)=3r^2-3r^3+r^4$, and retain the original fold elsewhere. With $\epsilon=0.02$, this removes the first-derivative cusp, preserves extrema and reflection fibers, and changes the unscaled feature by at most $27\epsilon/256=0.002109375$. The composed fold is $C^2$ at the patch boundaries and extrema. Smoothing changes local geometry as well as regularity.

\begin{table}[t]
\centering
\caption{BBO sharp/smooth ablation. Entries use validation-selected configurations evaluated on the same 60 test episodes. Lower is better. $^\ddagger$Human researcher-defined refinement, not search output.}
\label{tab:dwf-sharp-smooth}
\small
\begin{tabular}{lrr}
\toprule
Construction & Test CRPS & Regret AUC \\
\midrule
Joint & 0.601146 & 0.671159 \\
Joint, smooth & 0.601162 & 0.671159 \\
Additive$^\ddagger$ & 0.566832 & 0.618955 \\
Additive, smooth$^\ddagger$ & 0.566879 & 0.618955 \\
\bottomrule
\end{tabular}
\end{table}

Smoothing changes prediction only slightly. Within each construction, the sharp and smooth variants select the same structural parameters. The joint sharp-minus-smooth CRPS difference is $-0.000016$, with 95\% interval $[-0.000051,0.000024]$ and adjusted $p=1$. For the additive kernel, the difference is $-0.000047$, with interval $[-0.000079,-0.000014]$ and adjusted $p=0.0262$. Although detectable, the additive change amounts to only $0.008\%$ of CRPS. The paired AUC differences are exactly zero.

The additive gain persists after smoothing (Table~\ref{tab:dwf-sharp-smooth}). The additive-minus-joint CRPS difference is $-0.03431$ (5.7\%), with interval $[-0.04650,-0.02110]$ and adjusted $p=0.00030$. The AUC difference is $-0.05220$, with interval $[-0.09681,-0.01035]$ and adjusted $p=0.0742$, so we treat the AUC gain as descriptive. A sharp cusp is therefore unnecessary for the observed CRPS gain, even though Proposition~\ref{prop:dwf-cusp} identifies a derivative jump in the original prior.

For this follow-up, we use 20,000 paired bootstrap resamples within each family, weight family means equally, and apply approximate centered-bootstrap two-sided tests of the mean difference. Holm correction covers the three contrasts and two metrics together. We report unadjusted intervals. These bootstrap tests differ from the earlier Wilcoxon comparisons.

\section{Time-Series Forecasting}
\label{app:forecasting}

This appendix supports \S\ref{sec:results-ts}. It specifies the forecasting protocol, reports the full frozen results, and adds the two held-out records, a long-horizon diagnostic, and the AutoGP comparison.

\subsection{Protocol}
\label{app:ts}

This section specifies the forecasting benchmark used in \S\ref{sec:results-ts}, from the greenhouse-gas records and episode construction through the baseline grid and the cost of the agent loop.

\paragraph{Records and episodes.}
The benchmark bundles monthly global means of four greenhouse gases from the NOAA
Global Monitoring Laboratory\footnote{Monthly greenhouse-gas records, taken from the NOAA Global Monitoring Laboratory, \url{https://gml.noaa.gov/}.} (CO$_2$ with $569$ months from 1979, CH$_4$ with $514$
from 1983, N$_2$O with $304$ from 2001, and SF$_6$ with $346$ from 1997), with missing
months removed. The two further held-out records, CFC-12 and CFC-11, come from the NOAA HATS program and are described in Appendix~\ref{app:ts-extra}. Inputs are the month index scaled to $[0,1]$ over the full record.

For episode $i$, draw $W_i$ uniformly from $\{216,\dots,T_{\mathrm{rec}}-48\}$, where $T_{\mathrm{rec}}$ is the record length. After optionally reversing the full record, use observations $1{:}W_i$ for training and $W_i+1{:}W_i+48$ for evaluation. The remaining observations are unused in that episode. Add Gaussian noise only to the training targets, with standard deviation drawn uniformly between zero and $4\%$ of the clean training-window standard deviation. Standardize both splits using clean training-window statistics.

Seeds are offset by split: meta-train uses seeds $0$--$8999$, meta-validation
$10000$--$18999$ on the same three gases, and the held-out records use
$20000$ and above, with SF$_6$ from $20000$, CFC-12 from $21000$, and CFC-11 from $22000$. These seed ranges distinguish generated episodes. Underlying observations and target windows can overlap across episodes.

\paragraph{Fitting and fitness.}
Predictive evaluation fits only an outer covariance amplitude over
$(0.1,0.3,1,3)$ and diagonal noise over $(10^{-4},10^{-3},10^{-2},10^{-1})$ by training-window marginal likelihood, using the same grid as the synthetic benchmark in Equation~\eqref{eq:amp-noise}. It then scores held-out CRPS on the standardized 48-month horizon. Search-time fitness is
negative mean held-out CRPS across training episodes. Candidate-specific parameters
are tuned only through the backend's declared spaces (up to $64$ Halton-sequence
trials per program). Program fitness also subtracts a $10^{-3}\log(1+p)$ complexity
penalty for $p$ exposed parameters. The penalty is constant across a program's trials,
so it does not affect which setting the backend freezes. The functional
novelty gate of Equation~\eqref{eq:functional-novelty} applies with threshold
$\tau_{\mathrm{nov}}=0.08$.

\paragraph{Baseline grid.}
Fixed references use closure trees over normalized time: RBF, Mat\'ern, rational quadratic, linear, periodic, ARD, and spectral-mixture variants, and the strongest grid point per family is selected on training fitness only. Selection is separate for each benchmark, so the rational-quadratic winner here ($\ell{=}0.5$) differs from the BBO selection ($\ell{=}0.25$). Three fitted reference families, namely monotone input warping, grid spectral mixtures, and tuned random Fourier features (Appendix~\ref{app:metabo}), join the same frozen protocol, each archived as one winner per training family.

\paragraph{Search configuration and budget.}
This search reuses the ensemble configuration of \S\ref{sec:mainresults}: twelve
discovery runs capped at ten rounds and one hundred tool calls produced nine
Tier 2 verified programs (six spectral, two feature-map, one residual feature map),
consuming $498$ backend parameter-tuning trials. A predictive evaluation costs about one second per episode at these
window sizes.

\paragraph{Agent-loop cost.}
Table~\ref{tab:agent-cost} reports rounds, tool calls, and token usage per discovery
run. The mean gap between consecutive run starts is 2.6 minutes for BBO and 3.1 minutes for time series.

\begin{table*}[t]
\caption{Agent-loop cost per discovery run, min--max over the 12 runs of
each reported search.}
\label{tab:agent-cost}
\centering
\begin{tabular*}{\textwidth}{@{\extracolsep{\fill}}lcc@{}}
\toprule
Per discovery run & BBO & Time-series \\
\midrule
Rounds & 1--10 & 8--10 \\
Tool calls & 0--11 & 7--10 \\
Input tokens & 4k--190k & 69k--184k \\
Output tokens & 0.3k--10k & 3k--9.6k \\
\bottomrule
\end{tabular*}
\end{table*}

Both bounds in Table~\ref{tab:agent-cost} trace to specific outcomes. The BBO search's zero-tool-call minimum is one discovery run that returned a
single long planning turn and never called a tool, exhausting its round budget
immediately. Together with one other run whose optimized parameters failed Tier 2
re-verification at submission, this accounts for the two BBO runs left without a
verified submission above. All three unsubmitted time-series runs instead spent their
full ten-round budget cycling \texttt{stage\_candidate} and
\texttt{verify\_staged\_candidate} against the same interface-conformance failure.

Figure~\ref{fig:toolcalls} groups every tool call from both reported searches by workflow phase. Forecasting runs spent 45.9\% of their calls drafting and verifying code, compared with 26.8\% for BBO. A spectral program returns frequencies of shape $M\times d$ and raw weights of length $M$. The trusted interpreter constructs the covariance from those outputs (Definition~\ref{def:contract}). The program does not return a symbolic cosine sum or assemble the Gram matrix. The call shares alone do not identify why forecasting needed more verification attempts.

\begin{figure}[t]
\centering
\begin{tikzpicture}
\begin{axis}[
  ybar stacked,
  width=0.62\linewidth, height=5.2cm,
  bar width=22pt,
  ymin=0, ymax=100,
  ylabel={Share of tool calls (\%)},
  symbolic x coords={BBO,Time-series},
  xtick=data,
  enlarge x limits=0.5,
  legend style={at={(1.03,0.5)}, anchor=west, font=\footnotesize, draw=none,
    /tikz/every even column/.append style={column sep=4pt}},
  tick label style={font=\footnotesize}, label style={font=\footnotesize},
]
\addplot[fill=oiBlue] coordinates {(BBO,14.4) (Time-series,12.8)};
\addplot[fill=oiSky] coordinates {(BBO,26.8) (Time-series,45.9)};
\addplot[fill=oiGreen] coordinates {(BBO,14.4) (Time-series,13.8)};
\addplot[fill=oiOrange] coordinates {(BBO,12.4) (Time-series,8.3)};
\addplot[fill=oiPurple] coordinates {(BBO,22.7) (Time-series,11.9)};
\addplot[fill=oiVerm] coordinates {(BBO,9.3) (Time-series,7.3)};
\legend{Formulation, Draft/verify, Parameter search, Submission, Archive query, Final eval.}
\end{axis}
\end{tikzpicture}
\caption{Share of tool calls by workflow phase, pooled across the 12 runs of each reported search (97 BBO calls, 109 time-series calls). Each phase groups these tools. Formulation: \texttt{propose\_formulation}. Draft/verify: \texttt{stage\_candidate} and \texttt{verify\_staged\_candidate}. Parameter search: \texttt{optimize\_parameters} and \texttt{evaluate\_parameters}. Submission: \texttt{submit\_staged\_candidate}. Archive query: \texttt{query\_archive} and \texttt{inspect\_candidate}. Final evaluation: \texttt{evaluate\_candidate}.}
\label{fig:toolcalls}
\end{figure}

\begin{table}[t]
\caption{Frozen greenhouse-gas candidates, fitted reference families, and the
closed-grammar CAKE winner: mean CRPS over
validation episodes and over ten held-out SF$_6$ windows (lower is better).}
\label{tab:ts-fullresults}
\centering
\small
\setlength{\tabcolsep}{4pt}
\begin{tabular}{@{}lcc@{}}
\toprule
Method & val & SF$_6$ \\
\midrule
Linear & 0.826 & 0.959 \\
Periodic ($p{\approx}$annual) & 0.241 & 1.799 \\
RQ ($\ell{=}0.5$) & 0.095 & 0.067 \\
Mat\'ern-5/2 $\ell{=}1$ & 0.091 & 0.050 \\
Mat\'ern-3/2 $\ell{=}1$ & 0.099 & 0.075 \\
RBF $\ell{=}0.5$ & $\mathbf{0.086}$ & $0.029$ \\
ARD-RBF ($\ell=1$) & 0.165 & 0.016 \\
ARD-RBF ramp & 0.124 & 0.109 \\
\midrule
\multicolumn{3}{l}{\textit{learned-feature references}} \\
Spectral mixture & 0.750 & 1.024 \\
Grid spectral mixture (GSM) & 1.155--1.347 & 1.433--1.626 \\
Input warping & 0.121--0.172 & $\mathbf{0.014}$--0.168 \\
Random Fourier features & 0.114--0.204 & 0.019--0.071 \\
\midrule
\multicolumn{3}{l}{\textit{Kernaut discoveries}} \\
Harmonic--trend root & 0.128 & 0.018 \\
Dyadic multiband harmonics & 0.100 & 0.024 \\
Poly-modulated chirp & 0.126 & 0.026 \\
Chirp--poly feature map & 0.118 & 0.052 \\
Residual period-bank + poly & 0.088 & 0.059 \\
Power-law trend + Lorentz & 0.190 & 0.208 \\
Mat\'ern-cascade quasiperiodic & 0.192 & 0.211 \\
Gaussian harmonics$^\dagger$ & 0.219 & 0.223 \\
Harmonic--trend revision$^\dagger$ & 1.501 & 1.704 \\
CAKE best (of 30) & 0.082 & 0.036 \\
\bottomrule
\end{tabular}
\end{table}

\subsection{Full Time-Series Benchmark Results}
\label{app:ts-fullresults}

We report all frozen candidates and reference families in Table~\ref{tab:ts-fullresults}. Ranges show variation across fits to individual training families. We retain entries marked $^\dagger$ under the soft novelty-penalty policy and select baseline grids on training episodes. Uniform ARD at lengthscale 1 and isotropic RBF at lengthscale 0.5 use the same one-dimensional kernel family with different lengthscales.

Validation and SF$_6$ rank candidates differently. On the original ten-window SF$_6$ evaluation, the validation-selected kernel has CRPS $0.059$, compared with $0.029$ for RBF. The 30-window diagnostic in Table~\ref{tab:forecast-diagnostics} instead gives ARD-RBF the lowest overall CRPS, while harmonic--trend root is slightly better on forward windows. Harmonic--trend root is the strongest discovery on SF$_6$, but it loses all 60 CFC windows to the validation-selected kernel and averages $0.334$ across the three test records (Appendix~\ref{app:ts-extra}). Thus, the validation-selected kernel's main benefit is transfer to the two CFC records, rather than the best fit to SF$_6$ alone.

\paragraph{Closed-grammar forecasting search and diagnostics.}
CAKE searches on thirty training episodes from the three training gases. Validation selects a program at $0.082$ CRPS, ahead of the fixed kernels on that split. Its original ten-window SF$_6$ score is $0.036$, against $0.029$ for RBF. The three-record comparison in Appendix~\ref{app:ts-extra} is the primary transfer result.

Table~\ref{tab:forecast-diagnostics} separates forward and reversed SF$_6$ windows and reports interval coverage and width. Harmonic--trend root was selected for this diagnostic by its SF$_6$ test performance. The thirty windows comprise twenty forward and ten reversed windows, with 197 overlapping target-window pairs among 435 pairs, so the summaries describe variation within one record.

\begin{table*}[t]
\centering
\caption{Frozen forecasting diagnostics on 30 SF$_6$ windows (20 forward, 10 reversed). Coverage and width refer to nominal 95\% marginal predictive intervals on forward windows. Targets are clean held-out values, and variance includes fitted observation noise, matching the benchmark CRPS. Widths are in training-standardized units. The results are descriptive, from one record with overlapping windows.}
\label{tab:forecast-diagnostics}
\small
\setlength{\tabcolsep}{3pt}
\begin{tabular*}{\textwidth}{@{\extracolsep{\fill}}lrrrrr@{}}
\toprule
Kernel & All CRPS & Forward CRPS & Reversed CRPS & \shortstack[r]{Forward\\coverage} & \shortstack[r]{Forward\\width} \\
\midrule
Residual period-bank (selected) & 0.0456 & 0.0488 & 0.0391 & 0.918 & 0.334 \\
Harmonic--trend root & 0.0194 & 0.0174 & 0.0235 & 0.925 & 0.121 \\
RBF ($\ell=0.5$) & 0.0264 & 0.0284 & 0.0223 & 0.880 & 0.153 \\
Uniform ARD-RBF ($\ell=1$) & 0.0184 & 0.0178 & 0.0195 & 0.916 & 0.117 \\
\bottomrule
\end{tabular*}
\end{table*}

\subsection{Additional Held-Out Forecasting Records}
\label{app:ts-extra}

\paragraph{Records and protocol.}
We add two held-out records after search: the NOAA HATS combined global monthly means of CFC-12 and CFC-11\footnote{\url{https://gml.noaa.gov/hats/}}, with 586 and 587 months from 1977. Both rise and then decline, a shape absent from the three training gases. We chose them for this shape before evaluating any kernel. Episodes follow the SF$_6$ protocol exactly, with the same 48-month horizon, training-window standardization, noise injection, and reversal rule. The 30 SF$_6$ windows are the same as in Table~\ref{tab:forecast-diagnostics}. The CFC records use the next seed blocks, $21000$ and $22000$.

We score 58 frozen kernels: the 19 members of the forecasting archive (nine discoveries and ten training-selected fixed kernels), all 30 CAKE winners, and the three fitted winners each of input warping, random Fourier features, and the grid spectral mixture. No kernel is retuned or reselected. Each family's representative in Table~\ref{tab:ts-results} is its member with the lowest validation CRPS on the original validation episodes. No evaluation fails.

\begin{table*}[t]
\caption{Mean CRPS $\pm$ SEM over 30 held-out windows per record (lower is better). Each unstarred frozen row is selected on validation data. The overall mean weights the three records equally, and Fwd.\ uses forward-time windows only. $^\ast$Best fixed selects a fixed baseline separately for each test record. Harmonic--trend selects a discovery by SF$_6$ test performance only, then evaluates that same discovery on both CFC records. $^\dagger$AutoGP infers a new structure for every window and averages eight predictions.}
\label{tab:ts-results}
\centering
\footnotesize
\setlength{\tabcolsep}{4pt}
\begin{tabular}{@{}lccccc@{}}
\toprule
Method & SF$_6$ & CFC-12 & CFC-11 & Mean & Fwd. \\
\midrule
Fixed (RBF) & $\mathbf{0.026}\pm0.003$ & $0.268\pm0.061$ & $0.320\pm0.055$ & $0.205\pm0.027$ & $0.098\pm0.007$ \\
Best fixed$^\ast$ & $0.026\pm0.003$ & $0.182\pm0.053$ & $0.134\pm0.034$ & $0.114\pm0.021$ & $0.024\pm0.002$ \\
GSM & $1.473\pm0.047$ & $0.812\pm0.173$ & $0.633\pm0.105$ & $0.973\pm0.069$ & $0.690\pm0.017$ \\
Input warping & $0.103\pm0.010$ & $0.143\pm0.039$ & $0.129\pm0.026$ & $0.125\pm0.016$ & $0.057\pm0.004$ \\
RFF & $0.052\pm0.008$ & $0.252\pm0.067$ & $0.237\pm0.033$ & $0.181\pm0.025$ & $0.105\pm0.012$ \\
CAKE & $0.027\pm0.003$ & $0.220\pm0.063$ & $0.189\pm0.051$ & $0.145\pm0.027$ & $\mathbf{0.022}\pm0.002$ \\
CKS grammar, depth $\leq3$ & $0.063\pm0.005$ & $0.229\pm0.061$ & $0.234\pm0.049$ & $0.175\pm0.026$ & $0.065\pm0.004$ \\
AutoGP$^\dagger$ & $0.027\pm0.005$ & $0.185\pm0.050$ & $0.148\pm0.038$ & $0.120\pm0.021$ & $0.028\pm0.002$ \\
\midrule
Period-bank + poly & $0.046\pm0.006$ & $\mathbf{0.121}\pm0.027$ & $\mathbf{0.106}\pm0.022$ & $\mathbf{0.091}\pm0.012$ & $0.034\pm0.003$ \\
Harmonic--trend$^\ast$ & $0.019\pm0.003$ & $0.365\pm0.082$ & $0.617\pm0.116$ & $0.334\pm0.047$ & $0.138\pm0.013$ \\
\bottomrule
\end{tabular}
\end{table*}

\paragraph{Results.}
Table~\ref{tab:ts-extra} gives the paired comparisons. On both CFC records, the selected discovery ranks first of 58 by mean CRPS and clearly improves on the validation-selected fixed kernel, random Fourier features, and the grid spectral mixture. Its lower means against input warping and CAKE are not separated from zero. On SF$_6$ it loses to the fixed kernel and CAKE. The ranges within families are wide on the new records. Input warping spans $0.143$--$0.737$ on CFC-12 and $0.111$--$0.749$ on CFC-11, and CAKE spans $0.220$--$1.831$ and $0.189$--$1.470$. Selecting within a family by validation therefore matters for every method, not only for discovery.

Forward and reversed windows behave differently. Averaged over the forward windows of the three records, the discovery reaches $0.034$, against $0.022$ for CAKE and $0.024$ for the best fixed kernel per record. Its overall advantage therefore comes mainly from reversed windows, where its equal-record mean is $0.136$ against $0.194$ for AutoGP, $0.246$ for CAKE, and $0.291$ for RBF (Figure~\ref{fig:forecast-direction}). The three validation-selected frontier kernels score $0.191$--$0.198$ overall and remain behind the ensemble selection in both directions. These means treat the three records equally. Windows overlap inside each record, so the spread describes those records. Harmonic--trend root, the best discovered kernel on SF$_6$, loses every window on both CFC records to the selected discovery.

\begin{figure}[t]
\centering
\includegraphics[width=\columnwidth]{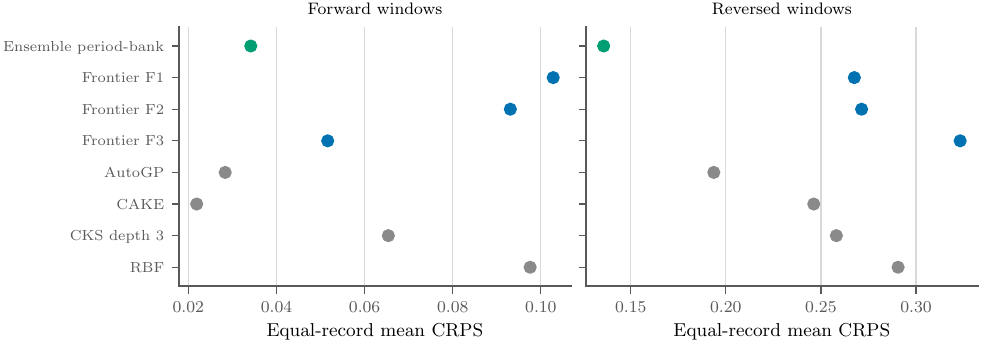}
\caption{\textbf{The same 90 windows, split by time direction.} Each point is the equal-record mean CRPS (mean within SF$_6$, CFC-12, and CFC-11, then the mean of those three). Forward and reversed windows are separate summaries inside the 48-month benchmark. The final-120-month diagnostic is a different protocol and is not included. Frontier F1--F3 are the validation-selected kernels from the three GPT-6 Astra forecasting campaigns.}
\label{fig:forecast-direction}
\end{figure}

\begin{table*}[t]
\caption{Paired comparisons of the validation-selected discovery (residual period-bank + poly) against each validation-selected reference, over 30 windows per record. Entries give windows won by the discovery and the two-sided Wilcoxon $p$-value. Pooled combines all 90 windows. Windows overlap within records, so $p$-values describe variation within records and not independent replications. Holm correction over the five references within each column leaves every 0.05-level conclusion unchanged.}
\label{tab:ts-extra}
\centering
\small
\setlength{\tabcolsep}{3.5pt}
\begin{tabular*}{\textwidth}{@{\extracolsep{\fill}}lcccc@{}}
\toprule
Comparator & SF$_6$ & CFC-12 & CFC-11 & Pooled \\
\midrule
Fixed (RBF, $\ell{=}0.5$) & 7/30 ($7.1\times10^{-5}$) & 26/30 ($2.0\times10^{-6}$) & 25/30 ($2.4\times10^{-5}$) & 58/90 ($7.5\times10^{-6}$) \\
Input warping & 26/30 ($2.6\times10^{-7}$) & 12/30 ($0.98$) & 15/30 ($0.42$) & 53/90 ($8.7\times10^{-5}$) \\
Random Fourier features & 14/30 ($0.52$) & 21/30 ($0.0066$) & 27/30 ($3.9\times10^{-7}$) & 62/90 ($4.0\times10^{-7}$) \\
Grid spectral mixture & 30/30 ($1.9\times10^{-9}$) & 30/30 ($1.9\times10^{-9}$) & 29/30 ($8.0\times10^{-8}$) & 89/90 ($2.7\times10^{-16}$) \\
CAKE & 2/30 ($9.3\times10^{-9}$) & 16/30 ($0.052$) & 16/30 ($0.24$) & 34/90 ($0.92$) \\
\midrule
Best fixed per record$^\ast$ & 7/30 ($7.1\times10^{-5}$) & 17/30 ($0.20$) & 14/30 ($0.72$) & -- \\
Harmonic--trend root$^\ast$ & 4/30 ($3.0\times10^{-5}$) & 30/30 ($1.9\times10^{-9}$) & 30/30 ($1.9\times10^{-9}$) & 64/90 ($1.8\times10^{-8}$) \\
\bottomrule
\end{tabular*}
\end{table*}

\paragraph{AutoGP.}
We also run native AutoGP inference \citep{saad2023sequential} independently on each of the same 90 windows, with eight particles, 75 Markov chain Monte Carlo (MCMC) steps, and 10 Hamiltonian Monte Carlo (HMC) steps per rejuvenation. We average the eight particle predictive distributions and score the resulting mixture directly, without selecting a particle using held-out targets. All 90 windows completed. AutoGP reaches mean CRPS $0.027$, $0.185$, and $0.148$ on SF$_6$, CFC-12, and CFC-11, giving an equal-record mean of $0.120$. It lowers mean CRPS relative to fixed RBF, CAKE, and validation-selected depth-three CKS. The Holm-adjusted signed-rank $p$-values are $7.5\times10^{-8}$ against fixed RBF, $0.62$ against CAKE, and $8.0\times10^{-8}$ against CKS. Relative to the selected discovery, AutoGP has paired mean difference $+0.029$, with a record-stratified bootstrap interval of $[+0.006,+0.055]$, 44 wins and 46 losses, and Holm-adjusted signed-rank $p=0.39$. AutoGP is stronger on forward windows than the discovery, $0.028$ against $0.034$, although CAKE remains lower at $0.022$. These results strengthen the classical comparison while preserving the evidence for cross-record transfer by one frozen structure. In this evaluation, AutoGP infers its structure from each training window and forecasts the held-out block without seeing its future targets. It does not update after each held-out observation. AutoGP uses substantially more inference, while the other primary rows freeze one structure after validation. The 90 windows are fixed and overlap within records, so the intervals and tests describe variation over these benchmark windows rather than uncertainty over a population of gas records.

\paragraph{Forecast visualizations.}
Following the forecast displays of \citet{saad2023sequential} and \citet{saad2024scalable}, Figure~\ref{fig:ts-forecasts} plots the posterior predictive of each validation-selected representative on one window per record, with the surrounding record retained for context. To avoid choosing windows that favor any method, we take the window on which the selected discovery has its median CRPS among the 30 windows of that record, and plot every method on that same window. The SF$_6$ window runs forward, and both CFC windows are reversed, so the model backcasts toward the concentration peak. On CFC-12, the fixed RBF kernel follows the broader trend of the training window past the peak, and CAKE extends the recent linear decline. Both have narrow intervals. The discovery and input warping flatten toward the plateau, and their intervals cover the held-out record. On CFC-11, the fixed kernel again overshoots, and CAKE, input warping, and the discovery reach similar CRPS. On the forward SF$_6$ window, CAKE and the fixed kernel track the linear rise more tightly than the discovery, whose interval widens with the horizon.

\begin{figure*}[t]
\centering
\includegraphics[width=\textwidth]{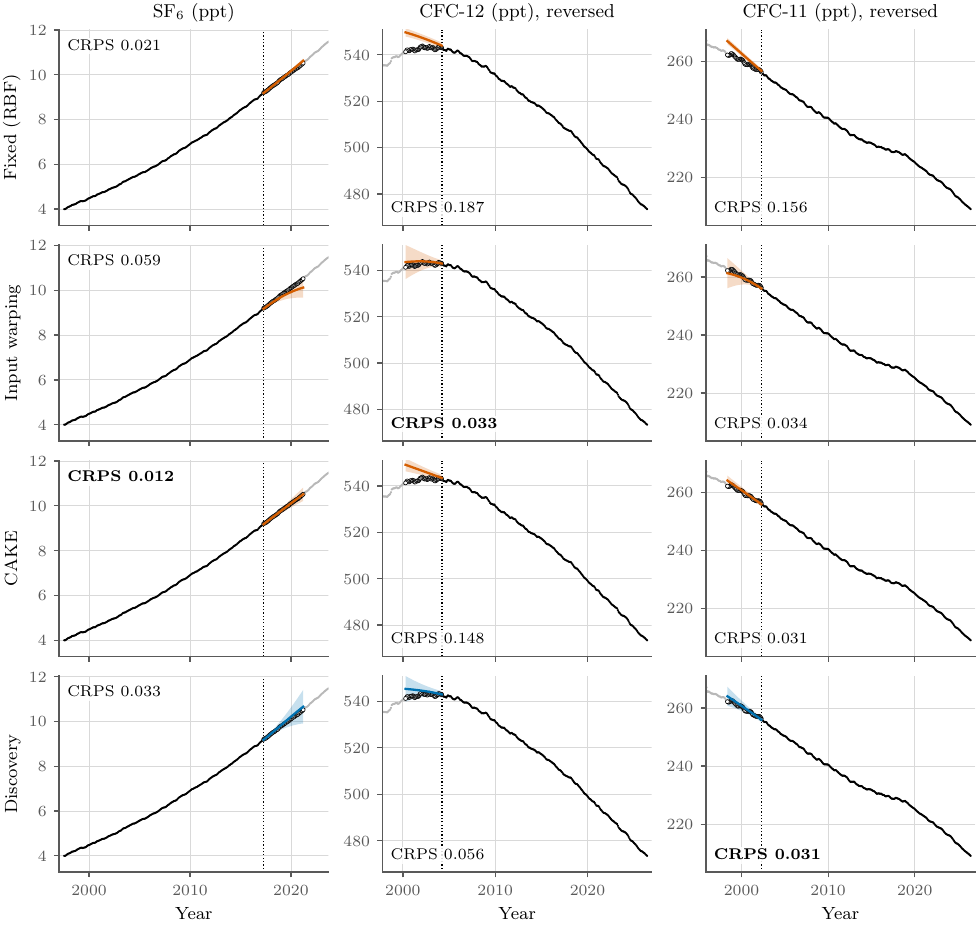}
\caption{Posterior forecasts of the validation-selected kernels on the median-CRPS window of each record (rows are methods, columns are records). Black dots are the observed prefix, hollow markers are the held-out 48 months, and light grey dots are the rest of the record. Lines and bands are the predictive mean and 95\% interval including observation noise, in parts per trillion (ppt). Both CFC windows are reversed, so the forecast runs backward from the dotted origin. Panel CRPS is on the standardized scale of Table~\ref{tab:ts-results}.}
\label{fig:ts-forecasts}
\end{figure*}

\paragraph{Long-horizon diagnostic.}
The 48-month benchmark mixes forward and reversed windows. We therefore add a deterministic diagnostic that trains on each record except its final 120 months and forecasts those ten years forward. At this much longer horizon, CAKE reaches CRPS $0.057$, $0.038$, and $0.075$ on SF$_6$, CFC-12, and CFC-11, compared with $0.117$, $0.079$, and $0.150$ for the selected discovery. Figure~\ref{fig:ts-longhorizon} helps localize this gap. The discovery tends to flatten its posterior mean and widen its interval, while CAKE preserves the record-specific long-run direction more closely. These three diagnostic splits favor CAKE for ten-year forward prediction. The discovery's gains on the 48-month benchmark, which mixes forward and reversed windows, therefore do not establish an advantage at this longer forward horizon. Because both methods were selected on the original validation windows, the horizon change alone does not explain their performance gap. 

\begin{figure*}[t]
\centering
\includegraphics[width=\textwidth]{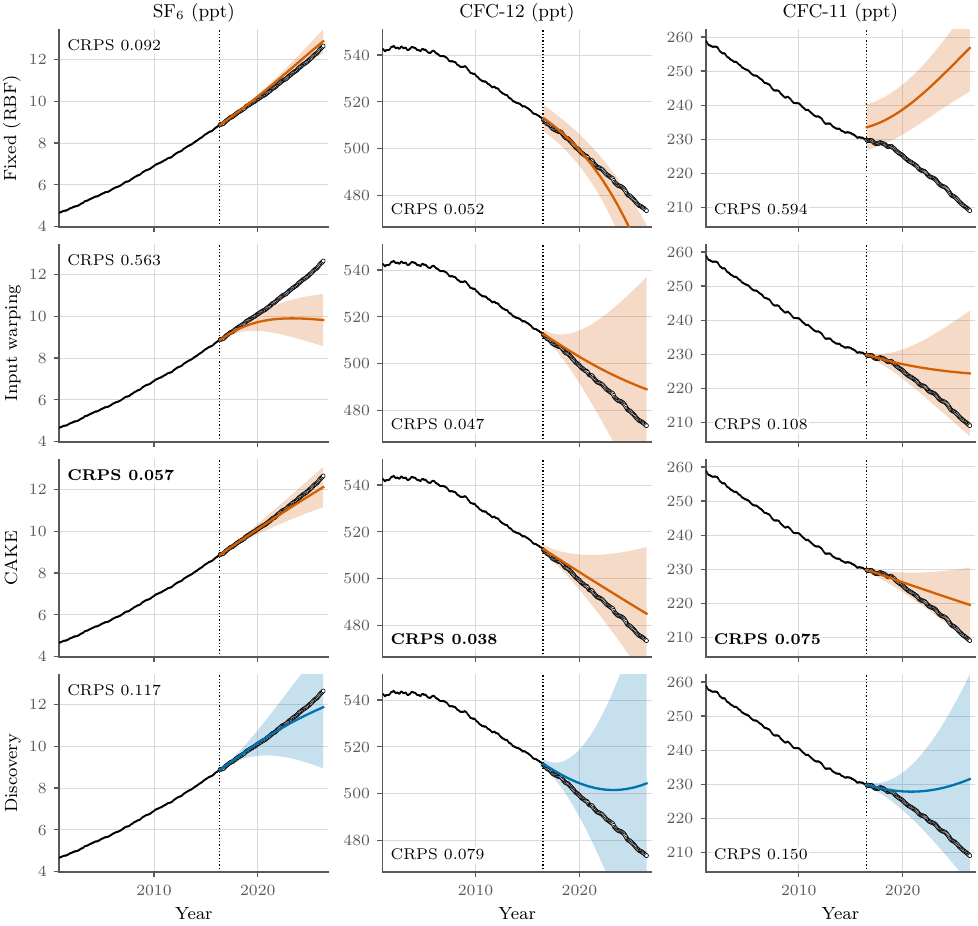}
\caption{Forward forecasts of the final 120 months of each record, a horizon 2.5 times longer than the main benchmark (rows are methods, columns are records). Black dots are the observed prefix, hollow markers are the held-out final ten years, and the dotted line is the forecast origin. Lines and bands are the predictive mean and 95\% interval including observation noise, in ppt. Panel CRPS is on the training-standardized scale, and bold marks the lowest value in each column.}
\label{fig:ts-longhorizon}
\end{figure*}

\paragraph{Additive decomposition.}
To examine how the selected discovery represents a rise-to-decline transition, Figure~\ref{fig:ts-decomposition} decomposes its posterior on the predeclared CFC-12 peak-origin split with a 120-month horizon. The Mat\'ern and low-frequency sinusoid-bank components jointly represent the trajectory, while the fitted constant component is negligible. The wide component intervals show that the additive allocation is uncertain outside the observed range. The Mat\'ern component carries most of the curvature, and the sinusoid bank adds a smooth, nearly linear drift. The sinusoid bank therefore acts as a slow trend on this record, not as an annual seasonal component. The training residuals are small relative to the signal and show no visible trend. The split-level CRPS of $0.070$ is an interpretive example, not an aggregate performance claim.

\begin{figure}[t]
\centering
\includegraphics[width=0.42\textwidth]{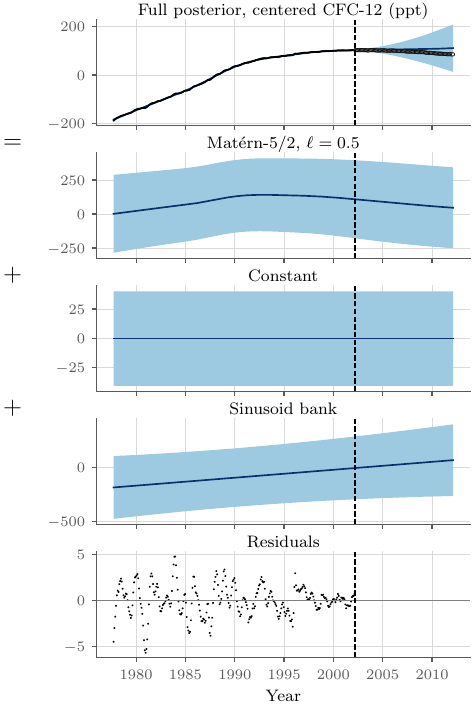}
\caption{Additive posterior decomposition of the selected discovery on CFC-12 (peak-origin split, 120-month horizon). Top row: observed and held-out points with the posterior mean and its two-standard-deviation band. Next rows: the Mat\'ern, constant, and combined sinusoid-bank components, then the training residuals. All rows are deviations from the training mean in ppt, so the component means sum to the full posterior mean. The dashed line is the forecast origin, and the split CRPS is $0.070$.}
\label{fig:ts-decomposition}
\end{figure}

\section{Scientific Structure Discovery}
\label{app:extra-domains}
This appendix details the enzyme-kinetics (\S\ref{sec:chembench}) and glucose-dynamics (\S\ref{sec:glucose}) benchmarks. Both follow the same split-and-freeze principle. Search and validation data are disjoint from the test data, which come from unseen enzyme mechanisms or glucose patients. Each frozen kernel is evaluated on episodes that search never sees. Both benchmarks use the construction contracts of \S\ref{sec:contracts} unchanged. The input encodings, oracles, and evaluation protocols are described below.

\subsection{Enzyme Kinetics (ChemBench)}
\label{app:chembench}

ChemBench, the ActiveSciBench-Chem benchmark of \citet{kabra2026llm}, poses enzyme-kinetics rate-law discovery as a seven-input, one-output regression problem. Substrate concentration $C_A$, inhibitor concentration $C_I$, second-substrate concentration $C_B$, product concentration $C_P$, enzyme loading, temperature, and pH map to an observed reaction rate $r_0$. Training uses
ten canonical single-mechanism domains (Michaelis-Menten saturation, four inhibition
types, Arrhenius temperature dependence, pH activity, ping-pong bisubstrate kinetics,
substrate inhibition, and Hill cooperativity). The five structurally distinct held-out
mechanisms are ordered bi-bi kinetics, reversible Michaelis-Menten, allosteric
activation, fractal kinetics, and metal-ion activation. None is a recombination of the
training mechanisms. The evaluator only calls the oracle's public
\texttt{run(params)} interface and never reads the underlying rate-law code. Search
fitness is negative held-out CRPS plus a worst-domain penalty (mirroring the worst-case
robustness term of Equation~\eqref{eq:selection-score}, Appendix~\ref{app:impl}), computed on the ten training domains only.

The validation-selected ensemble campaign E1 maps the normalized input $x=(a,i,b,p,e,t,h)$ to a nine-dimensional biochemical geometry before applying an RBF base kernel. Let $\operatorname{sig}(z)=(1+e^{-z})^{-1}$ denote the logistic sigmoid, $s(z)=\operatorname{sig}((z-\mu_A)/\sigma_A)$, $T_{\mathrm K}(t)=278+90t$, $h_0=(\mathrm{pH}_{\mathrm{opt}}-4)/6$, $\bar w=w_{\mathrm{pH}}/6$, and
\begin{equation}
q(h)=\operatorname{sig}\!\left(\frac{h-h_0+\bar w}{\bar w/2}\right)
\left[1-\operatorname{sig}\!\left(\frac{h-h_0-\bar w}{\bar w/2}\right)\right].
\end{equation}
Its discovered transform is
\begin{equation}
\begin{aligned}
T_{\mathrm{chem}}(x)=\big(&\xi_A s(a),\xi_A s(b),e,\xi_I\,i\,s(a),\tfrac12\,p\,s(a),\\
&\xi_T e^{-E_a/(R\,T_{\mathrm K}(t))},\xi_{\mathrm{pH}}q(h),
\tfrac12s(a)s(b),\tfrac12\big),
\end{aligned}
\label{eq:chem-feature-map}
\end{equation}
Here, $R=8.314$ J/(mol K), and $E_a$ is expressed in J/mol inside the exponential. We obtained the frozen parameters from 32 Halton trials initialized at the agent's values. For example, tuning moved $E_a$ from $20$ to $34.65$ kJ/mol and $\xi_A$ from $1$ to $1.991$. The frozen values are $(\xi_A,\xi_I,\xi_T,\xi_{\mathrm{pH}})=(1.991,1.125,0.0864,0.0195)$, $(\mu_A,\sigma_A)=(1.101,0.207)$, $E_a=34.65$ kJ/mol, $\mathrm{pH}_{\mathrm{opt}}=6.282$, and $w_{\mathrm{pH}}=0.201$.

The learned distance represents saturation, inhibitor and product coupling, and bisubstrate interaction jointly rather than treating the seven coordinates as exchangeable. The program includes temperature and pH features. However, the frozen kernel shows almost no sensitivity to these inputs in the coordinate-sweep diagnostic (Appendix~\ref{app:kernel-descriptions}). Figure~\ref{fig:chem-features} plots the components.

\begin{figure*}[t]
\centering
\includegraphics[width=\textwidth]{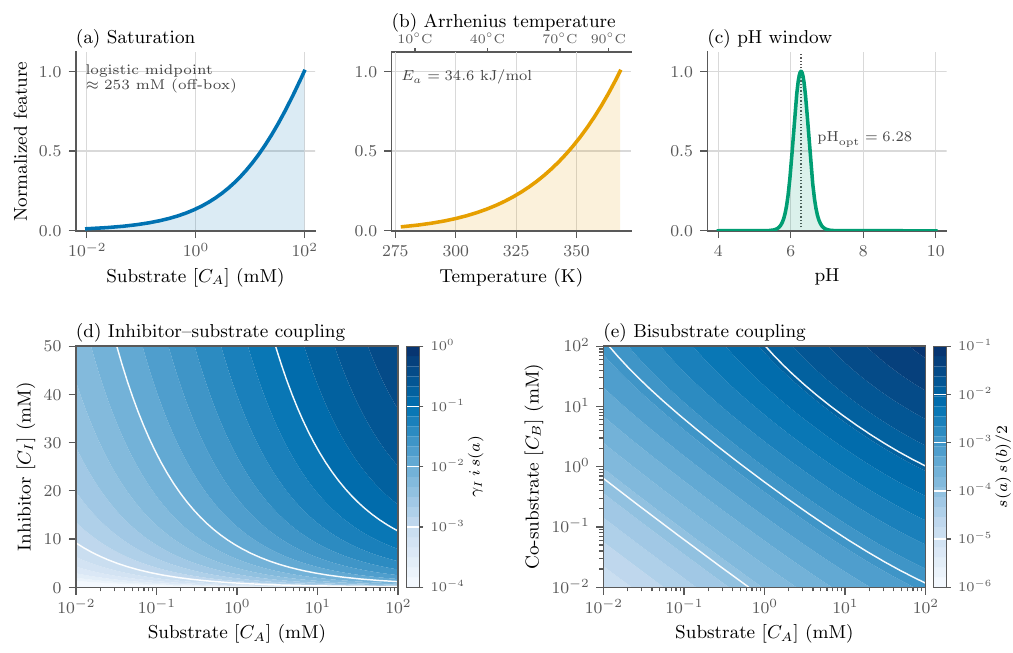}
\caption{\textbf{The discovered ChemBench kernel is assembled from interpretable biochemical features.} Top: substrate saturation, Arrhenius temperature dependence, and the pH window, each normalized to compare shape. At their frozen weights, the temperature and pH features vary by less than $10^{-5}$ and $0.02$ across the input box (Appendix~\ref{app:kernel-descriptions}). Bottom: the inhibitor--substrate coupling $\xi_I\,i\,s(a)$ and the bisubstrate coupling $\tfrac12 s(a)s(b)$, the fourth and eighth features of Equation~\eqref{eq:chem-feature-map}, on a log color scale with white contours at each decade. Both grow with substrate concentration because the saturation midpoint lies above the $100$\,mM bound.}
\label{fig:chem-features}
\end{figure*}

Nine separate stochastic evolutionary campaigns (eight
archive-linked proposal runs per campaign) each produced a distinct
validation-selected program, seven input transforms (one of them residual) and two
feature maps. The strongest programs encode saturation,
inhibition and substrate interactions, temperature dependence, and pH
sensitivity, giving an inspectable representation of the learned input geometry. On the common 75 held-out episodes (Figure~\ref{fig:chem-val-test}), the five
campaigns with the lowest validation CRPS improve mean CRPS over the strongest
evaluated fixed reference by 9.5\%--23.0\%, and each improves the mean on
all five held-out mechanisms. The other four validate markedly worse ($0.73$--$0.79$
against $0.62$--$0.67$) and range from a 1.8\% gain to a 10.6\% loss. Validation and
test CRPS agree closely across campaigns (Spearman $\rho=0.95$, $p<10^{-4}$), and the
mean improvement over all nine is 8.13\% (sample SD 10.90 percentage points, range $-10.6\%$ to $23.0\%$). This dispersion is across complete search campaigns. The 75 test episodes are shared across campaigns, so pooling all 675 scores as independent observations would overstate the evidence. Campaigns reuse scheduler seeds 0--7 and differ through stochastic proposals, so they are repeated campaigns and not nine disjoint deterministic seed schedules. The strongest candidate also has
18.6\% lower CRPS than the best evaluated input-warping fit. The following comparison uses a shared predictive objective to assess this geometry against stronger learned references.

\paragraph{Matched predictive-objective references.}
We fit isotropic Mat\'ern, independently scaled ARD, monotone coordinate warps, an author-defined domain-feature library, and two DKL architectures on the same 20 training episodes. Every method uses five initializations (seeds 101--105) and up to 300 full-batch Adam evaluations at learning rate $0.01$, with the outer amplitude and noise chosen on the benchmark grid. Validation episodes select one initialization per arm before any test evaluation, and structural parameters are then frozen on the common 75 held-out episodes. Coordinate-aware DKL uses a $7\rightarrow16\rightarrow8$ ReLU feature map with an RBF head, and the shared-coordinate control sums coordinate activations before the head. The domain library is a researcher-defined reference informed by the domain, not an automatic discovery.

\begin{figure}[t]
\centering
\includegraphics[width=0.8\columnwidth]{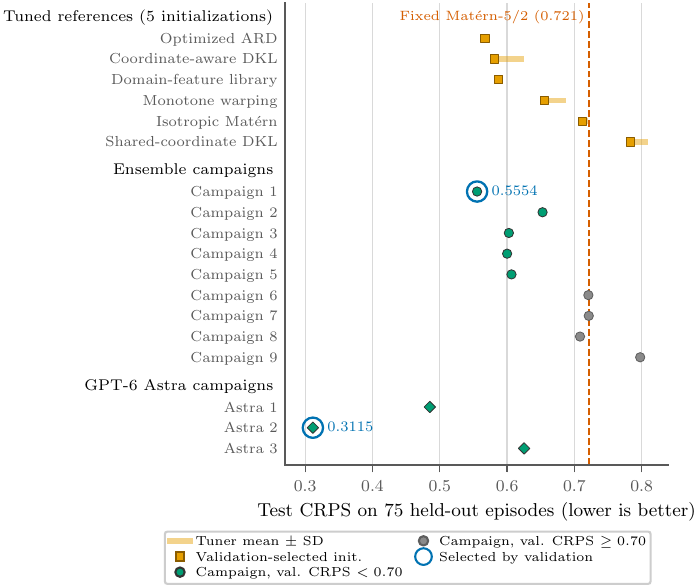}
\caption{\textbf{Matched ChemBench references and discovery campaigns on the same 75 test episodes.} Squares mark each reference's validation-selected initialization, and bars give mean $\pm$ SD over five initializations. Each campaign point is its validation-selected frozen discovery, green if validation CRPS is below $0.70$ and grey otherwise. Blue rings mark the campaign selected overall. The dotted line is fixed Mat\'ern-5/2. Historical campaigns keep their original budgets.}
\label{fig:chem-controls}
\end{figure}

Table~\ref{tab:chem-paired} compares the validation-selected kernels with each reference over the 75 held-out episodes. The ensemble selection E1 has the lowest mean test CRPS of the ensemble campaigns and the references, but it is not significantly better than optimized ARD, coordinate-aware DKL, or the domain library after Holm correction. We therefore describe it as competitive with them. The frontier selection F2 is significantly better than all six references and wins on at least 71 of 75 episodes against each. Its improvement over Mat\'ern-5/2 is 38\%, 80\%, 87\%, 70\%, and 27\% on ordered bi-bi, reversible Michaelis--Menten, allosteric activation, fractal kinetics, and metal activation. Every frontier campaign improves every held-out mechanism. In frontier campaigns F2 and F3, only four of eight runs submitted a program, because the tuner rejects programs with more than twelve parameters (Appendix~\ref{app:proposer-scaling}).

\begin{table*}[t]
\caption{Paired comparison of the validation-selected ChemBench kernels with the matched references over 75 held-out episodes. Entries give the mean CRPS difference (discovery minus reference, negative favors the discovery), its within-mechanism paired-bootstrap 95\% interval where computed, episodes won, and the Holm-adjusted two-sided Wilcoxon $p$-value across the six references.}
\label{tab:chem-paired}
\centering
\small
\begin{tabular}{@{}L{0.28\linewidth}L{0.32\linewidth}L{0.32\linewidth}@{}}
\toprule
Reference (test CRPS) & Ensemble E1 ($0.555$) & Frontier F2 ($0.312$) \\
\midrule
Optimized ARD ($0.567$) & $-0.012$ $[-0.033,0.010]$, $p=0.87$ & $-0.256$ $[-0.326,-0.193]$, 73/75, $p=6\times10^{-13}$ \\
Coordinate-aware DKL ($0.581$) & $-0.025$ $[-0.054,0.001]$, $p=0.32$ & $-0.269$ $[-0.346,-0.202]$, 71/75, $p=2\times10^{-12}$ \\
Domain library ($0.588$) & $-0.032$ $[-0.054,-0.011]$, $p=0.13$ & $-0.276$ $[-0.347,-0.215]$, 75/75, $p=3\times10^{-13}$ \\
Monotone warping ($0.655$) & $-0.100$, $p=3\times10^{-11}$ & $-0.344$, 71/75, $p=6\times10^{-13}$ \\
Isotropic Mat\'ern ($0.712$) & $-0.156$, $p=8\times10^{-13}$ & $-0.400$, 74/75, $p=3\times10^{-13}$ \\
Shared-coordinate DKL ($0.783$) & $-0.228$, $p=1\times10^{-12}$ & $-0.472$, 74/75, $p=3\times10^{-13}$ \\
\bottomrule
\end{tabular}
\end{table*}

\paragraph{The Astra kernel.}
The frontier F2 kernel is a feature map on the seven public inputs, and it never reads the oracle's rate laws. It builds ten substrate-response shapes modeled on the training rate laws: Michaelis--Menten saturation, competitive, uncompetitive, and noncompetitive inhibition by the inhibitor and by the product, ping-pong bisubstrate kinetics, substrate inhibition, and Hill cooperativity. Each shape is evaluated at three substrate scales and paired with its sensitivity to log substrate concentration. The result is multiplied by enzyme, Arrhenius-like temperature, and pH-ionization factors, and a linear backbone is added. Its eleven parameters are tuned by the Halton search. An unseen mechanism is thus predicted as a combination of known mechanism shapes and their local variations, much like a library of canonical mechanisms in model discovery \citep{murphy2026mdagent}.

These comparisons describe the strongest of nine frozen discoveries, which is also the campaign that validation selects. Across campaigns, eight of nine have lower mean CRPS than fixed Mat\'ern, while one has lower mean CRPS than optimized ARD and coordinate-aware DKL. The search can therefore produce an inspectable geometry that competes with learned references, but its average outcome is weaker. Figure~\ref{fig:chem-controls} separates the individual campaigns from the reference fits.

All six reference arms complete every test episode.

Figure~\ref{fig:chem-recovery} compares local predictions on two held-out mechanisms. Figure~\ref{fig:chem-rate-recovery} shows a substrate-concentration slice of the held-out fractal-kinetics mechanism. The discovered kernel follows the near-flat region and the subsequent rise more closely than the fixed Mat\'ern baseline, which predicts a large negative excursion before overshooting at high concentration. The discovered mean also deviates from the true curve at high concentrations. These local curves illustrate predictive behavior, while aggregate held-out performance is measured across episodes.

\begin{figure}[t]
\centering
\includegraphics[width=0.5\textwidth]{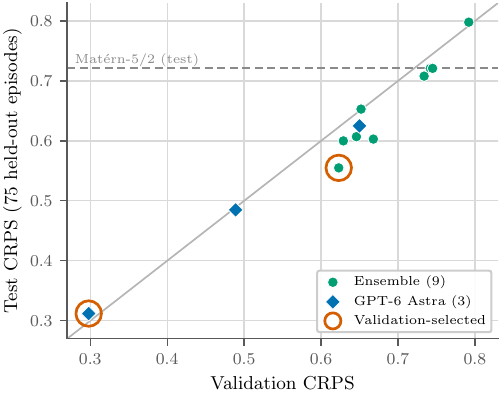}
\caption{\textbf{Validation CRPS predicts held-out CRPS across ChemBench campaigns.} Validation and test CRPS of the validation-selected kernel of each search campaign, nine ensemble and three GPT-6 Astra, on 75 held-out episodes from five mechanisms never seen during search (lower is better, Spearman $\rho=0.95$ over the ensemble campaigns). Rings mark the campaign that validation selects within each proposer, and the dashed line is fixed Mat\'ern-5/2 ($\ell{=}1$, test CRPS $0.721$).}
\label{fig:chem-val-test}
\end{figure}

\begin{figure*}[t]
\centering
\begin{subfigure}[t]{0.48\textwidth}
\centering
\includegraphics[width=\linewidth]{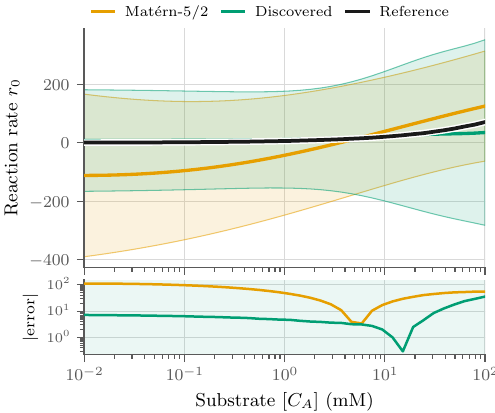}
\caption{Predicted reaction rate versus substrate concentration $[C_A]$ on the held-out fractal-kinetics mechanism, on a finer query grid than the benchmark. The discovered kernel (green) follows the true rate (black) more closely than the fixed baseline (orange), which predicts a large negative excursion. Both curves deviate at high concentrations.}
\label{fig:chem-rate-recovery}
\end{subfigure}
\hfill
\begin{subfigure}[t]{0.48\textwidth}
\centering
\includegraphics[width=\linewidth]{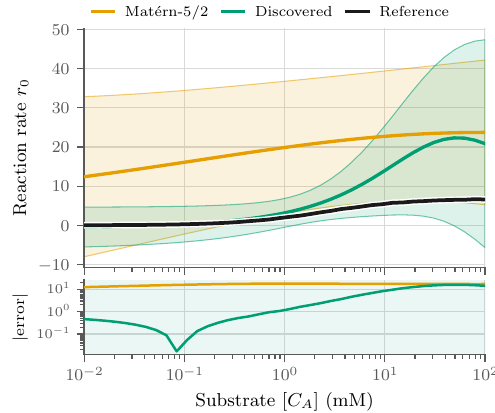}
\caption{Predicted reaction rate versus substrate concentration $[C_A]$ on the held-out ordered-bi-bi-kinetics mechanism. The discovered kernel (green) is closer to the true rate (black) at low concentrations. Both it and the fixed Mat\'ern-5/2 baseline (orange) overestimate the rate as concentration increases.}
\label{fig:chem-bibi-recovery}
\end{subfigure}
\caption{Local predictive behavior on two held-out biochemical mechanisms. In each panel, shading shows $\pm1.96$ predictive standard deviations. The lower strip shows the absolute error of each posterior mean, tinted where the discovered kernel is closer.}
\label{fig:chem-recovery}
\end{figure*}

The ordered bi-bi slice (Figure~\ref{fig:chem-bibi-recovery}) shows a similar local advantage at low concentrations, followed by overestimation as concentration increases. Both models have wide predictive intervals. These examples illustrate differences in posterior means and uncertainty, but do not by themselves establish interval calibration or recovery of the underlying rate law.

On the training-domain competitive-inhibition diagnostic, the discovered
kernel improves aggregate CRPS at the twenty-experiment budget, while its
posterior mean resolves inhibitor-level differences less clearly than the
dense oracle curves. This distinguishes predictive accuracy with sparse
observations from recovery of individual input effects. Mechanism-transfer
performance is evaluated separately on the five held-out mechanisms.

\subsection{Glucose Dynamics (GlucoseBench)}
\label{app:glucose}

GlucoseBench \citep{murphy2026mdagent} adapts the simglucose implementation \citep{xie2018simglucose} of the UVA/Padova type-1 diabetes simulator \citep{kovatchev2009silico}. Each of 30 virtual profiles, comprising 10 children, 10 adolescents, and 10 adults, has a hidden 13-dimensional nonlinear ordinary differential equation (ODE) state. An episode lasts six hours, the patient eats one meal and may receive one insulin bolus, and a continuous glucose monitor (CGM) reports noisy glucose every five minutes. The benchmark asks for forecasts of CGM after a 45-minute prefix of a new intervention for the same patient.

\paragraph{Inputs and predictive rule.}
A GP input is one CGM reading, encoded as five coordinates in $[0,1]$: reading time, meal size, meal start, bolus size, and bolus start, all read from the public delivered-input log. The last four are constant within an episode, so the kernel decides both how readings correlate over time and how whole trajectories share strength across interventions. For each forecast, amplitude and noise are fit by training likelihood on the full training episodes only, and the query prefix only conditions the posterior, as the benchmark protocol requires.

\paragraph{Protocol.}
\method has no experiment design, so every kernel sees the same six training episodes per patient: the two passive benchmark episodes and four distinct menu interventions drawn with a fixed per-patient seed. Search scores leave-one-training-episode-out forecasts on the ten child profiles, so every search target is a public training episode. Validation uses the same folds on the ten adolescent profiles. The validation-selected kernel is then frozen and tested on the ten adult profiles with the benchmark's own evaluator and sealed test interventions, twelve per patient. We report CGM normalized MSE. For query $q$ of patient $p$, let $\mathrm{MSE}_{pq}$ be the mean squared error of the forecast over the forecast window. Then
\begin{equation}
  \mathrm{nMSE}_{pq}=\frac{\mathrm{MSE}_{pq}}{\max\!\big(\mathrm{Var}(y^{\mathrm{passive}}_p),\,1\big)},
  \label{eq:glucose-nmse}
\end{equation}
where $\mathrm{Var}(y^{\mathrm{passive}}_p)$ is the population variance of the CGM readings in the first two passive training episodes of patient $p$. The benchmark takes the arithmetic mean of $\mathrm{nMSE}_{pq}$ over the queries of each patient, then the geometric mean across patients.

This protocol learns a kernel on some profiles and applies it to others. Forecasting remains within patient, since each test patient's GP is fit only on that patient's episodes, but kernel selection pools information across profiles, which the per-patient benchmark does not do. The model discovery agent of \citet{murphy2026mdagent} chooses its training interventions actively and fits per-patient ODE models, whereas our interventions are fixed, so its reported numbers are not directly comparable to ours.

\paragraph{References.}
The baseline kernel structures tuned across patients use the child profiles and the same 64-trial Halton budget as the discoveries, and all pass Tier~2. The per-patient marginal-likelihood baselines described below instead fit on each target patient. The relative-time grid applies RBF or Mat\'ern kernels to time, time since the meal, time since the bolus, meal size, and bolus size. Relative-time ARD-Mat\'ern-5/2 and ARD-RBF use the same five relative inputs with one lengthscale each. Raw-input ARD-Mat\'ern-5/2 uses the five raw coordinates. The physiological library is written by the authors. It combines gamma-shaped meal-absorption and insulin-action curves, their interaction, and a Mat\'ern residual. Two per-patient references fit ARD-Mat\'ern-5/2 by marginal likelihood on each patient's own training data, on raw and on relative inputs, which is standard practice and pools nothing across patients. We also report the fixed-kernel family winners selected on training data, the best fixed kernel in hindsight, and persistence (the last observed value).

\paragraph{Results.}
Figure~\ref{fig:glucose-heatmap} reports every method at four training budgets and four prefix lengths. Table~\ref{tab:glucose-paired} compares each discovery with the strongest reference, patient by patient. Both Astra campaigns have the same validation error ($0.203$), and validation CRPS picks campaign 1 ($0.204$ against $0.205$). Among the ensemble campaigns, validation ranks them in their test order. Across all 24 evaluated kernels, adolescent validation error predicts adult test error with Spearman $\rho=0.94$ ($p=10^{-11}$), close to the $0.95$ we find for ChemBench.

\begin{figure*}[t]
\centering
\includegraphics[width=0.95\textwidth]{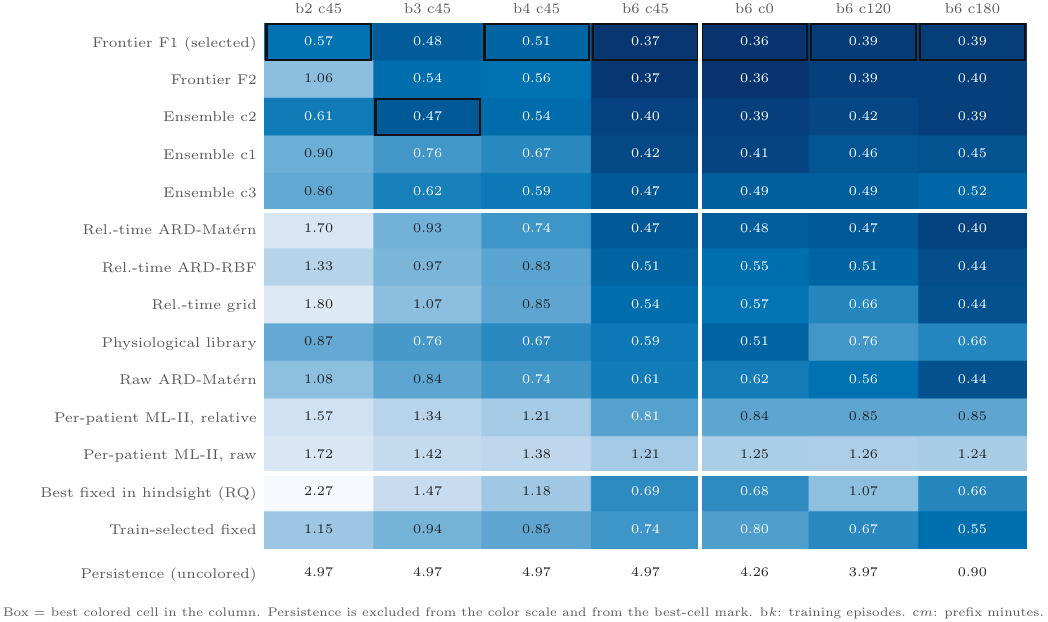}
\caption{\textbf{GlucoseBench adult test error for every method and setting.} CGM normalized MSE (arithmetic mean over queries within each patient, then geometric mean over 10 patients), colored on a log scale where darker is lower. In the column codes, \texttt{b}$k$ means $k$ training episodes and \texttt{c}$m$ means an $m$-minute observed prefix. Columns show \texttt{b2 c45}, \texttt{b3 c45}, \texttt{b4 c45}, and \texttt{b6 c45}, followed by prefix lengths of 0, 120, and 180 minutes at six training episodes. Rows are the discovered kernels, the tuned references, and the fixed-kernel and persistence baselines. Rel.-time uses time since the meal and bolus as inputs, and ML-II is per-patient marginal-likelihood fitting. A box marks the best colored cell in each column. Persistence is printed under the heatmap and is left off the color scale.}
\label{fig:glucose-heatmap}
\end{figure*}

\begin{table*}[t]
\caption{Paired per-patient comparison with the strongest reference, relative-time ARD-Mat\'ern. In each column, \texttt{b}$k$ denotes $k$ training episodes and \texttt{c}$m$ denotes an $m$-minute observed prefix. Each cell gives patients won out of 10, the raw two-sided exact Wilcoxon $p$-value, and the percentage change in geometric-mean nMSE. Negative percentages are lower error. A win count of 5 is an even split. Holm adjustment over these 35 cells leaves every adjusted $p$-value above $0.05$ (minimum $0.068$).}
\label{tab:glucose-paired}
\centering
\small
\setlength{\tabcolsep}{3.9pt}
\newcommand{\gcell}[2]{\shortstack[c]{#1\\#2}}
\begin{tabular}{@{}lccccccc@{}}
\toprule
Discovery & b2 c45 & b3 c45 & b4 c45 & b6 c0 & b6 c45 & b6 c120 & b6 c180 \\
\midrule
Frontier F1 & \gcell{10, .002}{$-67\%$} & \gcell{9, .004}{$-48\%$} & \gcell{9, .027}{$-31\%$} & \gcell{10, .002}{$-26\%$} & \gcell{10, .002}{$-23\%$} & \gcell{10, .002}{$-18\%$} & \gcell{6, 1.0}{$-3\%$} \\
Frontier F2 & \gcell{8, .014}{$-38\%$} & \gcell{10, .002}{$-42\%$} & \gcell{7, .027}{$-25\%$} & \gcell{10, .002}{$-26\%$} & \gcell{8, .020}{$-22\%$} & \gcell{9, .014}{$-17\%$} & \gcell{6, .92}{$-2\%$} \\
Ensemble c2 & \gcell{9, .004}{$-64\%$} & \gcell{10, .002}{$-50\%$} & \gcell{8, .037}{$-27\%$} & \gcell{8, .020}{$-20\%$} & \gcell{7, .049}{$-15\%$} & \gcell{8, .014}{$-12\%$} & \gcell{5, .62}{$-3\%$} \\
Ensemble c1 & \gcell{10, .002}{$-47\%$} & \gcell{6, .28}{$-19\%$} & \gcell{5, .77}{$-10\%$} & \gcell{8, .027}{$-16\%$} & \gcell{7, .23}{$-10\%$} & \gcell{7, .56}{$-3\%$} & \gcell{2, .020}{$+12\%$} \\
Ensemble c3 & \gcell{10, .002}{$-50\%$} & \gcell{9, .010}{$-33\%$} & \gcell{7, .19}{$-20\%$} & \gcell{5, 1.0}{$+2\%$} & \gcell{5, .77}{$+0\%$} & \gcell{5, .77}{$+4\%$} & \gcell{2, .037}{$+28\%$} \\
\bottomrule
\end{tabular}
\end{table*}

The relative-time representation explains much of the gain over the original fixed kernels, as the tuned relative-time ARD reference shows. The Astra discoveries still improve on it by about 22\% at the main setting (b6 c45) and at almost every other setting. With two training episodes, the five discoveries reduce geometric-mean nMSE by 38--67\% relative to this reference. With six episodes at a 45-minute prefix, the reductions are 0--23\%. The $p$-values in Table~\ref{tab:glucose-paired} are raw. After Holm correction over all 35 cells, none remain below $0.05$. With a 180-minute prefix, no discovery has a raw $p$-value below $0.05$ in the favorable direction. Ensemble c1 and c3 have higher error there (raw $p=0.020$ and $0.037$), and those two contrasts also fall above $0.05$ after the same Holm correction. Against the original fixed kernels, all three ensemble discoveries win on 8 to 10 of 10 patients ($p\le0.02$), with 31--46\% lower error. Against the per-patient ML-II references, every discovery wins on all 10 patients.

\paragraph{Robustness to the training interventions.}
We redraw the four random training interventions with two other seeds and also use the demonstration policy from the benchmark's README. Ensemble c2 remains the best method under all three alternatives (adult b6, normalized mean squared error (nMSE) at c45 and c0): seed 1 $0.370/0.367$ against $0.403/0.414$ for relative-time ARD-Mat\'ern, seed 2 $0.370/0.362$ against $0.429/0.435$, and the README policy $0.357/0.353$ against $0.377/0.385$.

\paragraph{Ablation.}
We remove feature groups from the frozen ensemble c2 kernel without retuning. Without the meal features, adult error rises from $0.402$ to $1.164$, and without the insulin features to $0.522$. Removing the interaction terms instead slightly improves adult error to $0.380$ (validation $0.252$ against $0.238$). The meal and insulin features carry the gain, and the interaction terms do not help.

\paragraph{What the best ensemble kernel encodes.}
Ensemble c2 is an explicit 16-feature program on time since the meal and time since the bolus. It uses the feature-map contract, so the interpreter forms $k(x,x')=\varphi(x)^\top\varphi(x')$. It does not pass a learned neural representation to a separate RBF or Mat\'ern head. After tuning, the meal gate is a logistic function of time since the meal. It is half open at 46 minutes, and it goes from 10\% to 90\% open over about 76 minutes (logistic scale 17 minutes). The insulin gate is zero before 23 minutes after the bolus. After that, it decays with a time constant of about 127 minutes. Coordinate sweeps (as in Appendix~\ref{app:kernel-descriptions}) give kernel correlations of $0.44$ for time, $0.83$ for meal size, $0.50$ for meal start, $0.94$ for bolus size, and $0.95$ for bolus start.

\begin{figure}[t]
\centering
\includegraphics[width=0.6\columnwidth]{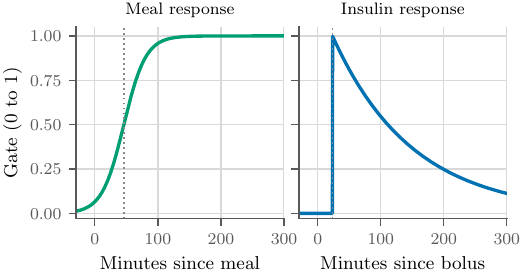}
\caption{\textbf{The selected glucose kernel encodes a delayed meal effect and a fading insulin effect.} The two gates of the ensemble campaign-2 feature map after tuning, as functions of time since the meal and since the bolus. Dotted lines mark the 46-minute midpoint of the meal gate and the 23-minute start of the insulin gate. After that start, insulin decays with a time constant of 127 minutes.}
\label{fig:glucose-learned}
\end{figure}

\paragraph{Calibration.}
At b6 c45, both Astra discoveries lower mean CRPS by about 15\% relative to relative-time ARD-Mat\'ern, win on all 10 adults ($p=0.002$), and are indistinguishable from each other ($p=0.92$). Their nominal 95\% intervals cover about 82\% of adult readings, compared with 76\% for that strongest tuned reference and 77--79\% for the ensemble discoveries. The Astra kernels therefore improve probabilistic accuracy and reduce undercoverage, but they remain below the nominal level. The physiological library is the exception at 93\%.

\begin{figure*}[t]
\centering
\includegraphics[width=0.85\textwidth]{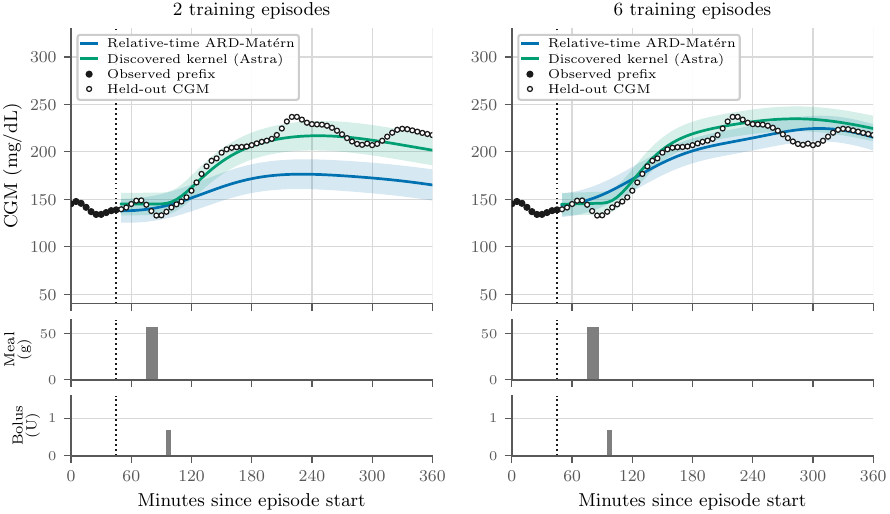}
\caption{\textbf{With two training episodes the reference undershoots the meal response, with six the gap closes.} One adult and one held-out query, forecast after the 45-minute prefix (dotted line) from 2 versus 6 training episodes. Curves are posterior means with 95\% intervals for the validation-selected Astra kernel (campaign 1) and relative-time ARD-Mat\'ern. Lower panels show the delivered meal and bolus. The patient is a median case for the ensemble kernel.}
\label{fig:glucose-before-after}
\end{figure*}

\section{Transfer and Interpretation of Discovered Kernels}
\label{app:cross-domain}

This appendix reports how validation-selected kernels transfer across domains and how we read what each discovered kernel encodes.

\subsection{Cross-Domain Transfer of Selected Kernels}
\label{app:transfer-full}

We test whether performance in one domain identifies kernels that remain useful in another. We transfer frozen BBO kernels to greenhouse-gas forecasting, forecasting kernels to held-out BBO tasks, and the validation-selected kernels from both domains to ChemBench. We also test the selected chemistry kernel in the other two domains. Only the outer amplitude and noise variance are refit, so every comparison measures reuse of a fixed kernel structure rather than another round of search.

\begin{figure}[t]
\centering
\includegraphics[width=0.6\columnwidth]{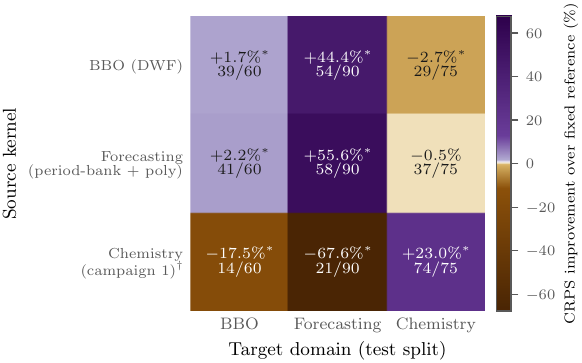}
\caption{\textbf{Cross-domain reuse of validation-selected kernels.} Rows are source kernels and columns are target domains. Each cell gives the target test CRPS change (\%) relative to that domain's fixed reference and the number of evaluation units won. Negative CRPS changes mean lower error and better performance. Purple favors the transferred kernel and brown favors the reference. Stars mark unadjusted paired tests with $p<0.05$. $^\dagger$The chemistry-source cells use the fixed coordinate conversion described below, so read them as interface-compatibility checks.}
\label{fig:transfer-matrix}
\end{figure}

Figure~\ref{fig:transfer-matrix} separates reuse from target-specific advantage. \dwf{} approaches the best fixed kernel chosen separately for each forecasting record, but it does not match the kernel discovered for forecasting. The forecasting selection yields a smaller improvement on BBO. Neither selection improves ChemBench, whose seven named inputs encode different scientific roles. The chemistry kernel can be made executable on BBO and forecasting, but it trails the fixed reference in both. Assigning arbitrary coordinates to roles such as substrate concentration does not preserve the biochemical geometry that helps on unseen enzyme mechanisms.

\paragraph{Fixed conversion for the chemistry kernel.}
The selected chemistry program directly indexes seven inputs with biochemical meanings, so it cannot run on the original BBO and forecasting inputs. For the exploratory cells in Figure~\ref{fig:transfer-matrix}, we copy up to the first seven target coordinates, fill missing coordinates with $0.5$, and ignore an eighth coordinate. The rule is fixed before evaluation and does not use outcomes. It produces no execution failures, but raises CRPS by 17.5\% on BBO and 67.6\% on forecasting relative to each domain's fixed reference. It wins 14 of 60 BBO tasks ($p=5.8\times10^{-7}$) and 21 of 90 forecast windows ($p=1.1\times10^{-5}$) in two-sided paired Wilcoxon tests. We treat these as compatibility checks rather than direct semantic transfer because the copied coordinates do not become biochemical variables.

\subsection{Natural-Language Descriptions of Discovered Kernels}
\label{app:kernel-descriptions}

The Automatic Statistician turns a kernel found by compositional search into a natural-language report \citep{duvenaud2013structure,lloyd2014automatic}. Each base kernel and operator in its grammar has a sentence template, so the report is correct by construction for any kernel the grammar can express. The connection between executable descriptions and kernel geometry also motivates \citet{hamzi2026algorithmic}. \method has no grammar to attach templates to, but every discovery run registers a formulation before it submits code (\S\ref{sec:functional-novelty}). The registration states the mathematical form, the closest known kernel, and the behavior the agent expects. It is written before tuning, however, so it describes what the agent intended rather than what the tuned kernel does. We therefore condense each registration into a short report and check every claim against the frozen program with the trusted interpreter.

Table~\ref{tab:kernel-descriptions} gives the reports for the headline discoveries. The second column describes the frozen kernel. The third states what tuning settled, which the registration could not know in advance.

\paragraph{Residual period-bank kernel.}
The name describes the selected forecasting program. Its frozen covariance is
\begin{equation}
 k_{\mathrm{PB}}(t,t')=k^{5/2}_{0.5}(t,t')+a+
 \sum_{m=1}^{4}\sum_{j=0}^{8}v_j\cos\!\big(2\pi f_{mj}(t-t')\big),
 \label{eq:period-bank-frozen}
\end{equation}
where $t$ is time normalized to $[0,1]$, $a=0.015625$,
$f_{mj}=m(f_c+(j-4)b/9)$, and
$v_j=\exp(-(j-4)^2/(2\sigma^2))$.
The frozen parameters are $f_c=0.02844$, $b\approx0.0162222222$, and $\sigma\approx2.630177515$.
The interpreter adds the feature inner product to the trusted Mat\'ern-$5/2$ kernel with lengthscale $0.5$.
Each sine--cosine feature pair gives one cosine term in Equation~\eqref{eq:period-bank-frozen}.
The feature labeled polynomial in the program is constant.
An outer covariance amplitude and observation noise are fitted separately under Equation~\eqref{eq:amp-noise}.

\begin{table*}[t]
\caption{Plain-language reports of the headline discoveries. Each report is condensed from the agent's registered formulation and checked against the frozen program with the trusted interpreter. The right column states what tuning settled.}
\label{tab:kernel-descriptions}
\centering
\small
\begin{tabular}{@{}L{0.14\textwidth}L{0.41\textwidth}L{0.39\textwidth}@{}}
\toprule
Kernel & Report & What tuning settled \\
\midrule
\dwf{} (BBO) & The objective varies smoothly, as under a Mat\'ern-5/2 prior, along a slightly rescaled copy of each coordinate. Values at points mirrored about the middle of a coordinate, $u$ and $1.025-u$, are encouraged but not forced to agree. The prior has a kink at the mirror line $u=0.513$. & Tuned geometry: the fold frequency settled at $\nu=0.975$, so the fold is a single tent peaking at $0.513$ that reflects each coordinate once, and the warp stays within $0.027$ of the identity (Appendix~\ref{app:dwf-record}). The registration had anticipated multi-scale periodicity and boundary sensitivity, which the tuned kernel does not use. The kink is Proposition~\ref{prop:dwf-cusp}. \\
\addlinespace
Residual period-bank + poly (forecasting) & The record follows a smooth trend, as under a Mat\'ern-5/2 prior with lengthscale $0.5$ of the normalized span, plus a constant offset, plus a second smooth component built from 36 sinusoids. Every sinusoid has a period between 7 and 47 times the length of the record, so within one record this component acts as a record-wide smooth trend rather than a cycle. & Tuned behavior: the sinusoid frequencies settled at $0.021$--$0.143$ cycles per normalized span, so the bank models slow curvature across the whole record. The registration described an annual cycle, but it took the annual period (about $0.03$ of the span) as a frequency. An annual cycle would need 25--47 cycles per span. The ``poly'' feature is a constant.  \\
\addlinespace
ChemBench, ensemble E1 (enzyme kinetics) & The rate depends smoothly on a monotone transform of log substrate concentration that is steepest at high concentration, the same transform of the second substrate, their product, and log enzyme loading. Inhibitor and product concentrations matter only in proportion to substrate saturation. & Tuned relevance: as in ARD, tuning decides which inputs matter. Enzyme loading and the two substrates matter most, while inhibitor and product matter little (Figure~\ref{fig:chem-slices}), and the saturation midpoint lies above the input range (Figure~\ref{fig:chem-features}). The program includes Arrhenius and pH-window features, but tuning set their weights so that the correlation is $1.000000$ at every base point. These probes show little sensitivity to temperature and pH, without establishing exact invariance. \\
\bottomrule
\end{tabular}
\end{table*}

\begin{table*}[t]
\ContinuedFloat
\caption{Plain-language reports of the headline discoveries, continued.}
\vspace{4pt}
\centering
\small
\begin{tabular}{@{}L{0.14\textwidth}L{0.41\textwidth}L{0.39\textwidth}@{}}
\toprule
Kernel & Report & What tuning settled \\
\midrule
ChemBench, frontier F2 (enzyme kinetics) & An explicit 248-feature map treats the rate as a linear combination of ten substrate-response shapes modeled on the training mechanisms, each at three substrate scales, their log-substrate sensitivities, and a small linear backbone. The response features are multiplied by enzyme, temperature, and pH factors, so posterior coefficients can select both mechanism shape and environmental modulation. & Tuned relevance: enzyme loading and substrate dominate, product, inhibitor, and temperature matter less, and pH is nearly ignored (Figure~\ref{fig:chem-slices}). The frozen kernel therefore retains strong enzyme and substrate effects, some product and inhibitor response, and almost no pH dependence. \\
\addlinespace
GlucoseBench, frontier F1 & An ARD-Mat\'ern-5/2 residual is augmented by 36 intervention features. At three response scales, gamma envelopes encode causal meal and bolus transients, long-period wave packets encode response age, and cumulative channels retain dose exposure after the transients fade. Signed and complementary channels allow meal and bolus responses to oppose or covary. & Tuned behavior: signed channels dominate because their mixing parameter is $\rho=0.20$, while cumulative memory remains active with gain $0.56$. The base kernel's time lengthscale is $3.21$ normalized spans and its four intervention lengthscales are $10^6$. On public training inputs, endpoint-sweep correlations are $0.90$ for time, $0.94$ for meal size, $0.96$ for meal start, and $0.98$ for bolus size and start. \\
\bottomrule
\end{tabular}
\end{table*}

\subsection{Coordinate-Sweep Diagnostics}
\label{app:coordinate-sweeps}
We inspect the frozen covariance by drawing 100 base points uniformly from the input box. For each coordinate, we compare the kernel correlation $k(x,x')/\sqrt{k(x,x)k(x',x')}$ after setting that coordinate to $0$ and to $v\in[0,1]$, with the others fixed. Figure~\ref{fig:chem-slices} plots the mean correlation along each sweep. This normalization is standard for covariance kernels \citep[Equation~4.35]{rasmussen2006gaussian}. For positive diagonal values, the correlation is the inner product of unit-normalized feature vectors, whose squared distance is twice one minus the correlation. Values near $1$ therefore indicate similar normalized features at the sampled input pairs. They do not rule out changes in marginal variance or prove invariance over the whole domain. We use the sweep as a diagnostic of kernel geometry, not as a partial-dependence plot of the fitted predictor.

\begin{figure}[t]
\centering
\includegraphics[width=\textwidth]{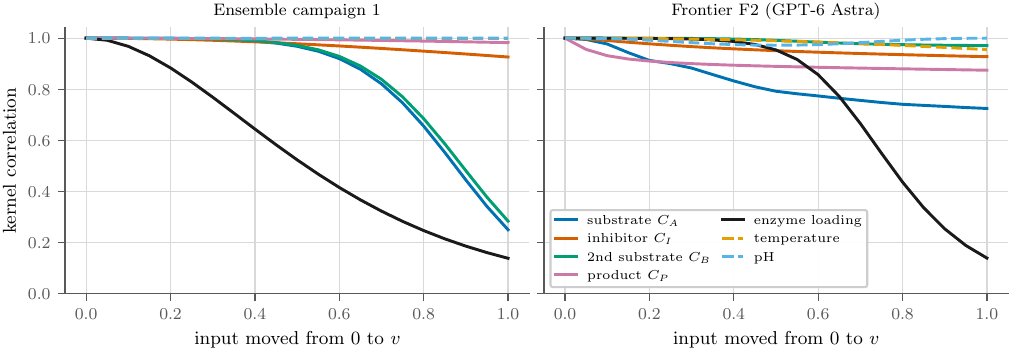}
\caption{\textbf{The two ChemBench kernels respond to different inputs.} Mean kernel correlation as one normalized input moves from $0$ to $v$ with the others fixed, over 100 base points. A curve near $1$ indicates similar normalized kernel features along the sampled sweeps. Left: ensemble campaign E1 depends mainly on enzyme loading and the two substrates, responds little to inhibitor and product, and shows almost no temperature or pH sensitivity in these sweeps. Right: frontier campaign F2, which uses GPT-6 Astra, depends on enzyme loading and substrate, responds mildly to the product, inhibitor, and temperature, and shows almost no pH sensitivity in these sweeps.}
\label{fig:chem-slices}
\end{figure}

\section{Scope and Future Extensions}
\label{app:extensions}
\method separates what an agent may propose from how a proposal becomes a valid kernel. That separation does not depend on the benchmark, the kernel method, or the problem scale. This section states the scope of the current experiments and describes the extensions that the framework supports without changes to its search loop.

\paragraph{Broader problem settings.}
The BBO benchmark has at most eight observed input dimensions and $n=8+2d$ observations per predictive episode. This scale represents the small-data regime in which kernel choice matters most. Natural next benchmarks include higher-dimensional optimization ($d\ge20$), multi-objective and constrained optimization, and mixed categorical--continuous spaces. These settings do not require a new validity argument.

Some discoveries already support arbitrary dimensions. For example, \dwf{} transforms each coordinate separately, so its frozen program runs unchanged as $d$ increases. However, the live-search audit identifies a limitation in the current interface. Six programs sized their parameter arrays for the benchmark's eight coordinates and raised an error at $d=16$ (Appendix~\ref{app:screening-stress}). The contracts make these failures explicit. Exposing $d$ to every component, as the spectral interface already does, would prevent this implementation error.

The contracts can also support larger datasets. Feature-map and spectral contracts produce explicit finite features, which reduce exact inference from $O(n^3)$ to $O(nm^2)$ for $m$ features. Inducing-point approximations \citep{williams2001nystrom} could provide similar savings for pullback and closure contracts.

We could also test whether proposers use variable semantics or act only as numerical search procedures. The intervention would replace informative feature names with arbitrary identifiers while keeping the values, splits, and budgets fixed. A mixed-type contract could combine learned transformations of continuous variables with PSD similarity matrices for categorical variables. Comparisons with ARD GPs and tree ensembles would measure predictive accuracy under matched training. They would also test whether any gain disappears after feature anonymization. Applying the same anonymization to enzyme kinetics and glucose would separate structures recovered from data from structures supplied by the proposer's prior knowledge.

\paragraph{New construction contracts.}
Each contract pairs a component interface with an assembly rule and a PSD proof, so new contracts extend the program space without touching the search loop. Five extensions follow the same pattern.
\begin{itemize}[leftmargin=*,itemsep=1pt,topsep=2pt]
\item \emph{Multi-output kernels.} An intrinsic coregionalization contract \citep{alvarez2012kernels} would let the agent supply a factor $L$ and a scalar kernel, with the interpreter assembling $W=LL^\top$ and $k((x,i),(y,j))=W_{ij}\,k(x,y)$. The result is PSD by the Schur product theorem.
\item \emph{Structured inputs.} The feature-map contract admits embeddings of sequences and graphs \citep{gartner2003survey}. Benchmarks on sequences and molecules would test such inputs directly.
\item \emph{Continuous spectral measures.} The spectral contract certifies finite nonnegative spectra, which covers random-feature approximations of continuous measures \citep{rahimi2007random}. A trusted library of closed-form spectral densities, such as Gaussian or Cauchy mixtures \citep{wilson2013spectralmixture}, would certify continuous measures exactly. The interpreter would map raw parameters to nonnegative weights, as it already does for spectral weights.
\item \emph{State-space kernels from linear ODEs.} Mechanistic model discovery often proposes compartmental ODEs \citep{murphy2026mdagent}. A linear ODE driven by white noise defines a GP kernel. For $dz=Fz\,dt+L\,dW$ with stable $F$ and output $y=Hz$, the stationary covariance is $k(\tau)=HP_\infty e^{F^\top\tau}H^\top$ for $\tau\ge0$. Here, $P_\infty$ solves the Lyapunov equation $FP_\infty+P_\infty F^\top+LL^\top=0$ \citep{sarkka2013spatiotemporal}. In our framework, the agent would write the matrices $(F_\theta,L_\theta,H_\theta)$ as functions of $\theta$. The agent would choose the latent dimension and the coupling pattern. The interpreter would check that $F_\theta$ is stable, solve the Lyapunov equation, and assemble $k$. The resulting kernel is valid because it is the covariance of a well-defined Gaussian process. The latent exponentially generated parameterization of \citet{loper2021general} is one suitable choice. Known inputs, such as meals and insulin doses, would enter through the mean as in latent force models \citep{alvarez2009latent}.
\item \emph{Recurrent feature maps.} The feature-map contract already admits a program that runs a fixed recurrence or ODE solver over the event history of one input, such as the meals and boluses of one glucose episode, and returns the state as its feature vector. The kernel is $\varphi_\theta(x)^\top\varphi_\theta(y)$, so validity follows from the feature-map case of Proposition~\ref{prop:validity}, provided that the state depends only on that input and $\theta$. The frontier glucose kernel F1 already retains dose history through closed-form cumulative-exposure channels (Table~\ref{tab:kernel-descriptions}), and a recurrence would generalize them. Nonlinear latent dynamics and structural causal world models, as in model discovery agents \citep{murphy2026mdagent}, would go further. They would change the search object, the validity argument, and the evaluation target, so we leave them as a separate direction.
\end{itemize}

\paragraph{Other kernel methods and modern applications.}
We evaluate discovered kernels as GP covariances, but any method that needs only a PSD kernel can use them, including kernel ridge regression, support vector machines, and kernel two-sample tests. Two applications are especially promising. In uncertainty quantification for language models, Kernel Language Entropy builds a PSD kernel over the semantic similarity of sampled generations \citep{nikitin2024kernel}. Certified construction would let an agent search for semantic kernels while keeping the entropy well defined. For scientific surrogates, the enzyme-kinetics and glucose studies of Appendix~\ref{app:extra-domains} show that discovered geometries can encode domain-specific structure. Molecular kernels \citep{griffiths2023gauche} and kernel methods for differential equations \citep{chen2021solving} are natural next domains, where contracts could also build in boundary conditions or conservation laws.

\paragraph{Selection within open-ended archives.}
Our open-ended search returns an archive rather than a single model, so the selection rule affects the reported result. The experiments select one frozen program by validation CRPS. They do not use a test-time Pareto frontier or average archive members. The observed results support this rule. The kernel selected on validation data ranks first among 58 frozen candidates on both unseen CFC records. By contrast, the kernel that would have been selected after viewing SF$_6$ test performance does not transfer to the two CFC records (Appendix~\ref{app:ts-extra}). Validation and test CRPS also have Spearman $\rho=0.95$ across the chemistry campaigns. The archives may contain complementary candidates, but predictive mixtures are outside the reported evaluation.

\paragraph{Protocol integrity and reward hacking.}
The current design limits the ways a proposer can exploit its evaluator. The backend computes every Gram matrix and fit statistic, and search-time scores use meta-training episodes only. Every reported program is frozen before testing. The same replay safeguard applies to every selected program, and \dwf{} is the fully audited example that reproduces every validation and test score exactly (Appendix~\ref{app:revision-saved}). Selection on validation episodes also guards against overfitting to the search objective. A systematic study is a valuable next step. Adversarial proposers could be instructed to target the evaluator, and rotating probe sets or held-out evaluators could measure how quickly search-time fitness and held-out performance diverge. A matched campaign with the novelty penalty set to zero would isolate its effect and complement the archive ablation of Appendix~\ref{app:search-ablation}.

\section{LLM Prompts}
\label{app:prompts}
\label{app:agent-prompts}

This appendix documents the agent interface and reproduces the prompts. The agent receives a system instruction, a domain task description, the structured tool definitions, and an evolutionary assignment. The assignment appends the iteration, search mode, target niche, parent, and inspirations to the task description. Table~\ref{tab:formulation-fields} specifies the required formulation fields.

\subsection{Agent Interface}
\label{app:prompt-interface}

\paragraph{Conversation format and examples.}
The initial conversation contains one user message, which comprises the task and the assignment. There is no fixed few-shot bank of solved kernels and no scripted assistant demonstrations. The task descriptions contain interface examples and the input conventions of each domain, which are reproduced below. For later runs, the parent payload contains its candidate identifier (ID), name, contract, selection score, source, and parameters. Each inspiration contains its ID, name, niche, score, and mathematical form. These archive-derived examples vary with the campaign history. Within a run, assistant tool calls and backend responses remain in the conversation. A root run has no parent payload, and its inspirations are empty only while the archive is empty.

\paragraph{Tool calls.}
Tools take JavaScript Object Notation (JSON) objects with the names, required fields, and allowed values declared in their schemas. Formulation fields are strings. Staging supplies the candidate name, contract, Python source, and formulation ID, with optional initial parameters, a parameter space, and parents. A tunable parameter declares a float, integer, or categorical type, with its range or choices and an optional linear or logarithmic scale. Candidate code must implement the exact entry point and output shape in the system prompt. The trusted tools return verification and evaluation results, so the agent supplies code and tool arguments rather than a claimed numeric result.

\paragraph{Closure-tree serialization.}
For the closure contract, \path{closure_tree(parameters)} encodes sums, products, and nonnegative scalar multiplication as nested dictionaries. A base node has \path{op="base"}, a library \path{kind}, and its numeric parameters. A sum or product has \path{op="sum"} or \path{op="product"} and a \path{children} list of at least two nodes. A scaling node has \path{op="scale"}, a finite nonnegative \path{weight}, and one \path{child}. For example, the following is an illustrative serialization of a library-kernel sum, not a few-shot example supplied to the proposer:
\begin{CodeBox}{Closure-tree serialization}
\begin{lstlisting}[style=kjson]
{"op": "sum", "children": [
  {"op": "base", "kind": "matern52", "lengthscale": 0.5},
  {"op": "scale", "weight": 0.2,
   "child": {"op": "base", "kind": "linear"}}]}
\end{lstlisting}
\end{CodeBox}
The documented library leaves include linear, RBF, Mat\'ern-3/2, Mat\'ern-5/2, rational-quadratic, periodic, and spectral-mixture kernels. Base variance and scaling weights are nonnegative. Lengthscales and the rational-quadratic shape or periodic period are positive. The discovery task's restriction on ordinary closure-library rediscoveries is retained verbatim in the task descriptions below.

\subsection{Shared Prompts}
\label{app:prompt-shared}

The software name in the opening line of the system instruction is normalized to [Kernaut]. The interface and workflow agree with the controller implementation.

\begin{AIbox}{Kernaut agent: system instruction}
% (lstinputlisting) prompts/system_prompt.txt
\begin{lstlisting}[style=kprompt]
You are [Kernaut]'s synthesis controller. Design executable Gaussian-process
kernel programs and improve them using evidence from deterministic tools. You reason and write
candidate source, but you never claim to have executed code, computed a Gram matrix, fitted a GP,
or established a verification tier yourself.

Every proposal must use exactly one contract and entry point:
- feature_map: `feature_point(x, parameters) -> array[m]`, where `x` is one immutable point;
  the trusted interpreter calls it independently for every input and constructs `Phi @ Phi.T`
- residual_feature_map: the same `feature_point` interface; the interpreter adds a trusted base
  kernel plus `residual_weight**2 * Phi @ Phi.T`
- additive_feature_map: `coordinate_feature(value, parameters) -> array[m]`; the interpreter
  averages its feature Gram matrix over input coordinates
- spectral: `spectral_components(input_dimension, parameters)` returns
  `(frequencies[m, input_dimension], raw_weights[m])`
- input_transform: `transform_point(x, parameters) -> array[q]`, where `x` is one immutable
  point; the trusted interpreter applies `parameters.base_kernel` by pullback
- residual_input_transform: the same `transform_point` interface; the interpreter adds a trusted
  kernel on the original inputs to a nonnegative weighted pullback kernel
- closure: `closure_tree(parameters) -> trusted closure-tree dictionary`
- unverified: `kernel_matrix(x, parameters) -> array[n, n]` for empirical diagnostics only

Contract names are not function names. In particular, an `input_transform` candidate MUST define
`transform_point(x, parameters)` (not `input_transform`). It receives exactly one 1-D point and
must return exactly one nonempty 1-D transformed point. Do not reshape it into a batch.

Certified candidate code must be deterministic and free of module-level state; it may import only
numpy or math and may not use random-number generators. Submit a candidate, verify it, and evaluate
it only after Tier 2 acceptance. Once a candidate passes Tier 2, evaluate it before proposing any
revision. Inspect structured failures and revise deliberately. A revision must change kernel
behavior, not comments, docstrings, formatting, or parameters alone. In mathematical-novelty mode,
test at least two substantially different formulations rather than spending the campaign on one
lineage. Independent formulations should be submitted as roots without parents. Supply parent IDs
only for genuine revisions whose behavior descends from those candidates; changing contract or
topic does not by itself establish lineage. Use archive queries to avoid duplicates and compare the
quality/cost frontier. Unverified candidates can be evaluated after Tier 1 but are never accepted.
In mathematical-novelty mode, prefer the backend workflow: propose a formulation, stage source and
a bounded parameter space, verify the draft, optimize its parameters, submit the selected immutable
trial, then evaluate it. A parameter is learnable only when declared in that tuning space. Analyze
known equivalences, normalized-domain cycles or scales, irrelevant-coordinate and cross-coordinate
mechanisms, a falsification test, feature growth with dimension, and expected conditioning. End
with a concise report naming the best candidate IDs and remaining limitations.
\end{lstlisting}
\end{AIbox}

The assignment below is a template. Angle brackets mark values that are filled in at run time. A root run receives the root instruction and an empty parent, and its inspirations are empty only while the archive is empty. A revision run receives the revision instruction, and its parent and inspiration payloads are serialized JSON objects with the fields listed above.

\begin{AIbox}{Kernaut agent: evolutionary assignment (template)}
% (lstinputlisting) prompts/assignment_template.txt
\begin{lstlisting}[style=kprompt]
# Evolutionary search assignment

Iteration: <iteration index>
Search mode: <root | revision>
Target niche: <niche>
<root mode:> Create an independent root and leave parents empty.
<revision mode:> Make a genuine behavioral revision of parent <candidate ID> and cite exactly that
  parent unless you truly recombine another candidate.
Inspirations are context only and are not parents unless their mechanism is actually incorporated.
  Follow the formulation -> stage -> verify -> optimize parameters -> submit staged candidate ->
  evaluate workflow. Complete one candidate, then stop.

Parent:
<null in root mode | parent payload: candidate ID, name, contract, selection score, source, parameters>

Inspirations:
<[] | up to k_insp payloads: ID, name, niche, score, mathematical form>
\end{lstlisting}
\end{AIbox}

\subsection{Domain Task Descriptions}
\label{app:prompt-tasks}

Each box shows the task portion of the opening user message. A task description gives the domain, the input conventions, and the required interface. It does not describe the evaluation data or the scoring. The backend selection score, including its regret and novelty terms, is specified in Equation~\eqref{eq:selection-score}, and the interface convention for spectral frequencies is clarified in Appendix~\ref{app:impl}.

\begin{AIbox}{BBO task description}
% (lstinputlisting) prompts/bbo_task.txt
\begin{lstlisting}[style=kprompt]
# Procedural meta-BO kernel discovery

Design a single dimension-agnostic Gaussian-process kernel that generalizes across a distribution
of normalized, low-to-moderate-dimensional Bayesian-optimization tasks. Training episodes include
smooth anisotropic, periodic, multimodal, and rugged functions under hidden coordinate
permutations, reflections, monotone input warpings, invertible cross-coordinate couplings, and a
small number of irrelevant coordinates.

Each task standardizes its outputs and fits the same small grid of outer covariance amplitudes
and diagonal noise values.

Prefer Tier-2 candidates that:

- accept any input dimension rather than hard-coding a two-dimensional feature map;
- express useful inductive biases beyond renaming standard RBF or Matérn kernels;
- remain numerically stable for input dimensions 2, 5, 6, and 8;
- balance smooth global structure with periodic or localized variation; and
- use parent lineage when making a genuine revision, and leave parents empty for an independent
  root formulation.

This is a mathematical-novelty campaign. Closure trees are disallowed for discovered candidates:
ordinary sums, products, and rescalings of RBF/Matérn kernels belong in the reference bank, not the
search space. Start each idea by calling `propose_formulation`. State the exact mathematical form,
why it is PSD, its closest known kernel and concrete difference, and the BO behavior it should
improve. Only then implement it and submit the returned formulation ID.

Prefer the `input_transform` contract when inventing geometry. Its required entry point is exactly
`transform_point(x, parameters)`: it receives one immutable 1-D point and must return one nonempty
1-D transformed point. Do not define a function named `input_transform`, and do not reshape the
point into a batch. The trusted runtime stacks those vectors and evaluates
`parameters["base_kernel"]` on the transformed inputs. For example, the base specification is
`{"kind": "matern52", "lengthscale": 0.5}`. The transform must be dimension-agnostic and independent
of other observations. A spectral candidate defines
`spectral_components(input_dimension, parameters)` so its returned frequencies can always have
shape `[m, input_dimension]`. Pointwise, residual, additive, and genuinely new spectral
constructions are also
permitted. Independent formulations are root candidates and must not cite unrelated discoveries as
parents. A genuine revision must cite its actual parent, and a parameter-only edit of a parent is
rejected.

Candidate-specific parameters are fixed unless they are declared in a staged candidate's tuning
space and optimized by the backend. Do not call fixed constants learnable. On normalized `[0, 1]`
inputs express frequencies in cycles, for example `sin(2*pi*f*x)`. Prefer the formulation, staging,
verification, parameter-optimization, immutable-submission, and full-evaluation tool sequence.

Explore at least two substantially different constructions. Inspect the archive before proposing a
revision, and do not treat a syntactic rewrite or parameter-only change as a novel kernel.
\end{lstlisting}
\end{AIbox}

\begin{AIbox}{Forecasting task description}
% (lstinputlisting) prompts/forecasting_task.txt
\begin{lstlisting}[style=kprompt]
# Time-series forecasting kernel discovery

Design a single dimension-agnostic Gaussian-process kernel that acts as a transferable prior for
forecasting long monthly time series. Each episode standardizes a training window of one record
and forecasts a fixed horizon of future months. Inputs are 1-D values in [0, 1] (the month index
scaled to the record), and observation noise is small. The kernel should also remain numerically
stable when evaluated at input dimensions 1 through 8.

This is a mathematical-novelty campaign. Discovered closure trees are rejected because ordinary
sums and products of standard kernels belong in the baseline bank. Start every candidate with
`propose_formulation`, giving its exact mathematical form, PSD argument, closest known kernel,
behavioral distinction, and a falsification test. Then stage, verify, tune, and submit it through the
normal immutable-candidate workflow.

Prefer `input_transform`, `feature_map`, `residual_input_transform`, or genuinely new `spectral`
constructions. The trusted interpreter constructs the Gram matrix; candidate code should only
return the certified pointwise transform, features, or spectral components required by its
contract. The kernel must accept any input dimension. Independent ideas have no
parents, while a real revision must cite the candidate it changes.

Develop at least two structurally different proposals before refining the strongest one.
\end{lstlisting}
\end{AIbox}

\begin{AIbox}{Chemistry task description}
% (lstinputlisting) prompts/chemistry_task.txt
\begin{lstlisting}[style=kprompt]
# Enzyme-kinetics kernel discovery

Design one certified Gaussian-process kernel that acts as a transferable prior for learning
enzyme-catalyzed reaction rates from scattered experiments. Each dataset records the initial
reaction rate `r_0` measured under a set of experimental conditions. The underlying rate law
differs between datasets and is not given.

Every input is a seven-dimensional vector in `[0, 1]`, with coordinates in this order:

1. log substrate concentration `C_A`;
2. inhibitor concentration `C_I` (linear, can be zero);
3. log second-substrate concentration `C_B`;
4. product concentration `C_P` (linear, can be zero);
5. log enzyme loading `Enz`;
6. temperature `T` (linear, 278-368 K); and
7. pH (linear, 4-10).

This is a mathematical-novelty campaign. Discovered closure trees are rejected because ordinary
sums and products of standard kernels belong in the baseline bank. Start every candidate with
`propose_formulation`, giving its exact mathematical form, PSD argument, closest known kernel,
behavioral distinction, and a falsification test. Then stage, verify, tune, and submit it through the
normal immutable-candidate workflow.

Prefer `input_transform`, `feature_map`, `residual_input_transform`, or genuinely new `spectral`
constructions. The trusted interpreter constructs the Gram matrix; candidate code should only
return the certified pointwise transform, features, or spectral components required by its
contract. The input dimension is fixed at seven for this benchmark. Independent ideas have no
parents, while a real revision must cite the candidate it changes.

Develop at least two structurally different proposals before refining the strongest one. Do not
claim that a covariance is itself a recovered rate law.
\end{lstlisting}
\end{AIbox}

\begin{AIbox}{Glucose task description}
% (lstinputlisting) prompts/glucose_task.txt
\begin{lstlisting}[style=kprompt]
# CGM forecasting kernel discovery

Design one certified Gaussian-process kernel that acts as a transferable prior for forecasting a
simulated type-1 diabetes patient's continuous glucose monitor (CGM) response to a meal and an
insulin bolus. Each patient runs six-hour episodes. In each episode the patient eats one meal and
may receive one insulin bolus, and CGM is read every five minutes. Every episode of a patient
starts from the same physiological state, so episodes differ only through their intervention and
sensor noise.

Every GP input is one CGM reading, a five-dimensional vector in `[0, 1]` with coordinates in
this order:

1. reading time, minutes divided by 360;
2. meal size, grams divided by 60;
3. meal start time, minutes divided by 360;
4. bolus size, insulin units divided by 1.5 (zero means no bolus); and
5. bolus start time, minutes divided by 360 (equal to the meal start when there is no bolus).

Coordinates 2-5 are constant within an episode, so the kernel decides both how readings
correlate over time and how whole trajectories share strength across interventions.

This is a mathematical-novelty campaign. Discovered closure trees are rejected because ordinary
sums and products of standard kernels belong in the baseline bank. Start every candidate with
`propose_formulation`, giving its exact mathematical form, PSD argument, closest known kernel,
behavioral distinction, and a falsification test. Then stage, verify, tune, and submit it through the
normal immutable-candidate workflow.

Prefer `input_transform`, `feature_map`, `residual_input_transform`, or genuinely new `spectral`
constructions. The trusted interpreter constructs the Gram matrix; candidate code should only
return the certified pointwise transform, features, or spectral components required by its
contract. The input dimension is fixed at five for this benchmark. Independent ideas have no
parents, while a real revision must cite the candidate it changes.

Develop at least two structurally different proposals before refining the strongest one. Do not
claim that a covariance is itself a recovered physiological model.
\end{lstlisting}
\end{AIbox}

\end{document}